\documentclass{article} 
\usepackage{iclr2026_conference,times}
\usepackage[compact]{titlesec}
\usepackage{sidecap}
\titlespacing*{\section}{0pt}{*0.2}{*0.2}
\titlespacing*{\subsection}{0pt}{*0.2}{*0.2}
\titlespacing*{\subsubsection}{0pt}{*0.1}{*0.1}
\usepackage{amsmath,amsfonts,bm}

\def\eqref#1{equation~\ref{#1}}

\def\1{\bm{1}}

\def\vb{{\bm{b}}}

\DeclareMathAlphabet{\mathsfit}{\encodingdefault}{\sfdefault}{m}{sl}
\SetMathAlphabet{\mathsfit}{bold}{\encodingdefault}{\sfdefault}{bx}{n}

\newcommand{\softmax}{\mathrm{softmax}}

\newcommand{\KL}{D_{\mathrm{KL}}}
\newcommand{\Var}{\mathrm{Var}}

\DeclareMathOperator*{\argmax}{arg\,max}
\DeclareMathOperator*{\argmin}{arg\,min}

\usepackage{amsmath,amsfonts,amssymb}
\usepackage{smile} 
\usepackage{algorithm}
\usepackage{algpseudocode}
\algrenewcommand{\algorithmicrequire}{\textbf{Input:}}
\algrenewcommand{\algorithmicensure}{\textbf{Output:}}
\usepackage{amsmath,amssymb,mathtools}
\usepackage{todonotes}

\usepackage{amsthm}
\usepackage{bm}
\usepackage{bbm}
\usepackage{tabularx}
\usepackage{multirow} 
\usepackage{booktabs}
\usepackage{microtype}
\usepackage{graphicx}
\usepackage{enumitem}
\usepackage{booktabs,siunitx}
\usepackage{graphicx}

\usepackage{caption}
\usepackage{tikz}
\usetikzlibrary{positioning, fit, backgrounds, calc}

\usepackage{graphicx}           
\usepackage{booktabs}           
\usepackage{xcolor} 
\usepackage[table]{xcolor}      
\usepackage{xspace}             
\renewcommand{\paragraph}[1]{\noindent \textbf{#1}}
\usepackage{subcaption}
\usepackage{microtype}
\usepackage[most]{tcolorbox}
\usepackage{wrapfig}
\usepackage{titletoc}    
\usepackage[dvipsnames]{xcolor}
\usepackage{url}
\usepackage[colorlinks,linkcolor=RawSienna,urlcolor=Green,citecolor=Green,anchorcolor=ForestGreen]{hyperref}
\usepackage{cleveref}
\usepackage{thmtools}
\usepackage{thm-restate}

\definecolor{alizarin}{rgb}{0.82, 0.1, 0.26}
\definecolor{citeColor}{RGB}{0,20,115}
\definecolor{lightblue}{RGB}{220,235,255} 
\definecolor{lightgray}{RGB}{240,240,245} 
\definecolor{darkgreen}{RGB}{0,100,0}     

\newtcolorbox[list inside=prompt,auto counter,number within=section]{prompt}[1][]{
    colbacktitle=black!60,
    coltitle=white,
    fontupper=\footnotesize,
    boxsep=5pt,
    enhanced,
    left=0pt,
    right=0pt,
    top=0pt,
    bottom=0pt,
    boxrule=1pt,
    breakable,
    #1
}
\usepackage{amsfonts}       
\usepackage{nicefrac}       
\usepackage{microtype}      
\usepackage{xcolor}         

\colorlet{SysBlue}{blue!60!black}        
\colorlet{QOrange}{Orange!85!black}      
\colorlet{BadRed}{BrickRed}              
\colorlet{GoodGreen}{ForestGreen}        

\makeatletter
\@ifundefined{mathbbm}{\newcommand{\mathbbm}[1]{\mathbf{#1}}}{}%
\makeatother

\title{Provable and Practical In-Context Policy Optimization for Self-Improvement}

\author{\textbf{Tianrun Yu}$^{1 *}$,\ 
\textbf{Yuxiao Yang}$^{2 *}$,\ 
\textbf{Zhaoyang Wang}$^2$,\ 
\textbf{Kaixiang Zhao}$^1$,\ 
\textbf{Porter Jenkins}$^1$,\\
\textbf{Xuchao Zhang}$^3$,\ 
\textbf{Chetan Bansal}$^3$,\ 
\textbf{Huaxiu Yao}$^2$,\ 
\textbf{Weitong Zhang}$^{2}$\\
$^1$Brigham Young University,\ 
$^2$University of North Carolina at Chapel Hill,\ 
$^3$Microsoft\\
{$^*$Equal contribution.~~~~~~\tt \small tianruny@byu.edu, \{yxyang, weitongz\}@unc.edu}\\
{\small }
}

\iclrfinalcopy
\begin{document}

\maketitle

\begin{abstract}
We study test-time scaling, where a model improves its answer through multi-round self-reflection at inference. We introduce In-Context Policy Optimization (ICPO), in which an agent optimizes its response in context using self-assessed or externally observed rewards without modifying its parameters. 
To explain this ICPO process, we theoretically show that with sufficient pretraining under a novel Fisher-weighted logit-matching objective, a single-layer linear self-attention model can provably imitate policy-optimization algorithm for linear bandits. Building on this theory, we propose Minimum-Entropy ICPO (ME-ICPO), a practical algorithm that iteratively uses its response and self-assessed reward to refine its response in-context at inference time. 
By selecting the responses and their rewards with minimum entropy, ME-ICPO ensures the robustness of the self-assessed rewards via majority voting. 
Across standard mathematical reasoning tasks, ME-ICPO attains competitive, top-tier performance while keeping inference costs affordable compared with other inference-time algorithms. Overall, ICPO provides a principled understanding of self-reflection in LLMs and yields practical benefits for test-time scaling for mathematical reasoning.
\end{abstract}

\section{Introduction}
\label{sec:intro}
Recent years have witnessed a growing capacity for large language models (LLMs) with rising abilities in mathematical reasoning~\citep{qwen25math,wei2022chain}, problem solving~\citep{rein2024gpqa,zhou2024self} and tool use~\citep{yao2023react}. Among these new abilities, the emergence of test-time scaling has been playing an important role, where the LLMs progressively improve their response through multi-round self-reflection without parameter updates. This test-time scaling has demonstrated a strong ability to enable LLMs to perform post-training search~\citep{yao2023tree,besta2024graph}, self-reflection and self-rewarding~\citep{madaan2023self,shinn2023reflexion,lightman2023let} and Chain-of-Thoughts (CoT, \citealt{wei2022chain}). The key part of this process hinges on the model's ability to digest the in-context information to improve its response. Such in-context information can be the previous response with users' finetuning instructions, or the CoT process with self-assessed rewards. 
However, despite repeated empirical validation, the mechanism underlying such in-context self-improvement remains under-explored in the literature. Existing works~\citep{park2024llm}
usually assume this ability for conducting the posterior sampling or policy optimization \emph{intrinsically} within LLMs without answering why this ability emerges during the pretraining process.

On the other hand, recent works have attempted to understand the in-context learning for supervised learning (e.g., linear regression~\citealt{zhang2024trained,garg2022can}) and reinforcement learning (e.g., TD learning~\citealt{wang2024transformers}) that shows that some carefully designed transformers can learn these algorithms with sufficient pretraining. Yet, most of these works consider empowering the LLMs to predict the output based on the input, while it is vacant in literature understanding how transformers learn to optimize its behavior $\xb$ by optimizing its policy towards maximizing the response $y$. In addition, there is a huge gap between the current theoretical understanding of the in-context learning and the empirical implementation of the in-context test-time scaling. 
Witnessing these lacks of theoretical understanding of the in-context policy optimization
and the missing of how to leverage these in-context information iteratively in the test-time scaling for reasoning tasks, we would like to ask: \looseness=-1

\begin{center}
\emph{Can we understand the self-reflection process of LLM from the in-context learning that inspires a test-time scaling for reasoning?}
\end{center}

\begin{figure}[t]
\centering
\includegraphics[width=1\textwidth]{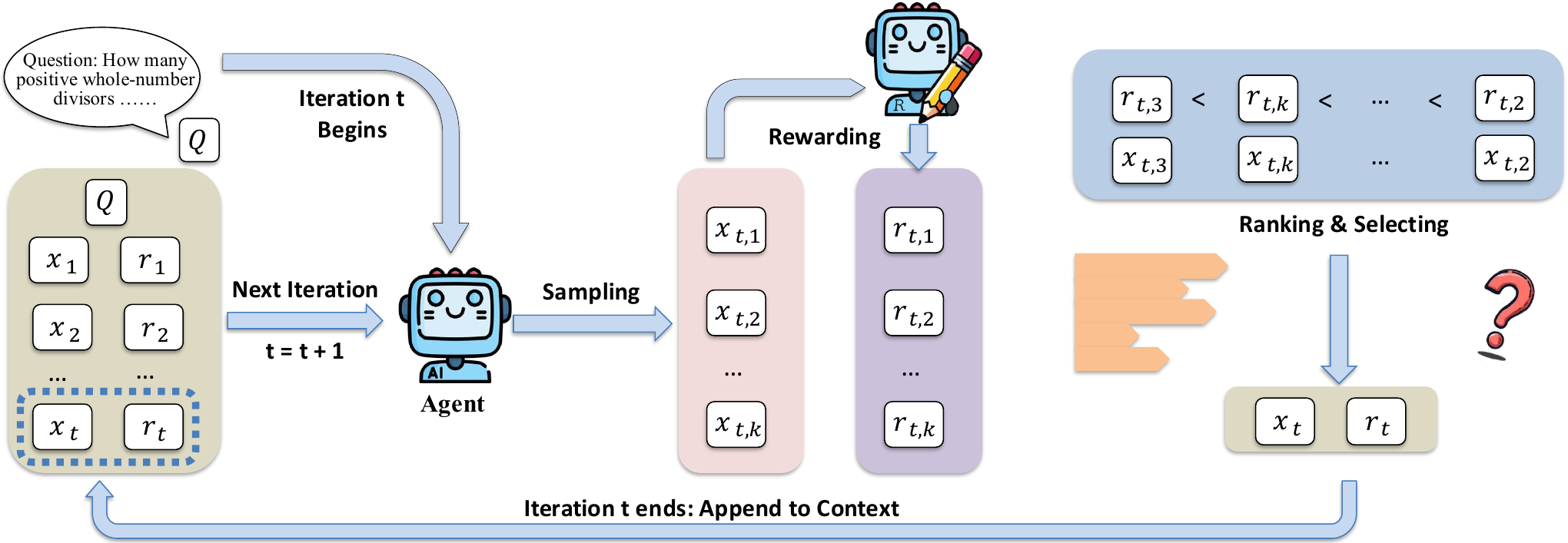}
\caption{The In-Context Policy Optimization (ICPO) framework. At each round $t$, the agent leverages its history of past attempts with bandit feedback $\{(\xb_1, r_1), \dots, (\xb_t, r_t)\}$ to improve its response $\xb_{t+1}$ in order to maximize the received reward $r_t$. }
\label{fig:icpo-pipeline-1}
\end{figure}

In this paper, we answer this question affirmatively by providing the In-Context Policy Optimization (ICPO) framework which considers how LLMs leverage the in-context actions and response to improve their response $\xb$ instead of predicting some certain outcomes. As illustrated in Figure~\ref{fig:icpo-pipeline-1},
the ICPO process considers the transformer (LLMs) generating its response $\xb_t$ and receives the reward given by user or self-assessment $y_t$ and then improve its response by generating $\xb_{t+1}$. {Theoretically, we show that with sufficient pretraining, a one-layer transformer is sufficient to execute a policy-optimization framework that gradually improves its response $\xb$ using the observed rewards $r$.} When applying the ICPO framework into the practical mathematical reasoning, we empirically show that the ICPO framework is robust enough to  take the self-accessed reward into its policy optimization process and to gradually improve its response. Together with solid theoretical results on the ICPO process and a carefully designed practical algorithm ME-ICPO, we provide a provable and practical in-context learning framework for test-time scaling for mathematical reasoning. Together, our contributions are:
\begin{itemize}[leftmargin=*]
\item We formulate the multi-round self-reflection mechanism as in-context policy optimization (ICPO) framework where the agent generates and improves its response with the received feedback. The ICPO framework extends the current in-context learning framework from supervised learning to policy optimization with bandit feedback. ICPO builds a theoretical foundation to understand the self-reflection and self-improvement for LLM reasoning. 

\item We prove that, under an explicit design of the Linear Self-Attention (LSA) transformer, {when the LSA is sufficiently pre-trained on trajectories generated by a special policy-optimization framework, it provably mimics the underlying policy optimization even under previously unseen reward functions.} To the best of our knowledge, this is the first directly derived mechanistic account of in-context policy optimization that provides detailed structural characterization.

\item Empirically, we propose ME-ICPO, a practical algorithm grounded in our theory that yields substantial improvements over base models on mathematical reasoning tasks. ME-ICPO demonstrates that the ICPO framework can leverage self-assessed feedback, using entropy-regularized response selection to ensure robust policy updates.
 
\end{itemize}
Together, our work shows the first in-context optimization mechanism to help understand how LLMs can improve their response with self-reflection, with strong empirical performance in various mathematical reasoning tasks. 

\paragraph{Notation.}
In this paper, we use plain letters such as $x$ to denote scalars, lowercase bold letters such as $\xb$ to denote vectors, and uppercase bold letters such as $\Ab$ to denote matrices. Functions are denoted by bold symbols such as $\bbf$. Sets and classes are denoted by the calligraphic font such as $\cF$. For a vector $\xb$, $\|\xb\|_2$ denotes its $\ell_2$-norm. For a matrix $\Ab$, $\|\Ab\|_{\op}$ denotes its operator (spectral) norm, i.e., $\|\Ab\|_{\op} := \sup_{\|\xb\|_2=1}\|\Ab\xb\|_2 = \sigma_{\max}(\Ab)$. $\Fb(\pb):=\Diag(\pb)-\pb\pb^\top$. For a vector $\xb\in\RR^{K}$, $\Diag(\xb)\in\RR^{K\times K}$ denotes the diagonal matrix with $[\Diag(\xb)]_{ii}=x_i$ and off-diagonals zero. For a positive integer $N$, we use $[N]$ to denote $\{1,2,\ldots,N\}$. 
\section{Related Work}
\label{sec:rel}
We introduce the works on test-time scaling, self-reflection and in-context learning in this section. \looseness=-1

\paragraph{Test-Time Scaling.} Test-time scaling (or inference-time scaling) refers to the phenomenon where allocating more resources during test-time can improve the LLM's ability in reasoning and have been widely adopted in practice~\citep{jaech2024openai, guo2025deepseek}. The earliest test-time scaling can be dated back to the few-shot Chain-of-Thought~\citep{wei2022chain} where the provided few-shot demonstrations can improve the LLM reasoning ability. Other test-time scaling works focus on the post-training search algorithms, including the Monte-Carlo Tree Search~\citep{zhang2024accessing}, Best-of-N~\citep{wang2022self, huang2025best},
Tree of Thoughts~\citep{yao2023tree}. Following up with these works, TTRL~\citep{zuo2025ttrl} provides a gradient-based update based on the self-assessment during the test-time and improves the LLM's response by updating its parameters. 

\paragraph{Self-reflection and self-assessment.} At the core of the test-time scaling lies the self-reflection and self-assessment where the LLM evaluates its own response via the self-rewarding~\citep{madaan2023self,shinn2023reflexion}. In particular, LLM-as-a-Judge and Majority-Judgment (MJ) provide inexpensive but noisy preference signals, and self-consistency can be converted into preferences or rankings \citep{prasad2024self}. However, self-evaluation suffers from prompt/position sensitivity and stylistic bias, calling for calibration (symmetric prompting, position shuffling, executability/format checks) \citep{wang2025can}. In parallel, process supervision (PRMs~\citep{wang2023math,chen2024autoprm}, step-wise verifiers~\citep{lightman2023let}) shifts supervision from outcomes to intermediate steps, reliably filtering errors across rounds \citep{lightman2023let}. Recent analysis of the \emph{generation--verification (GV) gap} shows iterative improvement succeeds when verification is substantially easier than generation, motivating robust filtering and feedback \citep{song2024mind}. Beyond these empirical works, recent theoretical works focus on the posterior sampling of LLM~\citep{bai2023transformers,von2023transformers} by directly assuming the LLM's ability for estimating the posterior distribution. 

\paragraph{In-Context Learning and In-Context Reinforcement Learning.} Besides the empirical advances, a line of theory clarifies regression as a core sandbox for ICL. For ridge linear regression, trained linear self-attention can implement preconditioned gradient descent in context, with model depth matching the number of implicit update steps and geometric convergence under standard assumptions~\citep{von2023transformers}. From a training-dynamics perspective, prior work shows that, after training, a single-head linear attention layer effectively performs one step of gradient descent on the contextual least-squares objective \citep{zhang2024trained}. In sparse settings, multi-head constructions can recover sparse signals and carry out sparse linear regression in context~\citep{chen2024transformers}; recent work further identifies a layered mechanism in which first-layer heads preprocess the context and later layers carry out simple iterative optimization, together yielding excess risk guarantees that improve over naive gradient descent and ridge baselines~\citep{chen2024transformers}.
Besides these works in understanding the regressions, more recent work has pushed forward the understanding of in-context learning to a meta-reinforcement learning process. In particular, \citep{lin2023transformers} proved that a multi-layer transformer structure can imitate bandit/RL-style updates by pretraining on trajectories, and \citep{wang2024transformers} shows that the linear regression for the in-context learning can be extended to the TD learning used in RL.
Despite these, rare recent literature has covered the policy optimization which directly optimized the output $\xb$ given the historical information.

\section{Preliminaries} 
We consider a multi-arm bandit abstraction which is a standard theoretical framework for sequential decision making. We consider a $K$-armed bandit and at each round $t \in [T]$, the agent selects an action $A_t \in [K]$ and plays the corresponding action written as the one-hot vector $\xb_t = \eb_{A_t} \in \RR^{K}$. The agent then receives a scalar reward $r_t$ generated from a linear model with an unknown task vector $\wb \in \RR^K$ by $r_t = \langle \wb, \xb_t\rangle + \epsilon_t$, where $\epsilon_t$ is a zero-mean $\sigma_{\xi}$-sub-Gaussian random variable. The agent overall goal is to optimize its policy $\xb_t$ by maximizing the expected return $\langle \wb, \xb_t\rangle$.

\paragraph{Policy Optimization Framework.} We consider the pretrained dataset is generated from the policy optimization process in meta reinforcement learning. In particular, we start with the mirror descent that is similar to the Follow-the-Regularized Leader (FTRL, \citealt{shalev2007online, mcmahan2011follow}) framework given by 
\begin{align}\textstyle{
    \pb_{t+1} = \argmax_{\pb \in \Delta^K} \sum_{s=1}^t \left \langle   \tfrac{r_s}{p_{s, A_s} }\xb_s, \pb\right \rangle - R(\pb), \notag 
}\end{align}
in the ICPO framework, we consider a practical solution in optimizing the log-likelihood of the policy defined by $\sbb \propto \log \pb$. In the following of this paper, we consider the policy optimization written by
\begin{align}\textstyle{
    \sbb_{t+1} = \argmax_{\sbb \in \RR^K} \sum_{s=1}^t \langle  r_s \xb_s, \sbb_t \rangle - \lambda \sum_{s=1}^t \langle  \xb_s, \sbb_t \rangle - \tfrac{1}{2\eta_t} \sbb^\top \Hb \sbb, \label{eq:s}
}\end{align}
where we implement the entropy regularizer $R(\pb) \approx \sbb^\top \Hb \sbb$ and replace the unbiased estimator ${r_s}/{p_{s, A_s}}$ used in FTRL with a Lagrange multiplier $\lambda \sum_{s=1}^t \langle  \xb_s, \sbb_t\rangle$ to penalize the frequently visited arms. The closed form solution for~\eqref{eq:s} yields a linear structure on $\sbb$ written by 
\begin{align}\textstyle{
\sbb_{t+1} = \eta_t (\Ub \gb_t + \Vb \nb_t), \: \text{where} \; \Ub = \Hb^{-1}, \Vb = -\lambda \Hb^{-1}, \gb_t = \sum_{s=1}^t r_s \xb_s, \nb = \sum_{s=1}^t \xb_s \label{eq:real-update}
}\end{align}
where we set $\eta_t = c / t$ and the optimized policy is then given by a softmax policy mixed with a $\gamma$-greedy random exploration
\begin{align}
\pb_{t+1}=\softmax(\sbvec_{t+1}), \;
\pb^{\ftrl}_{t+1}
=(1-\gamma)\pb_{t+1}
+\gamma\tfrac{\one}{K},
\; \gamma\in[0,1). \label{eq:pol} 
\end{align}

\paragraph{Supervised Pretraining Data Generation.} We generate a pretraining dataset by running the expert policy optimization algorithm. We sample $B$ independent trajectories. For each trajectory $\tau \in [B]$, a task vector $\wb_\tau\sim \normal(\zero,\tau_w^2 \Ib_K)$ is sampled from the prior. The teacher is then executed for $N$ steps to generate a complete history of interactions $\cH_{\tau,N} = \{(\xb_{\tau,1}, r_{\tau,1}), \dots, (\xb_{\tau,N}, r_{\tau,N})\}$ and the corresponding sequence of expert logit vectors $\{\sbvec^{\ftrl}_{\tau,t}\}_{t=1}^N$ is updated in~\eqref{eq:real-update}. From these trajectories, we construct a supervised training dataset, $\cD$ consisting of pairs of history prefixes and the teacher's next-step logits. The final dataset is the set of all such pairs 
$\cD = \{ (\cH_{\tau,t}, \sbvec^{\ftrl}_{\tau,t+1}) \}_{\tau \in [B]}^{t \in [N-1]}$,
where $\cH_{\tau,t}$ is the history prefix of trajectory $\tau$ up to and including round $t$.

The following assumption is made on the data coverage on the supervised pretraining data $\cD$, which is a standard \emph{diversity} assumption in linear bandits~\citep{papini2021leveraging, hao2020adaptive, wu2020stochastic}. \looseness=-1

\begin{assumption}[Data Coverage and Signal Dominance]
\label{assump:data_coverage_signal}
We assume that in the pretrained dataset, the coverage of the task $\tau_w^2$ and the FTRL exploration parameter $\gamma$ is wide enough to cover the reward noise. In particular, define the learning rate $\eta_t = c / t$, we assume the coverage rate is strictly positive $$c_\lambda := \tau_w {\gamma}/{K} -(1-\gamma) c\|\Ub\|_{\op} \sigma_\xi^2 / 2 > 0.$$ 
\end{assumption}

\paragraph{Linear Self-Attention (LSA).}
The Linear Self-Attention (LSA) is a simplified variant of the standard self-attention mechanism, which has been established as a useful model for the theoretical analysis of transformers and in-context learning~\citep{von2023transformers, zhang2024trained}. An LSA layer takes a sequence of input embeddings, represented as a matrix $\Eb \in \mathbb{R}^{d \times N}$, where $d$ is the embedding dimension and $N$ is the sequence length. It produces an output matrix of the same dimension through the following computational form:
\begin{align}
f_{\text{lsa}}(\Eb;\btheta) =\; \Eb + \Wb^{PV}\Eb \cdot \left(\Eb^\top \Wb^{KQ}\Eb / \rho\right), \label{eq: lsa}
\end{align}
where $\btheta=(\Wb^{KQ},\Wb^{PV})$ are learnable parameters (matrices) and $\rho$ is a normalization factor for the attention matrix. This operation allows for interactions between all elements in the input sequence, mediated by the Gram matrix term $\Eb^\top \Wb^{KQ}\Eb$, to update the initial embeddings.

\section{Theoretical Framework for ICPO}
\label{sec:theory}
In this section,  we provide a theoretical justification for in-context policy optimization based on an inspirational analysis in a Linear Self-Attention (LSA) network. Through this minimal LSA model, we theoretically demonstrate that a pretrained LSA can imitate an expert policy optimization algorithm using in-context data. 
We then present our main theoretical guarantees, which establish that this learning is not only possible in principle but also efficient with a finite amount of data, and robust to perturbations at test time.


\paragraph{The ICPO Forward Pass.} We start with introducing the forward pass of ICPO framework. The LSA model parameterized by $\btheta$ starts with an empty embedding $\Eb^{(0)} = (\qb_x, q_r)^\top$ where $\qb_x = \mathbf{1}_K, q_r = 0$ are the placeholder for next-token generation. For each time step $t \in [T]$, the LSA model updates its policy according to the logits updates from the next-token generation of LSA described by 
\begin{align}
\widehat{\sbvec}_{t} &:= \big[f_{\text{LSA}}(\Eb^{(t-1)};\btheta)\big]_{1:K,\,t},\qquad
\pb_{t}=(1-\gamma)\,\softmax\!\big(\widehat{\sbvec}_{t}\big)+\tfrac{\gamma}{K}\one, \notag
\end{align}
where $\big[f_{\text{LSA}}(\Eb^{(t-1)};\btheta^*)\big]_{1:K,t}$ stands for the corresponding $K$ dimensions of the newly generated token indexed with $t$. $\gamma$ is the same exploration factor in the implementation of FTRL. With this policy $\pb_t \in \RR^{K}$, the LSA model selects its action $A_t$ and receives the reward by 
\begin{align}
A_t &\sim \pb_t,\quad \xb_t=\eb_{A_t},\quad r_t=\langle \wb,\xb_t\rangle+\epsilon_t. \notag 
\end{align}
Finally the sequence of token is updated by inserting the observed reward $r_t$ and embedding $\xb_t$ by 
\begin{align}
\Eb^{(t)} = \begin{pmatrix}
\xb_1 & \cdots & \xb_t & \bq_x\\
r_1   & \cdots & r_t   & q_r
\end{pmatrix} \notag 
\end{align}
and update the normalizing factor in LSA as $\rho = t$ to ensure the attention matrix is upper bounded by~$1$. The forward pass of ICPO then move to the next round $t \leftarrow t + 1$.

\paragraph{Training Objective.} The supervised pretraining is conducted on the dataset $\cD$ by matching the logits from the model output $\hat \sbb_{t} = f_{\text{LSA}}(\Eb^{(t-1)}, \btheta)$ with the logits from policy optimization $\sbb_{t}^{\ftrl}$ by minimizing the projected weighted loss by 
\begin{align}\cL(\btheta) = \tfrac12 \EE_{\tau \in \cD} \left[ \textstyle{\sum_{t=1}^{N-1}} \left\| \text{Proj} (\hat \sbb_{\tau, t+1} - \sbb_{\tau, t+1}^\ftrl)\right\|_{\bGamma}^2\right], \label{eq:expt}
\end{align}
where projection $\text{Proj}$ is defined as $\bPi_{\perp} := \Ib - \tfrac{1}{K} \one\one^\top$ which removes the constant bias $\one^\top \sbb$ from the logits, since such shifts do not affect the policy $\pb \propto \exp(\sbb)$. The expected matrix $\bGamma$ is inspired by the design of Natural Policy Optimization~\citep{kakade2001natural}
defined by 
\begin{align}
    \bGamma = \tfrac{1}{N-1}\EE_{\tau \in \cD} \left[\textstyle{\sum_{t=1}^{N-1}} \text{Diag}(\pb_{\tau, t}) - \pb_{\tau, t}\pb_{\tau, t}^\top\right]. \notag 
\end{align}
The Fisher-weighted loss provides a new loss for the supervised pretraining. We show by the following theorem that the common KL loss between the pretrained data $\pb_{t+1}^\ftrl$ and the LSA's output $\hat \pb_{t+1}$ is sandwiched by the loss $\cL(\btheta)$ up to constants.

\begin{theorem}[mixed-policy KL is controlled by the Fisher-projected quadratic loss]
\label{thm:kl-vs-fisher-quadratic-mixed}
Assume both teacher and student use $\gamma$-mixture exploration with $\gamma\in(0,1)$ as described in~\eqref{eq:pol}, and let $N$ denote the trajectory length of the sample inside the expectation. Then, \looseness=-1
\begin{align}
\frac{(1-\gamma)^2}{4}\cL(\btheta)
\le\EE\!\left[\tfrac{1}{N-1}\textstyle{\sum_{t=1}^{N-1}}\KL\big(\pb^{\ftrl}_{t+1}\,\Vert\,\hat \pb_{t+1}\big)\right]
\le\frac{K}{4\gamma}\cL(\btheta). \notag
\end{align}
\end{theorem}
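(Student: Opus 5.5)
The plan is to prove a pointwise (per-$t$, per-trajectory) sandwich between $\KL(\pb^{\ftrl}_{t+1}\Vert\hat\pb_{t+1})$ and a Fisher quadratic form in the projected logit gap $\bdelta:=\bPi_\perp(\hat\sbb_{t+1}-\sbb^{\ftrl}_{t+1})$. We then average over $t$ and over trajectories. Write $\mathbf q=\softmax(\sbb^{\ftrl}_{t+1})$ and $\hat{\mathbf q}=\softmax(\hat\sbb_{t+1})$, so that the mixed policies are $\pb=(1-\gamma)\mathbf q+\gamma\one/K$ and $\hat\pb=(1-\gamma)\hat{\mathbf q}+\gamma\one/K$. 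Softmax is invariant to constant shifts, so only $\bdelta$ matters.

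\emph{Key steps.} First, along the path $\sbb(u)=\sbb^{\ftrl}_{t+1}+u\bdelta$ for $u\in[0,1]$, use a second-order Taylor expansion of $u\mapsto\KL(\pb\Vert\pb(u))$. Its value and first derivative vanish at $u=0$. Second, compute the second derivative. Since $\partial\pb(u)/\partial\sbb=(1-\gamma)\Fb(\mathbf q(u))$, we get
\[
\tfrac{d^2}{du^2}\KL=\textstyle\sum_i p_i\big[(\dot p_i/p_i(u))^2-\ddot p_i/p_i(u)\big].
\]
Alternatively, and more cleanly, use the $\chi^2$/KL comparisons $\tfrac12\|\pb-\hat\pb\|_1^2\le\KL(\pb\Vert\hat\pb)\le\chi^2(\pb\Vert\hat\pb)=\sum_i (p_i-\hat p_i)^2/\hat p_i$.

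For the \emph{upper bound}, use $\hat p_i\ge\gamma/K$ to get $\KL\le\frac{K}{\gamma}\|\pb-\hat\pb\|_2^2=\frac{K(1-\gamma)^2}{\gamma}\|\mathbf q-\hat{\mathbf q}\|_2^2$. Then bound $\mathbf q-\hat{\mathbf q}=\int_0^1\Fb(\mathbf q(u))\bdelta\,du$, and use $\Fb\preceq\tfrac12\Ib$ (operator norm at most $1/2$) together with $\Fb^2\preceq\tfrac12\Fb$. This yields $\|\mathbf q-\hat{\mathbf q}\|_2^2\le\tfrac14\bdelta^\top\bar\Fb\bdelta$ for an averaged Fisher matrix $\bar\Fb$.

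For the \emph{lower bound}, use Pinsker, or a strong-convexity lower bound on the second derivative, to obtain $\KL\ge c(1-\gamma)^2\bdelta^\top\Fb\bdelta$ with $c=1/4$.

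Third, average over $t$ and trajectories. The local Fisher matrices must then be replaced by $\bGamma$, the average of $\Fb(\pb_{\tau,t})$ defining $\cL$. The intended route is to bound $\Fb$ at any mixed policy between scalar multiples of $\bPi_\perp$, and to compare $\bGamma$ with $\bPi_\perp$ in the same way. The constants $K/\gamma$ and $(1-\gamma)^2/4$ absorb these comparisons: the minimum mass $\gamma/K$ controls the upper side, and $\|\Fb\|_{\op}\le 1/2$ controls the lower side.

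\emph{Main obstacle.} The hard part is the mismatch between the pointwise Fisher matrix, evaluated at the current or interpolated policy, and the fixed averaged weight $\bGamma$ in $\cL$. A quadratic form in $\Fb(\mathbf q(u))$ does not directly compare to one in $\bGamma$ unless both are sandwiched by multiples of $\bPi_\perp$.

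The lower bound is the more delicate direction, because a Fisher matrix of a peaked softmax can be small along $\bdelta$. I would first try to exploit that the $\gamma$-mixture keeps every coordinate at least $\gamma/K$, which gives $\Fb(\pb)\succeq\frac{\gamma}{K}\bPi_\perp$ in a suitable sense. If the exact constants do not close, I would read the theorem's constants as arising from $\|\Fb\|_{\op}\le\tfrac12$ on one side and the $\gamma/K$ floor on the other. I would adjust the matching (projection versus Fisher weight) to align them.
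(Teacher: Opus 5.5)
\textbf{The lower bound cannot be proved, because it is false.} Write $\Delta:=\bPi_\perp(\hat\sbb_{t+1}-\sbb^{\ftrl}_{t+1})$, and let $\mathbf q,\hat{\mathbf q}$ be the unmixed softmaxes, as in your proposal.

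The middle term of the theorem is bounded. The student policy is a $\gamma$-mixture, so $\hat p_{t+1,i}\ge\gamma/K$, and hence $\KL(\pb^{\ftrl}_{t+1}\Vert\hat\pb_{t+1})\le\log(K/\gamma)$ for every $\btheta$.

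The left side is not bounded. $\cL(\btheta)$ dominates a multiple of $\frac{\gamma}{K}\EE\|\Delta\|_2^2$, and multiplying $\Wb^{PV}$ by $M$ multiplies the projected student logits by $M$ (Lemma~\ref{lem:lsa-closed}). So $\cL$ grows like $M^2$ for any $\btheta$ whose projected logits are not identically zero, while the middle term stays bounded.

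Neither of your routes can therefore close.
\begin{itemize}
\item Pinsker gives only $\KL\ge\frac{(1-\gamma)^2}{2}\|\mathbf q-\hat{\mathbf q}\|_2^2$. The available link $\|\mathbf q-\hat{\mathbf q}\|_2\le\frac12\|\Delta\|_2$ points the wrong way for this.
\item The curvature route fails because a lower bound needs the Jacobian $\partial\pb/\partial\sbb=(1-\gamma)\Fb(\softmax(\sbb))$. This is the Fisher matrix of the \emph{unmixed} softmax, which has no $\gamma/K$ floor. The mixture floors $\Fb(\pb)$, not the sensitivity of $\pb$ to the logits.
\end{itemize}
The obstacle you flagged is thus fatal, not a matter of constants. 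The paper's own argument for this side has the same defect. After Pinsker it invokes $\|\pb^{\ftrl}_{t+1}-\hat\pb_{t+1}\|_2\le\frac{1-\gamma}{2}\|\Delta\|_2$, and then uses it as an upper bound on $\|\Delta\|_2$.

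What is true is only a local statement. Suppose all coordinates of $\softmax(\sbb^{\ftrl}_{t+1}+u\Delta)$, $u\in[0,1]$, are at least $p_{\min}$. Then $\Fb\succeq p_{\min}\bPi_\perp$ along the segment, and Pinsker yields $\KL\ge\frac{(1-\gamma)^2p_{\min}^2}{2}\|\Delta\|_2^2\ge(1-\gamma)^2p_{\min}^2\,\Delta^\top\bGamma\Delta$. Here $p_{\min}$ depends on a bound on the logits.

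\textbf{The upper bound holds, but your route loses the constant.} Your $\chi^2$ step with $\hat p_i\ge\gamma/K$ already pays a factor $K/\gamma$. Passing from $\|\Delta\|_2^2$ to the $\bGamma$-norm pays it again. You end with $\frac{K^2(1-\gamma)^2}{4\gamma^2}$, which exceeds $\frac{K}{4\gamma}$ whenever $K(1-\gamma)^2>\gamma$ (by a factor $32$ at $K=10$, $\gamma=0.2$). Separately, Jensen with $\Fb^2\preceq\frac12\Fb$ gives $\frac12\Delta^\top\bar\Fb\Delta$, not $\frac14\Delta^\top\bar\Fb\Delta$.

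The paper pays $K/\gamma$ only once, in three steps:
\begin{itemize}
\item The map $\mathbf q\mapsto(1-\gamma)\mathbf q+\gamma\one/K$ is the same Markov kernel for teacher and student, so data processing gives $\KL(\pb^{\ftrl}_{t+1}\Vert\hat\pb_{t+1})\le\KL(\mathbf q\Vert\hat{\mathbf q})$.
\item The unmixed KL is exactly the Bregman divergence of log-sum-exp: $\int_0^1(1-u)\,\Delta^\top\Fb(\softmax(\sbb^{\ftrl}_{t+1}+u\Delta))\,\Delta\,du\le\frac14\|\Delta\|_2^2$.
\item Only then is $\|\Delta\|_2^2\le\frac{K}{\gamma}\Delta^\top\bGamma\Delta$ used.
\end{itemize}
Finally, these constants refer to the normalization $\cL=\EE[\frac{1}{N-1}\sum_t\Delta_{t+1}^\top\bGamma\Delta_{t+1}]$ used in the paper's proof. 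They do not refer to the $\tfrac12\EE[\sum_t\|\cdot\|_{\bGamma}^2]$ of \eqref{eq:expt}.
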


Theorem~\ref{thm:kl-vs-fisher-quadratic-mixed} suggests that the widely used KL loss is a good surrogate to the Fisher-weighted loss and explains that even in a linear self-attention layer, using the KL loss enables the transformers to learn self-reflection and improve it's response. 

\subsection{Theoretical Guarantees for ICPO}
We now present our theoretical results considering the empirical Fisher-weighted loss defined by
\begin{align}
\hat{\cL}(\btheta):= \tfrac{1}{2B(\!N-1)}\textstyle{\sum_{\tau \in \cD}^{t \in [T]}}
\left\| \text{Proj}\big(\hat\sbb_{\tau,t+1} \!-\!\sbb^{\ftrl}_{\tau,t+1}\big)\right\|_{\hat{\bGamma}}^{2}, \,
\hat{\bGamma} :=
\tfrac{1}{B(\!N-1)}\textstyle{\sum_{\tau \in \cD}^{t \in [T]}}
\Big(\!\!\Diag(\pb_{\tau,t}) \!-\!\pb_{\tau,t}\pb_{\tau,t}^\top\!\Big). \notag 
\end{align}
We denote the \emph{population} optimizer as $\btheta^* = \argmin_{\btheta} \cL(\btheta)$ and it's \emph{empirical} solution $\hat \btheta = \argmin \hat \cL(\btheta)$. Then the first theorem suggests that the population optimizer $\btheta^*$ is exactly imitating the policy optimization algorithm we described in ~\eqref{eq:pol}.

\begin{theorem}[Population Equivalence]
\label{thm:pop-equiv-under-scb}
Under Assumption~\ref{assump:data_coverage_signal}, consider a one-layer LSA with parameter $\btheta^*$ minimizing the population loss $\cL(\btheta)$ will imitate the policy optimization behavior for all possible history trajectory $\cH_t$:, i.e. $\widehat\pb_{t+1}(\cH_t;\btheta^\star)=\pb^{\ftrl}_{t+1}(\cH_t)$.
\end{theorem}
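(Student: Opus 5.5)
We prove the statement in three steps: realizability, uniqueness of the projected prediction, and the link from logits to policies.

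\textbf{Realizability.} We first exhibit parameters $\btheta^\circ$ that reproduce the teacher logits exactly. With $\Eb^{(t)}$ as in the forward pass and $\rho=t$, the query column is $(\one_K,0)^\top$. The attention product $\Eb^\top\Wb^{KQ}\Eb$, restricted to the query column, gives scores $\langle \eb_i, \Wb^{KQ}(\one,0)^\top\rangle$ for each context token $\eb_i=(\xb_i,r_i)^\top$. We choose $\Wb^{KQ}$ so that each score equals the constant $1$. One way is to take the block acting on $\xb$ equal to $\tfrac{1}{K}\one\one^\top$ and zero elsewhere; this works because $\langle \xb_i,\one\rangle=1$ for one-hot actions.

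The attention output at the query is then $\tfrac1t\Wb^{PV}\sum_{i\le t}(\xb_i,r_i)^\top$. The query token itself contributes only a constant vector, since its score is also constant and its embedding is fixed. Set the top-left $K\times K$ block of $\Wb^{PV}$ to $c\Vb$ and the $x$-from-$r$ column to $c\Ub$, and note that $r_i\xb_i$ is needed, not $r_i$. If $\Ub$ cannot be reached through a linear map of $r_i$ alone, we instead route the reward through the key-query product. Concretely, we choose $\Wb^{KQ}$ so that the score of token $i$ is affine in $r_i$, and use the value $\xb_i$. Then $\sum_i(\alpha+\beta r_i)\Wb\xb_i = \alpha\Wb\nb_t+\beta\Wb\gb_t$. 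This requires $\Ub$ and $\Vb$ to share a left factor, which holds because $\Ub=\Hb^{-1}$ and $\Vb=-\lambda\Hb^{-1}$. This yields $\hat\sbb_{t+1}=\tfrac{c}{t}(\Ub\gb_t+\Vb\nb_t)+\text{const}\cdot\one$, and the constant is killed by $\bPi_\perp$. Hence $\cL(\btheta^\circ)=0$, so every minimizer $\btheta^*$ satisfies $\cL(\btheta^*)=0$.

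\textbf{Zero loss implies pointwise agreement.} From $\cL(\btheta^*)=0$ we get $\bPi_\perp(\hat\sbb_{t+1}-\sbb^\ftrl_{t+1})\in\ker\bGamma$ almost surely over the data distribution. We show $\bGamma$ is positive definite on $\one^\perp$ by lower bounding $\Diag(\pb)-\pb\pb^\top$. For any $\pb$ with $\min_i p_i\ge\gamma/K$, one has $\vb^\top\Fb(\pb)\vb\ge\tfrac{\gamma}{K}\|\vb\|^2$ for $\vb\perp\one$, up to a constant factor. This uses the variance identity $\vb^\top\Fb(\pb)\vb=\Var_{\pb}(v)\ge \min_i p_i\,\|\vb-\bar v\one\|^2$. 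Thus $\bPi_\perp\hat\sbb=\bPi_\perp\sbb^\ftrl$ on the support of the training histories.

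\textbf{Extension to all histories.} Both $\hat\sbb_{t+1}$ and $\sbb^\ftrl_{t+1}$ are polynomial (in fact linear for the teacher, quadratic for the LSA) functions of the sufficient statistics $(\gb_t,\nb_t)/t$. Two such functions that agree on a set with nonempty interior, or at least on a set not contained in a low-degree algebraic variety, agree everywhere. Assumption~\ref{assump:data_coverage_signal} is what supplies this richness. The Gaussian prior on $\wb$ with variance $\tau_w^2$ and the $\gamma$-exploration make every arm visited with probability at least $\gamma/K$. The condition $c_\lambda>0$ guarantees that the signal directions in $\gb_t$ dominate the noise, so that the second moment of the features $(\gb_t,\nb_t)$, projected appropriately, is nondegenerate. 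Identifying the coefficient matrices from the nondegenerate second moment then gives $\bPi_\perp\hat\sbb_{t+1}(\cH_t;\btheta^*)=\bPi_\perp\sbb^\ftrl_{t+1}(\cH_t)$ for every $\cH_t$. Since softmax is invariant to shifts along $\one$ and both policies apply the same $\gamma$-mixing, this gives $\widehat\pb_{t+1}=\pb^\ftrl_{t+1}$.

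\textbf{Main obstacle.} The hard step is the nondegeneracy argument in the extension to all histories. The LSA output is bilinear in the context, so the feature covariance involves cross moments of $r_i\xb_i$ and $\xb_i$ under the adaptively generated, non-i.i.d.\ action distribution. I would first compute $\EE[\gb_t\gb_t^\top]$ and $\EE[\nb_t\nb_t^\top]$ conditioned on $\wb$. I would then lower bound the Schur complement using the exploration floor $\gamma/K$ and the prior variance $\tau_w^2$, and upper bound the noise cross-term via $(1-\gamma)c\|\Ub\|_{\op}\sigma_\xi^2/2$. This is exactly the form of $c_\lambda$, so positivity of $c_\lambda$ should close the argument.
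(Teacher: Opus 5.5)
You follow the paper's skeleton. Both logits are linear in $(\bar\nb_t,\bar\gb_t)$, the Fisher floor $\tfrac{\gamma}{K}$ holds on $\one^\perp$, $\EE[\bar\zb_t\bar\zb_t^\top]$ is nondegenerate on $\one^\perp\oplus\one^\perp$ via a $2\times2$ block bound in which $c_\lambda>0$ appears (Lemma~\ref{lem:sigma-lb-block-global}), and the coefficient matrix is then identified. Your ``zero loss $\Rightarrow$ a.s.\ agreement $\Rightarrow$ coefficients agree'' is equivalent to the paper's completed-square strong convexity. Your remark that $\Vb=-\lambda\Ub$ is what makes the teacher reachable is more careful than the paper. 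The paper treats $\bar\Wb=[\Wb_n\ \Wb_g]$ as a free matrix, although the LSA only produces matrices of the form $[\Wb_g\Diag(\bphi_1)/\phi_2,\ \Wb_g]$.

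\textbf{The step that fails is your realizability computation.} The query column also attends to itself, with score $\bq^\top\Wb^{KQ}\bq=\one^\top\bphi_1=K\alpha$ and value $\Wb\one$. Your construction therefore outputs $\one+\tfrac{c}{t}(\Ub\gb_t+\Vb\nb_t)+\tfrac{cK}{t}\Vb\one$, and $\Vb\one=-\lambda\Hb^{-1}\one$ is not a multiple of $\one$ unless $\Hb\one\parallel\one$. This term scales like $1/t$ while $\one^\top\nb_t=t$, so it cannot be absorbed into the $\nb_t$-channel once the training prefixes contain two different $t$. In fact, one can check that for generic $\Hb$ (e.g.\ $\Hb^{-1}=\Diag(2,1,\dots,1)$, $K\ge3$, $\lambda\neq0$) no parameters realize the teacher. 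The $\sum_s r_s^2$ term and a rank count force $\phi_2\neq0$ and $\text{Proj}\,\wb^{PV}_{12}=0$. The $\gb_t$-channel then pins $\text{Proj}\,\Wb^{PV}_{11}=\tfrac{c}{\phi_2}\text{Proj}\,\Ub$. Finally, the $\nb_t$-channel at $t=1,2$ demands both $\one^\top\bphi_1=0$ and $\bphi_1=-\lambda\phi_2\one$, which is impossible. So $\cL(\btheta^\star)>0$ and the conclusion fails. The paper's Lemma~\ref{lem:twoch-exact}, step (e), makes the same slip: it asserts $\text{Proj}(\Wb^{PV}_{11}\qb_x\qb_x^\top\bphi_1)=0$ because $\qb_x\parallel\one$. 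To make either proof correct you need an extra hypothesis, such as $\Hb\one\parallel\one$ (e.g.\ $\Hb=\Ib$) or $\lambda=0$, or a mask removing the query from the key--value sum.

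\textbf{Two further points.} First, for an arbitrary minimizer the projected LSA output is linear in $(\nb_t,\gb_t,\sum_{s\le t}r_s^2,1)/t$, not in $(\nb_t,\gb_t)/t$ alone. The paper avoids this by restricting to the normal form $\wb^{PV}_{12}\parallel\one$; your identification step must instead carry the extra features. Second, your ``main obstacle'' is easier than you expect and does not need $c_\lambda$. Suppose a relation $u^\top\nb_t+v^\top\gb_t+e\sum_s r_s^2+d=0$ holds a.s. It must then hold a.s.\ on $\{A_1=\dots=A_t=i\}$, an event of probability at least $(\gamma/K)^t$. There it reads $tu_i+v_i\sum_s r_s+e\sum_s r_s^2+d=0$, with $r_s=w_i+\epsilon_s$ and $w_i$ Gaussian and independent of the noise. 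This forces $e=v_i=0$ and $tu_i=-d$, and using every $i$ and two values of $t$ kills all coefficients. This support argument replaces your ``nonempty interior'' heuristic, which does not apply because $\nb_t$ is lattice-valued. It gives exact identification from $\gamma>0$ and $\tau_w>0$ alone. If you do pursue the quantitative Schur-complement route, the signal cross term $\EE[\bar\nb_t(\Diag(\bar\nb_t)\wb)^\top]$ does not vanish, because actions depend on $\wb$ through past rewards. Note that $c_\lambda$ only controls the noise cross term. Lemma~\ref{lem:sigma-lb-block-global} drops the signal term by assuming $\wb$ is independent of the actions, which fails under adaptive play.
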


Theorem~\ref{thm:pop-equiv-under-scb} suggests that the population loss $\cL(\btheta)$ is precise and informational enough to guarantee that the parameter $\btheta^*$ will drive the LSA exact imitate the policy optimization framework leveraging any in-context data. Then a simple concentration analysis suggests that the empirical estimation will also yield a similar result with high probability:

\begin{theorem}[Finite sample result]
\label{thm:finite_sample_equivalence_concise}
Under Assumption~\ref{assump:data_coverage_signal}, let the one-layer LSA be trained on $B$ i.i.d.\ trajectories $\{\cH_{\tau,N}\}_{\tau=1}^B$ generated by the policy optimization process in~\eqref{eq:pol}. Define $M:=B(N-1)$. Using all prefixes $t\in\{1,\dots,N-1\}$ from each trajectory (allowing within-trajectory dependence), form the empirical Fisher–weighted loss $\hat\cL(\btheta)$ and let the global optimizer be $\hat\btheta=\arg\min_{\btheta}\hat\cL(\btheta)$. For any $\delta\in(0,1)$, with probability at least $1-\delta$, if $M\gtrsim t^2\,(2K+\log(1/\delta))/c_{\lambda}^2$, then for any fixed test history $\cH_t$ we have $\widehat\pb_{t+1}(\cH_t;\hat\btheta)=\pb^{\ftrl}_{t+1}(\cH_t)$. In addition, the expected behavioral mismatch is bounded by
\begin{align*}
\EE_{\mathrm{train}}\Big[\ \big\|\widehat\pb_{t+1}(\cH_t;\hat\btheta)-\pb^{\ftrl}_{t+1}(\cH_t)\big\|_2^2\ \Big]
\ \le\ 2(1-\gamma)^2\delta,
\end{align*}
where $\EE_{\mathrm{train}}$ is taken over the randomness of the $B$ trajectories.
\end{theorem}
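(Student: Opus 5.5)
The proof reduces the finite-sample problem to the population one. By Theorem~\ref{thm:pop-equiv-under-scb} there is an LSA parameter $\btheta^\star$ that reproduces the teacher logits exactly, up to the constant shift removed by $\bPi_\perp$. The empirical loss $\hat\cL$ is also a nonnegative quadratic in the effective linear map, so the plan is to show that, with high probability, $\hat\cL$ has a unique minimizer on $\bPi_\perp$-equivalence classes and that this minimizer coincides with $\btheta^\star$. The realizability setup makes the residual exactly zero at $\btheta^\star$, so there is no bias term to control.

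\textbf{Reduction to a linear least-squares problem.} With the placeholder $\qb_x=\one_K$, $q_r=0$, the query column of $f_{\text{LSA}}(\Eb^{(t)};\btheta)$ is determined by an effective pair $(\tilde\Ub,\tilde\Vb)$ built from blocks of $\Wb^{PV},\Wb^{KQ}$. With $\rho=t$ it takes the form
\[
\hat\sbb_{t+1}=\tfrac1t\big(\tilde\Ub\gb_t+\tilde\Vb\nb_t\big)+\text{const}\cdot\one .
\]
This matches \eqref{eq:real-update} with $\eta_t=c/t$. I would write $\bPhi_t:=(\gb_t,\nb_t)/t$, so that the projected residual $\bPi_\perp(\hat\sbb_{t+1}-\sbb^{\ftrl}_{t+1})$ is linear in $\bDelta:=(\tilde\Ub-c\Ub,\ \tilde\Vb-c\Vb)$.

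It follows that $\hat\cL(\btheta)=\tfrac12\,\mathrm{vec}(\bDelta)^\top\hat\Sigma\,\mathrm{vec}(\bDelta)$. Here $\hat\Sigma$ is the empirical average of $(\bPhi_t\bPhi_t^\top)\otimes(\bPi_\perp\hat\bGamma\bPi_\perp)$. Since $\hat\cL\ge 0$ and $\hat\cL=0$ at $\bDelta=0$, the minimizer is unique modulo the null directions of $\bPi_\perp$ as soon as $\hat\Sigma$ is positive definite on the relevant subspace. Those null directions do not change the policy.

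\textbf{Key step: concentration of $\hat\Sigma$.} The population counterpart $\Sigma$ has smallest eigenvalue at least of order $c_\lambda$, and this is exactly what the proof of Theorem~\ref{thm:pop-equiv-under-scb} uses under Assumption~\ref{assump:data_coverage_signal}. Two ingredients give this bound:
\begin{itemize}
\item the $\gamma/K$ exploration floor lower bounds both the Fisher matrix on $\one^\perp$ and the second moment of $\nb_t/t$;
\item the prior variance $\tau_w$ makes $\gb_t/t$ not collinear with $\nb_t/t$, after subtracting the noise contribution $(1-\gamma)c\|\Ub\|_{\op}\sigma_\xi^2/2$.
\end{itemize}
I would then apply a matrix Bernstein or covering-net bound over the unit sphere in dimension $2K$, giving $\|\hat\Sigma-\Sigma\|_{\op}\le c_\lambda/2$ with probability at least $1-\delta$ whenever
\[
M\gtrsim t^2\,(2K+\log(1/\delta))/c_\lambda^2 .
\]
Two details need care here. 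First, the summands are bounded: $\|\bPhi_t\|\lesssim$ the reward scale (using sub-Gaussian truncation), and the Fisher matrix has operator norm at most $1$. The $t^2$ factor absorbs the scaling between the $1/t$ normalization and the test prefix length. Second, within-trajectory dependence is handled by concentrating only across the $B$ independent trajectories: each trajectory's block-sum $\sum_{t}$ is treated as one bounded i.i.d.\ summand. This independence issue is the main obstacle, and I expect it to cost at most constants or a factor of $N$ that is absorbed into $M$.

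\textbf{Conclusion.} On the good event, $\hat\Sigma\succ0$ on the relevant subspace, so $\bDelta=0$ and $\bPi_\perp\hat\sbb_{t+1}=\bPi_\perp\sbb^{\ftrl}_{t+1}$ for every history. Because softmax is shift-invariant and the same $\gamma$-mixture is applied to both, this gives $\widehat\pb_{t+1}=\pb^{\ftrl}_{t+1}$ for any fixed $\cH_t$.

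For the expectation bound, split on the good event. On the good event the mismatch is $0$. On the complement, which has probability at most $\delta$, both policies lie in $(1-\gamma)\Delta^K+\gamma\one/K$. Their difference therefore equals $(1-\gamma)$ times a difference of two simplex points, whose squared $\ell_2$ norm is at most $2$. Together these give $\EE_{\mathrm{train}}[\cdot]\le 2(1-\gamma)^2\delta$.
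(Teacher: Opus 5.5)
Your proposal follows the paper's proof essentially step for step. The shared steps are: the two-channel reduction $\bPi_\perp\hat\sbb_{t+1}=\bar\Wb\bar\zb_t$; exact realizability turning $\hat\cL$ into a completed square that vanishes at $\bar\Wb_\star$; the deterministic floor $\hat\bGamma|_{\one^\perp}\succeq(\gamma/K)\Ib$ combined with concentration of $\hat{\bar\bSigma}$ on $\one^\perp\oplus\one^\perp$; and the same $0$ versus $2(1-\gamma)^2$ split of the expectation on the failure event. Three corrections follow.

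\textbf{Normal form.} As in Lemma~\ref{lem:twoch-exact}, the reduction needs the restriction $\wb^{PV}_{12}\parallel\one$. Without it, a $\tfrac1t\sum_s r_s^2$ term that is not linear in $(\nb_t,\gb_t)$ survives the projection.

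\textbf{Eigenvalue floor.} The population floor from Lemma~\ref{lem:sigma-lb-block-global} is $\underline\sigma\asymp c_\lambda/t$, not order $c_\lambda$, because the count channel on $\one^\perp$ is only guaranteed a second moment of order $\gamma/(Kt)$. The deviation you must reach is therefore $\underline\sigma/2\asymp c_\lambda/t$, not $c_\lambda/2$. This is exactly where the $t^2$ in $M\gtrsim t^2(2K+\log(1/\delta))/c_\lambda^2$ comes from.

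\textbf{Dependence.} Blocking over the $B$ independent trajectories is more careful than the paper, which applies an i.i.d.\ covariance bound to all $M$ pairs despite the within-trajectory dependence. However, all prefixes of a trajectory share $\wb_\tau$, so blocking controls $B$ rather than $M$. It gives $B\gtrsim t^2(2K+\log(1/\delta))/c_\lambda^2$, i.e.\ an extra factor of $N$ in $M$ that cannot be absorbed as you suggest.
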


We would like to summarize the theoretical results by the following remark. 
\begin{remark}
Theorem~\ref{thm:finite_sample_equivalence_concise} suggests an $\tilde \cO(N^2K / c^2_\lambda)$ sample complexity for the supervised pretrained data to guarantee the LSA is well imitating the pretrained policy optimization trajectory. Such an constant sample complexity is because of the Assumption~\ref{assump:data_coverage_signal} which is similar with the diversity assumption used in~\citet{papini2021leveraging,hao2020adaptive,wu2020stochastic}
which suggests that $\gamma$-greedy exploration suffices for a constant regret in linear bandits. 
\end{remark}
Our analysis and framework share commons and significant difficulties compared with~\citet{lin2023transformers,park2024llm}
\begin{remark}
\citet{park2024llm} analyze the regret of the LLMs assuming the LLM can conduct the posterior sampling without the structural analysis. In contrast, our analysis is built on a slightly modified policy optimization framework inspired by inserting an Lagrangian to solve the FTRL mirror descent. With a newly designed \emph{supervised} learning loss, we show that an linear self attention transformer can structurally imitate the policy optimization framework. Compared with the unsupervised loss proposed in \citet{park2024llm}, we show by Theorem~\ref{thm:kl-vs-fisher-quadratic-mixed} that the newly proposed Fisher-weighted loss is a nice surrogate of the practical KL loss, which better justices that the transformers can learn policy optimization with sufficient pretraining.
\end{remark}
\begin{remark}
We note that \citet{lin2023transformers} studies algorithm distillation and proves that $\tilde \cO(\sqrt{T})$ layers of ReLU-activated transformers can mimic (soft) Linear UCB and Linear Thompson Sampling. In contrast, we target understanding policy optimization instead of its value-based counterpart, which is more suitable for analyzing the behavior of LLMs in improving their responses. In addition, our analysis is built for a one-layer linear self-attention framework so that the network does not need to change as the context grows larger, and is more aligned with practical long-context scenarios, whereas \citet{lin2023transformers} requires the number of network layers to grow on the order of $\sqrt{T}$, where $T$ is the length of the in-context sequence.
\end{remark}

In addition, as frequently discussed in previous literature~\citep{mcmahan2011ftrl, shani2020optimistic}, policy optimization methods such as FTRL are known to be robust to adversarial or misspecified rewards, which highlights their applicability to ICPO with self-assessed, noisy, or perturbed rewards. A crucial property for practical self-refinement is that the learned policy is also stable. We analyze the robustness of the learned ICPO loop at test time by examining its response to a single-shot reward shock, which we present in the next theorem. We will start with the definition of the CRN-coupled setup \citep{glasserman1992some}.

\begin{definition}[$s$-CRN coupled trajectories]\label{def-crn}
We say two trajectories are $s$-CRN coupled when they share a \emph{common random number} (CRN) and they are identical up to round $s-1$ in trajectory $\cH_t$ and $\tilde \cH_t$. We denote $\cF_s$ as the filtration that includes all events happen before observing the reward at round $s$. At $s$-th round, the reward is shifted by $\delta_r$. We define the drift cause by this reward shift by 
\begin{align*}
    \Delta\pb_{t+1}^s(\btheta) := \hat \pb_{t+1}(\tilde \cH_t; \btheta^*) - \hat \pb_{t+1}(\cH_t; \btheta^*) \in \RR^K.
\end{align*}
\end{definition}

Then we are ready to present the following theorem indicating that the impact from any one-time reward perturbations will be decreasing with a well-designed learning rate $\eta_t = c / t$. 
\begin{theorem}[Stability to One-step Reward Perturbations]
\label{thm:shadow-kernel-nonamp}
Under Assumption~\ref{assump:data_coverage_signal}, assume the test-time ICPO loop runs with fixed, population-optimal parameters $\btheta^\star$ in the $s$-CRN-coupled trajectories with task $\wb$, define 
\begin{align}
a :=\frac{c(1-\gamma)}{2}\|\Ub\|_{\op},\quad 
b :=\frac{c(1-\gamma)}{2}\sqrt{\frac{K}{2}}\Big(\|\Vb+\Ub\Diag(\wb)\|_{\op}+\sqrt{\tfrac{2}{\pi}}\sigma_\xi\|\Ub\|_{\op}\Big), \notag 
\end{align}
that does not grow with $s$ or $t$, let $C_b = f(b)$ as another absolute constant, for any $1 \le s \le t$, 
\begin{align}\label{eq:crn-ce-bound-s-explicit}
\EE\left[\|\Delta\widehat\pb_{t+1}^s\|_2|\cF_{s-1}\right]
\le\tfrac{a(1+C_b)}{s}\left(\tfrac{t}{s}\right)^{b-1}|\delta_r|. \notag 
\end{align}
In particular, let the learning rate $\eta_t = c / t$ be sufficiently small such that $b < 1$, the one-time reward shift in the $s$-CRN-coupled trajectories is decreasing to $0$, i.e., $\lim_{t\rightarrow \infty} \EE\left[\ \|\Delta\widehat\pb_{t+1}\|_2\ |\ \cF_{s-1}\right] = 0$.
\end{theorem}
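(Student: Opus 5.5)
By Theorem~\ref{thm:pop-equiv-under-scb}, at $\btheta^\star$ the LSA output coincides with the teacher. On every history the student logits therefore equal $\sbb_{t+1}=\eta_t(\Ub\gb_t+\Vb\nb_t)$ up to the irrelevant constant shift, with $\eta_t=c/t$. This lets us analyse the perturbation entirely through the explicit linear statistics $(\gb_t,\nb_t)$. Under $s$-CRN coupling the two trajectories agree through round $s-1$, share the action $A_s$, and differ only in $\tilde r_s=r_s+\delta_r$. The initial logit perturbation is therefore
\begin{align*}
\Delta\sbb_{s+1}=\eta_s\Ub\,\delta_r\xb_s .
\end{align*}
Its norm is at most $c\|\Ub\|_{\op}|\delta_r|/s$. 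The softmax Jacobian is $\Fb(\pb)$, and $\|\Fb(\pb)\|_{\op}\le 1/2$. The mixture multiplies it by $(1-\gamma)$. So $\|\Delta\pb_{s+1}\|_2\le a|\delta_r|/s$, which is the prefactor $a/s$ in the claim.

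Next we propagate the perturbation forward. For $u>s$ the coupled trajectories use common random numbers (e.g.\ inverse-CDF sampling from a shared uniform), so the actions differ only with probability controlled by $\|\Delta\pb_u\|_1\le\sqrt{K}\|\Delta\pb_u\|_2$. When the actions differ, the contribution to the statistics is $\Ub(r_u\xb_u-\tilde r_u\tilde\xb_u)+\Vb(\xb_u-\tilde\xb_u)$. Taking the conditional expectation, the reward part becomes $\Ub\Diag(\wb)(\xb_u-\tilde\xb_u)$ plus noise. The noise is bounded in expectation by $\sqrt{2/\pi}\,\sigma_\xi\|\Ub\|_{\op}$ (Gaussian-type first absolute moment). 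Writing $D_u:=\EE[\|\Delta\sbb_{u}\|\,|\,\cF_{s-1}]$ and using $\sbb_{t+1}=\tfrac{c}{t}\cdot(\text{sum})$, we obtain a recursion of the form
\begin{align*}
t\,D_{t+1}\le s\,D_{s+1}+\sum_{u=s+1}^{t}\tfrac{2b}{c(1-\gamma)}\cdot\tfrac{c(1-\gamma)}{2}\,\sqrt{\tfrac{K}{2}}\,\cdots\;D_u .
\end{align*}
Schematically this reads $\Delta_{t+1}\le \tfrac{a}{s}|\delta_r|+\tfrac{b}{t}\sum_{u=s+1}^t\Delta_u$ for $\Delta_u:=\EE\|\Delta\pb_u\|_2$. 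The $1/t$ factor arises because the accumulated sum is rescaled by $\eta_t$. The constants are arranged so that the Fisher bound $(1-\gamma)/2$ and the $\sqrt{K/2}$ from $\ell_1\to\ell_2$ conversion of the action-mismatch probability produce exactly $b$.

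The main step, and the expected obstacle, is solving this discrete Grönwall-type recursion sharply. We would compare it with the ODE $\Delta'=\tfrac{b-1}{t}\Delta$, whose solution is $\propto (t/s)^{b-1}$. Concretely, we induct on $t$ with the hypothesis $\Delta_u\le \tfrac{a(1+C_b)}{s}(u/s)^{b-1}|\delta_r|$. The sum satisfies $\sum_{u\le t}u^{b-1}\le t^b/b+O(1)$, which closes the induction once $C_b$ absorbs the boundary and discretization error; this is where $C_b=f(b)$ comes from. A subtle point is that the propagation bound must hold conditionally on $\cF_{s-1}$ and not merely pathwise. We would therefore carry out each step with the tower property and linearity in $\Delta\pb_u$, so that nonlinear softmax terms are bounded only via the uniform $1/2$ Fisher bound. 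Finally, $b<1$ gives $(t/s)^{b-1}\to0$ as $t\to\infty$, which yields the limit statement.
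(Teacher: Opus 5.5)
Your route matches the paper's: realizable two-channel logits at $\btheta^\star$, the $(1-\gamma)/2$ softmax Lipschitz factor, the $\sqrt{K/2}$ action-mismatch factor under CRN, and a discrete Gr\"onwall argument. The problem is the recursion you hand to the induction, $\Delta_{t+1}\le \frac{a}{s}|\delta_r|+\frac{b}{t}\sum_{u=s+1}^t\Delta_u$, whose forcing term does not decay in $t$. A sequence satisfying this with equality stays above $\frac{a}{s}|\delta_r|$ for every $t$. For $b<1$, however, the target $\frac{a(1+C_b)}{s}(t/s)^{b-1}|\delta_r|$ tends to $0$. So no choice of $C_b$ makes your induction step go through, and the decay claim cannot follow from this recursion at all.

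The correct forcing is $\frac{a}{t}|\delta_r|$, which is what your own first display $tD_{t+1}\le sD_{s+1}+\cdots$ gives after dividing by $t$. The shocked term $\delta_r\xb_s$ stays inside $\gb_t$ for every $t\ge s$. The rescaling by $\eta_t=c/t$ applies to the whole accumulated statistic, not only to the propagated part. Hence the shock's direct contribution to $\Delta\pb_{t+1}$ is at most $a|\delta_r|/t$, and $a|\delta_r|/s$ is only the initial kick at $t=s$. This dilution of the persistent shock is what produces the exponent $b-1$. It is also why your ODE heuristic is right: differentiating $t\Delta(t)=a|\delta_r|+b\int_s^t\Delta$ gives exactly $\Delta'=\frac{b-1}{t}\Delta$, while your written recursion does not lead to that ODE.

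With the forcing corrected you recover exactly the paper's recursion $\Delta_{t+1}\le\frac{a}{t}|\delta_r|+\frac{b}{t}\sum_{u=s}^t\Delta_u$ (here $\Delta_s=0$), and your induction then closes. For $0<b<1$, the estimate $\sum_{v=s}^{t-1}v^{b-1}\le s^{b-1}+(t^b-s^b)/b$ reduces the step to $(1+C_b)(1-b/s)\ge 1$, so $C_b=b/(1-b)$ works. For $b\ge1$, $C_b=0$ suffices. The paper instead unrolls $S_{t+1}\le(1+b/t)S_t+a|\delta_r|/t$ for $S_t=\sum_{u=s}^t\Delta_u$, using $\prod_{v=u+1}^{t-1}(1+b/v)\le(t/u)^b$. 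That route needs no ansatz and yields a constant bounded uniformly as $b\uparrow 1$ (e.g.\ $C_b=1+b$).

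One caveat applies equally to your argument and the paper's. The exact factor $\sqrt{K/2}$ in $b$ requires the per-round mismatch probability to equal $\frac12\|\Delta\pb_u\|_1$, i.e., a maximal coupling. Plain inverse-CDF sampling from a shared uniform can make that probability up to $K-1$ times larger.
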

\section{Minimum-Entropy In-Context Policy Optimization}
\label{sec:methodology}

Based on the theoretical understanding of the ICPO framework, we now present a practical algorithm leveraging the in-context policy optimization and the self-accessed reward to improve its reasoning ability via test-time scaling. To begin with, we adopt the ICPO notation presented in our theoretical analysis. The agent starts with the historical prompt $\cH_0 = \{Q\}$ that only contains the question $\Qb$. For each time $t \in [T]$, the model outputs it response $\xb_t \sim p(\cdot | \cH_{t-1})$ and then observe the reward $r_t$ via self-accessed rewards. Then the agent updates its history $\cH_{t} = \{\Qb, (\xb_1, r_1), \cdots, (\xb_t, r_t)\}$ and move on to the next time step $t \leftarrow t+1$. However, directly applying the ICPO framework presents two significant challenges. The first challenge is related to the length of the contexts preventing the agents from cumulate and conduct effective reasoning process based on a prolonged context through the in-context policy optimization process. The second is the trustworthiness of the self-assessed rewards, since the self-evaluation can be noisy and inaccurate. To tackle these two challenges, we present our test-time in-context scaling algorithm: Minimum-Entropy In-Context Policy Optimization (ME-ICPO). The ME-ICPO algorithm works in the following three major procedures.
\begin{algorithm}[t]
\caption{\textsc{ME-ICPO}: Minimum-Entropy In-Context Policy Optimization}
\label{alg:me-icrl-1}
\begin{algorithmic}[1]
\Require Question $Q$; number of rounds $N$; candidates per round $k$; 
\Require System prompt \textsf{SysPrompt}; Summarizer (prompt) $\textsf{Summ}$
\State $\cH_0 \gets \{\textsf{SysPrompt},\ Q\}$ 
\For{$t=1$ \textbf{to} $n$}
\State Sample response $A^{(t)}_{1:k} \sim p_t(\cdot\mid \cH_{t-1})$ with their answer $a_j^{(t)}$ for all $j \in [k]$ \label{ln:rolt}
\State Assess $\hat a_t \gets \textsf{MajorityVote}\{a_j^{(k)}\}_{j=1}^K$, let $r_j^{(t)} \gets \ind[a_j^{(t)} = \hat a_t\}$ for all $j \in [k]$ \label{ln:mj}
\State Summarize $x_j^{(t)} = \textsf{Summ}(A_j^(t))$ and $\tilde \cH_j^{(t)} = \cH_{t-1} \cup (x_{j}^{(t)}, r_{j}^{(t)})$ for all $j \in [k]$  \label{ln:sum}
\State Select the minimum entropy response $j^* \gets \argmin_{j \in [k]} H(\tilde \cH_j^{(t)})$ \label{ln:ent}
\State Update in-context list $\cH_t \gets \cH_{t-1}\ \cup(x_{j^*}^{(t)}, r_{j^*}^{(t)})$ \label{ln:ic}
\EndFor
\Ensure Response sampled from $p_{n+1}(\cdot\mid H_n)$
\end{algorithmic}
\end{algorithm}

\paragraph{Response Generation and Self-Assessment.} In Line~\ref{ln:rolt}, the agent samples $k$ responses $A_k^{(t)}$ using the historical in-context data $\cH_{t-1}$ with their final answer $a_k^{(t)}$ appearing in the final \texttt{boxed\{\}} block~\citep{hendrycks2021measuring}. Then the majority voting as conducted in~\citet{wang2022self} is conducted over the answers in Line~\ref{ln:mj} for assessing the accuracy of the responses.

\paragraph{Chain-of-Thought Summarization.} In order to compress the information from the output response and condense the in-context texts, we summarize the responses according to their Chain-of-Thoughts (CoTs) $x_j^{(t)}$ and ignore the detailed problem-solving process in Line~\ref{ln:sum}. The motivation is that the detailed numerical processing is expected to be easier than the global CoTs.

\paragraph{Minimum Entropy Response Selection.} Similar to the tree-search-style algorithms~\cite{yao2023tree}, we select a response $x_{j^*}^{(t)}$ and put it and its response into the in-context history $\cH_{t+1}$ in Line~\ref{ln:ic}. Specifically, unlike the tree-search~\cite{yao2023tree}, or Best-of-N~\cite{huang2025best} algorithms that are designed for multi-step reasoning where $x_t$ is the one-step response, we consider the $x_t$ to be a CoT description of solving the whole problem. Therefore, instead of selecting the $x_t$ with the highest reward, we instead follow a ``pessimism'' in offline reinforcement learning by selecting the $x_t$ that leads to the \emph{minimum entropy} in the future response, i.e., $j^* \gets \argmin_{j \in [k]} H(\tilde \cH_j^{(t)})$ as conducted in Line~\ref{ln:ent}. Intuitively, this \emph{minimum entropy} selection will avoid the agent selecting the corrupted response $x$ that may drive the agent to produce a random answer. In addition, this procedure will also encourage the agent to select the diversified responses $x_{j^*}$ that would help further reduce the entropy. 

It is important to distinguish our test-time approach from methods that train a student model on trajectories from a fixed teacher algorithm. Since ME-ICPO performs no parameter updates at test time, we do not claim that the deployed LLM is uniquely equivalent to any specific policy-optimization algorithm. Rather, our theoretical analysis shows that the LSA architecture possesses a strong inductive bias for performing such updates. ME-ICPO is designed to provide a usable interface-via reward-aware prompting and principled feedback selection-that effectively leverages this inherent computational capability of the model at test time without requiring gradients \citep{madaan2023self,shinn2023reflexion}. For complexity derivations and prefactor discussions, see Appendix~\ref{app:complexity}.

\section{Experiments}
\label{sec:exp}
We present the experiment results to validate our theoretical claims in Section~\ref{sec:theory} and the performance of ME-ICPO in this section.
\subsection{Validation Experiments}
\label{subsec:val}

To verify the Theorem~\ref{thm:pop-equiv-under-scb} and Theorem~\ref{thm:shadow-kernel-nonamp} results, we run two controlled checks on a \emph{single} LSA. Figure~\ref{fig:val-two} shows, respectively, the teacher-student \emph{policy matching} error (Top) and the \emph{stability} of the mixed policy under a one-time reward shock together with the instantiated analytical bound (Bottom).

\begin{wrapfigure}[21]{R}{0.35\textwidth}
\centering
\vspace{-1.5em}
\includegraphics[width=\linewidth]{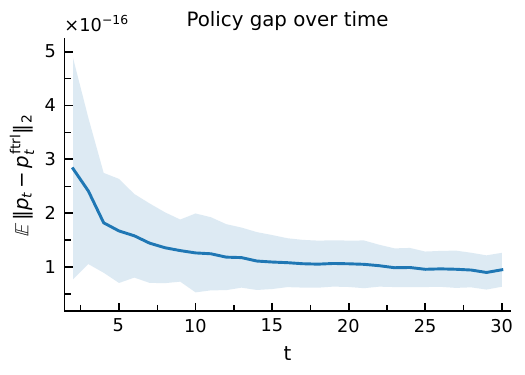}
\includegraphics[width=\linewidth]{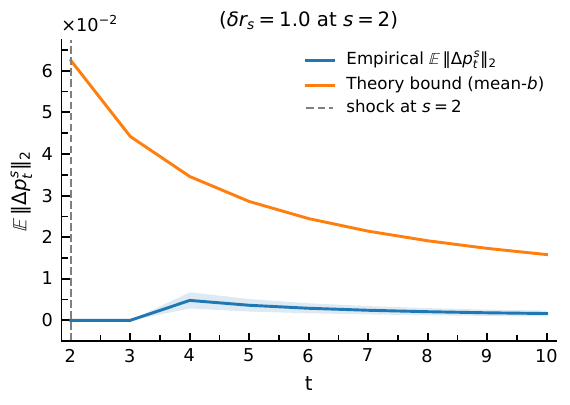}
\vspace{-1.5em}
\caption{Validation of ICPO theory. (Top): Policy Matching. (Bottom): Reward-Shock Stability.}
\label{fig:val-two}
\vspace{-2em}
\end{wrapfigure}
\paragraph{Teacher-student policy matching.}
We fix the meta configuration $K=10$, $N=30$, $B=100$, $\gamma=0.2$, stepsize $\eta_t=c/t$ with $c=1.0$, task prior $\wb\sim\mathcal N(\zero,\tau_w^2 I)$ with $\tau_w=1.0$, and reward noise $\epsilon_t\sim\mathcal N(0,\sigma_\xi^2)$ with $\sigma_\xi=0.5$. At test time we draw $B_{\text{test}}=64$ fresh tasks from the same generative process. At each round we hold the realized history $\cH_t$ fixed, compute the teacher mixed policy $\pb^{\ftrl}_{t+1}$ and the model’s mixed policy $\widehat{\pb}_{t+1}$ as defined in Section~\ref{sec:theory}, and then continue the closed loop using the model’s policy. Aggregating over tasks, we report the mean and one-standard-deviation band of $\EE\|\widehat{\pb}_{t}-\pb^{\ftrl}_{t}\|_2$. As shown in Figure~\ref{fig:val-two}, the gap rapidly drops to numerical precision and decreases with $t$, consistent with the population equivalence and finite-sample guarantees.

\paragraph{Stability under a single reward shock.}
We use an LSA trained via supervised imitation on PO rollouts with $K=5$, $N=5$, $\gamma=0.8$, $c=0.5$, $\tau_w=0.5$, and $\sigma_\xi=0.1$. At test time we keep the data model and $\gamma$ unchanged, extend the horizon to $N=10$, and evaluate on $B_{\text{test}}=256$ tasks. For each task we run two $s$-CRN-coupled trajectories: a baseline and a perturbed path that injects a single reward shock $\delta r_s=1.0$ at $s=2$. Following Def.~\ref{def-crn}, we record $\Delta\widehat{\pb}_t^s$ and plot the mean and one-standard-deviation band of $\EE\|\Delta\widehat{\pb}_t^s\|_2$. We can compute $b\in[0.1236,\,0.2127]$ and thus the non-amplification condition. Figure~\ref{fig:val-two} shows a brief post-shock bump followed by a steady decline without amplification over time, with the analytical curve providing a conservative upper bound.


\subsection{LLM Experimental Setup}
We evaluate ME-ICPO on standard mathematical QA benchmarks (\textbf{AIME 2024}, \textbf{AMC}, and \textbf{MATH-500}, split into five difficulty levels following \textsc{TTRL})~\citep{aops_aime,maa_amc,li2024numinamath,hendrycks2021measuring} using representative backbones (\textbf{Qwen2.5-Math-1.5B} and \textbf{Qwen2.5-Math-7B})~\citep{yang2025qwen3}. For reference, we also report the performance of \textbf{Llama-3.1-8B-Instruct}~\citep{grattafiori2024llama} and \textbf{DeepSeek-R1-Distill-Llama-8B}~\citep{guo2025deepseek} on \textbf{AIME 2024}. Full hyperparameters, baseline specifications, hardware/software environment, and prompt details are provided in Appendices~\ref{app:hyperparameters},~\ref{app:environment}, and~\ref{app:prompt}.
Following~\citet{guo2025deepseek}, we generate $k{=}16$ responses per question ($T{=}0.6$, top-$p{=}0.95$). 
We report \emph{Mean@k} $=\tfrac{1}{|\mathcal D|}\sum_{q\in\mathcal D}\tfrac{1}{k}\sum_{i=1}^k c_i(q)$, where $c_i(q)=\mathbf{1}[a_i(q)=a^\star(q)]$, and \emph{Accuracy}, the probability of answering correctly with one attempt.

\subsection{Main Results}
Our main results are presented in Table~\ref{tab:main}, reporting the mean and standard deviation (over 5 seeds) of \emph{Mean@16} (\%) and \emph{Accuracy} (\%) across all tasks and models. The results show that ME-ICPO consistently improves performance through its gradient-free, in-context optimization process. These improvements hold for both the larger model (\textbf{Qwen2.5-Math-7B}) and the smaller model (\textbf{Qwen2.5-Math-1.5B}), demonstrating that ME-ICPO is effective across different model scales. Qualitatively curated prompt examples are provided in Appendix~\ref{app:prompt}. We also evaluate an oracle-reward variant, with results summarized in Table~\ref{tab:ablation}.


We further report the performance of additional models \textbf{Llama-3.1-8B-Instruct}~\citep{grattafiori2024llama} 
and \textbf{DeepSeek-R1-Distill-Llama-8B}~\citep{guo2025deepseek} on \textbf{AIME 2024} in \Cref{fig:icpo-cmp}.

We also consider the \emph{Maj@k} metric~\citep{wang2022sc}, which aggregates $k$ responses by majority vote $\hat a(q) = \mathrm{mode}(a_1(q),\dots,a_k(q))$ (ties broken uniformly) and computes \emph{Maj@}k $= \frac{1}{|\mathcal D|} \sum_{q \in \mathcal D} \mathbf{1}\{\hat a(q) = a^\star(q)\}$. As shown in Figure~\ref{fig:icpo-cmp}, our experiments indicate that ME-ICPO's average performance (\textbf{\emph{Mean@16}}) can surpass the anticipated upper bound given by the base model's majority voting, similar to observations in TTRL.
Furthermore, applying majority voting on top of ME-ICPO's outputs leads to additional gains. \looseness=-1

\begin{figure}[t]
\centering
\includegraphics[width=0.8\linewidth]{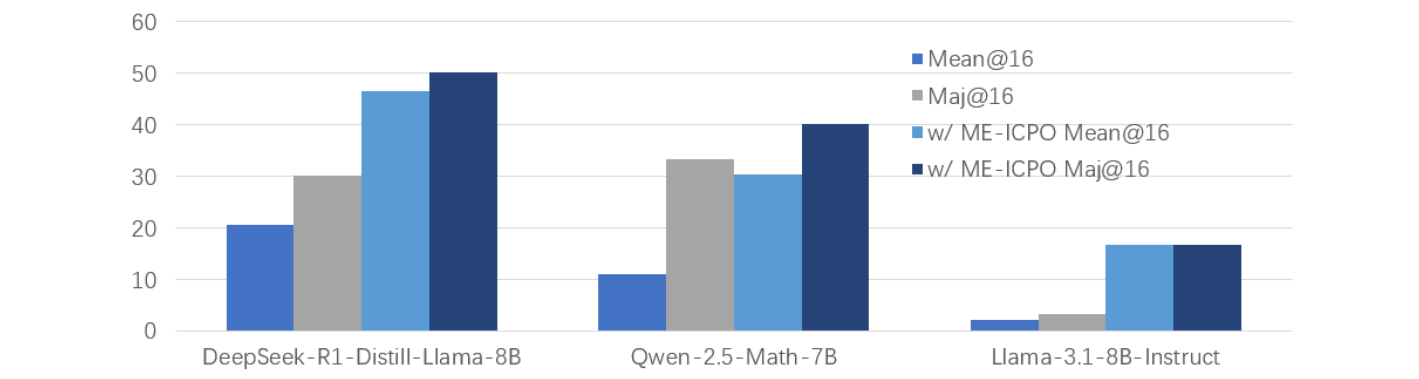}
\caption{Performance comparison of backbone models before and after ME-ICPO.}
\label{fig:icpo-cmp}
\vspace{-1.5em}
\end{figure}
\begin{table}[!t]
\centering
\caption{Performance comparison on different models with \emph{Mean@16} and standard \emph{Accuracy} (\%).}
\label{tab:main}
\resizebox{\linewidth}{!}{
\begin{tabular}{l|ccccccc}
\toprule
\multicolumn{1}{l!{\vrule width .6pt}}{\textbf{Model $\backslash$ Benchmark}} &
\textbf{AIME 2024} & \textbf{AMC} & \textbf{MATH-L1} & \textbf{MATH-L2} & \textbf{MATH-L3} & \textbf{MATH-L4} & \textbf{MATH-L5} \\
\midrule
\multicolumn{8}{c}{\textbf{\emph{Mean@16} (\%)}}\\
\midrule
Qwen2.5-Math-7B
& $11.04\pm 1.65$ & $41.42\pm 0.99$ & $52.62\pm 1.38$ & $50.76\pm 0.95$ & $49.88\pm 0.88$ & $43.60\pm 0.80$ & $30.58\pm 0.78$ \\
\rowcolor{lightblue!100}w/ ME-ICPO
& $30.42\pm 1.81$ & $47.06\pm 0.84$ & $62.35\pm 1.27$ & $64.31\pm 1.23$ & $58.87\pm 0.78$ & $49.31\pm 0.97$ & $38.71\pm 0.86$ \\
\rowcolor{lightblue!100}$\Delta$
& \textcolor{red}{$+19.38$} & \textcolor{red}{$+5.64$} & \textcolor{red}{$+9.73$} & \textcolor{red}{$+13.55$} & \textcolor{red}{$+8.99$} & \textcolor{red}{$+5.71$} & \textcolor{red}{$+8.13$} \\
\midrule
Qwen2.5-Math-1.5B
& $6.46\pm 0.96$ & $30.42\pm 0.58$ & $49.27\pm 0.81$ & $48.54\pm 0.67$ & $45.42\pm 0.86$ & $40.28\pm 0.57$ & $25.23\pm 0.45$ \\
\rowcolor{lightblue!100}w/ ME-ICPO
& $9.79\pm 1.11$ & $33.58\pm 0.63$ & $61.19\pm 0.77$ & $54.93\pm 0.71$ & $52.08\pm 0.69$ & $46.44\pm 0.64$ & $29.85\pm 0.70$ \\
\rowcolor{lightblue!100}$\Delta$
& \textcolor{red}{$+3.33$} & \textcolor{red}{$+3.16$} & \textcolor{red}{$+11.92$} & \textcolor{red}{$+6.39$} & \textcolor{red}{$+6.66$} & \textcolor{red}{$+6.16$} & \textcolor{red}{$+4.62$} \\
\midrule\midrule
\multicolumn{8}{c}{\textbf{\emph{Accuracy} (\%)}}\\
\midrule
Qwen2.5-Math-7B
& $11.13\pm 3.27$ & $41.33\pm 1.97$ & $46.98\pm 2.71$ & $42.67\pm 1.86$ & $43.71\pm 1.75$ & $37.79\pm 1.65$ & $26.13\pm 1.54$ \\
\rowcolor{lightblue!100}w/ ME-ICPO
& $30.05\pm 3.02$ & $47.20\pm 2.26$ & $57.32\pm 2.37$ & $54.74\pm 2.04$ & $51.90\pm 1.67$ & $40.84\pm 1.78$ & $31.71\pm 1.40$ \\
\rowcolor{lightblue!100}$\Delta$
& \textcolor{red}{$+18.92$} & \textcolor{red}{$+5.87$} & \textcolor{red}{$+10.34$} & \textcolor{red}{$+12.07$} & \textcolor{red}{$+8.19$} & \textcolor{red}{$+3.05$} & \textcolor{red}{$+5.58$} \\
\midrule
Qwen2.5-Math-1.5B
& $6.51\pm 1.94$ & $30.25\pm 1.07$ & $44.68\pm 1.83$ & $39.95\pm 1.32$ & $39.77\pm 0.99$ & $34.97\pm 0.68$ & $20.89\pm 1.35$ \\
\rowcolor{lightblue!100}w/ ME-ICPO
& $9.82\pm 2.15$ & $33.73\pm 0.96$ & $57.06\pm 1.70$ & $47.60\pm 1.00$ & $47.81\pm 1.17$ & $41.55\pm 0.72$ & $24.83\pm 1.29$ \\
\rowcolor{lightblue!100}$\Delta$
& \textcolor{red}{$+3.31$} & \textcolor{red}{$+3.48$} & \textcolor{red}{$+12.38$} & \textcolor{red}{$+7.65$} & \textcolor{red}{$+8.04$} & \textcolor{red}{$+6.58$} & \textcolor{red}{$+3.94$} \\
\bottomrule
\end{tabular}
}
\end{table}

\subsection{Analysis and Ablation Studies}

We further analyze the AIME 2024 benchmark using \textbf{Qwen2.5-Math-7B} to assess the contributions of ME-ICPO's core components and its sensitivity to hyperparameters, as detailed in Appendix~\ref{app:hyperparameterssensitivity}, along with the computational cost analysis in Appendix~\ref{app:complexity}.


\begin{wraptable}[9]{r}{0.65\linewidth} 
\centering 
\vspace{-1.5em}
\caption{Ablation study of ME-ICPO. (Oracle results use groundtruth labels for reward and are shown for reference only.)} \vspace{-.3em}
\label{tab:ablation}
\begin{tabular}{l|c|c}
\toprule
\textbf{Method}                  & \textbf{\emph{Accuracy} (\%)} & \textbf{\emph{Mean@16} (\%)} \\ \hline
    \textit{w/o Reward}              & 19.30                 & 19.17                 \\
    \textit{w/o Entropy}             & 5.77                 & 5.83                  \\
    \textit{w/o Entropy \& Reward}   & 6.21                 & 6.46                  \\ 
    \textbf{ME-ICPO \textit{(full)}} & \textbf{30.05}       & \textbf{30.42}        \\
    ME-ICPO Oracle         & 38.19                & 38.12                 \\\hline
    \end{tabular}
\end{wraptable}
\paragraph{Ablation Study.}
To isolate the contributions of our core components—entropy-based selection and explicit reward signals—we conduct an ablation study, with results presented in Table~\ref{tab:ablation}.
The results clearly demonstrate that the minimum-entropy selection criterion is the most critical component of our algorithm. Removing this greedy selection mechanism (\textit{w/o Entropy}) causes a dramatic performance collapse. The explicit reward signal, made legible by our system prompt, also plays a crucial role. While keeping entropy selection active, removing the reward tags (\textit{w/o Reward}) still results in a significant drop. 

\section{Conclusion}
\label{sec:conclusion}
In this work, we studied the test-time scaling and self-reflection phenomenology in LLM reasoning by introducing a theoretically grounded In-Context Policy Optimization (ICPO) framework. We have shown that, under the ICPO framework, a single-layer linear self-attention transformer can provably imitate a policy-optimization algorithm, which provides a theoretical proof-of-concept for how self-reflection can be implemented within LLMs. 
Based on the ICPO framework, we propose a practical and effective algorithm, Minimum Entropy In-Context Policy Optimization (ME-ICPO), which provides a test time scaling pipeline with self-assessed rewards and in-context response selection. 
Extensive empirical studies demonstrate that our improved performance across diverse benchmarks. \looseness=-1


\subsection*{Reproducibility Statement}
We have made significant efforts to ensure the reproducibility of our results. We detail all datasets, model configurations, hyperparameters, and training procedures in Appendix~\ref{app:exp_setup}. To enable faithful reproduction, we release our full codebase, experiment scripts, and configuration files at \href{https://github.com/UNCSciML/ICPO}{\texttt{https://github.com/UNCSciML/ICPO}}. The repository includes exact seeds, environment specifications. Our source code is provided to facilitate faithful reproduction of our experiments in the supplementary materials.

\subsection*{Ethics Statement}
We have carefully reviewed the Code of Ethics and find that our work does not raise any significant ethical concerns. Our research does not involve human subjects, sensitive data, or potentially harmful applications. We believe our methodology and contributions align with principles of fairness, transparency, and research integrity.
\subsubsection*{Acknowledgments}
We thank the anonymous reviewers for their helpful comments. 
Part of this research is conducted during ZW's internship at Microsoft Research. 
This research was also supported by WZ's startup funding provided by the School of Data Science and Society at UNC Chapel Hill. 

\bibliography{iclr2026_conference}

@article{zuo2025ttrl,
  title={Ttrl: Test-time reinforcement learning},
  author={Zuo, Yuxin and Zhang, Kaiyan and Sheng, Li and Qu, Shang and Cui, Ganqu and Zhu, Xuekai and Li, Haozhan and Zhang, Yuchen and Long, Xinwei and Hua, Ermo and others},
  journal={arXiv preprint arXiv:2504.16084},
  year={2025}
}

@inproceedings{wang2022sc,
	title        = {Self-Consistency Improves Chain of Thought Reasoning in Language Models},
	author       = {Wang, Xuezhi and Wei, Jason and Schuurmans, Dale and Le, Quoc and Chi, Ed and Narang, Sharan and Chowdhery, Aakanksha and Zhou, Denny},
	year         = 2023,
	booktitle    = {ICLR}
}

@article{garg2022can,
  title={What can transformers learn in-context? a case study of simple function classes},
  author={Garg, Shivam and Tsipras, Dimitris and Liang, Percy S and Valiant, Gregory},
  journal={Advances in neural information processing systems},
  volume={35},
  pages={30583--30598},
  year={2022}
}

@inproceedings{yao2023react,
  title={React: Synergizing reasoning and acting in language models},
  author={Yao, Shunyu and Zhao, Jeffrey and Yu, Dian and Du, Nan and Shafran, Izhak and Narasimhan, Karthik and Cao, Yuan},
  booktitle={International Conference on Learning Representations (ICLR)},
  year={2023}
}

@article{yao2023tree,
  title={Tree of thoughts: Deliberate problem solving with large language models},
  author={Yao, Shunyu and Yu, Dian and Zhao, Jeffrey and Shafran, Izhak and Griffiths, Tom and Cao, Yuan and Narasimhan, Karthik},
  journal={Advances in neural information processing systems},
  volume={36},
  pages={11809--11822},
  year={2023}
}

@inproceedings{besta2024graph,
  title={Graph of thoughts: Solving elaborate problems with large language models},
  author={Besta, Maciej and Blach, Nils and Kubicek, Ales and Gerstenberger, Robert and Podstawski, Michal and Gianinazzi, Lukas and Gajda, Joanna and Lehmann, Tomasz and Niewiadomski, Hubert and Nyczyk, Piotr and others},
  booktitle={Proceedings of the AAAI conference on artificial intelligence},
  volume={38},
  number={16},
  pages={17682--17690},
  year={2024}
}

@article{madaan2023self,
  title={Self-refine: Iterative refinement with self-feedback},
  author={Madaan, Aman and Tandon, Niket and Gupta, Prakhar and Hallinan, Skyler and Gao, Luyu and Wiegreffe, Sarah and Alon, Uri and Dziri, Nouha and Prabhumoye, Shrimai and Yang, Yiming and others},
  journal={Advances in Neural Information Processing Systems},
  volume={36},
  pages={46534--46594},
  year={2023}
}

@article{shinn2023reflexion,
  title={Reflexion: Language agents with verbal reinforcement learning, 2023},
  author={Shinn, Noah and Cassano, Federico and Labash, Beck and Gopinath, Ashwin and Narasimhan, Karthik and Yao, Shunyu},
  journal={URL https://arxiv. org/abs/2303.11366},
  volume={1},
  year={2023}
}

@inproceedings{mcmahan2011ftrl,
	title        = {Follow-the-Regularized-Leader and Mirror Descent: Equivalence Theorems and L1 Regularization},
	author       = {McMahan, H. Brendan},
	year         = 2011,
	booktitle    = {AISTATS}
}

@article{prasad2024self,
  title={Self-consistency preference optimization},
  author={Prasad, Archiki and Yuan, Weizhe and Pang, Richard Yuanzhe and Xu, Jing and Fazel-Zarandi, Maryam and Bansal, Mohit and Sukhbaatar, Sainbayar and Weston, Jason and Yu, Jane},
  journal={arXiv preprint arXiv:2411.04109},
  year={2024}
}

@article{zhou2024self,
  title={Self-discover: Large language models self-compose reasoning structures},
  author={Zhou, Pei and Pujara, Jay and Ren, Xiang and Chen, Xinyun and Cheng, Heng-Tze and Le, Quoc V and Chi, Ed and Zhou, Denny and Mishra, Swaroop and Zheng, Huaixiu Steven},
  journal={Advances in Neural Information Processing Systems},
  volume={37},
  pages={126032--126058},
  year={2024}
}

@article{wang2025can,
  title={Can llms replace human evaluators? an empirical study of llm-as-a-judge in software engineering},
  author={Wang, Ruiqi and Guo, Jiyu and Gao, Cuiyun and Fan, Guodong and Chong, Chun Yong and Xia, Xin},
  journal={Proceedings of the ACM on Software Engineering},
  volume={2},
  number={ISSTA},
  pages={1955--1977},
  year={2025},
  publisher={ACM New York, NY, USA}
}

@inproceedings{lightman2023let,
  title={Let's verify step by step},
  author={Lightman, Hunter and Kosaraju, Vineet and Burda, Yuri and Edwards, Harrison and Baker, Bowen and Lee, Teddy and Leike, Jan and Schulman, John and Sutskever, Ilya and Cobbe, Karl},
  booktitle={The Twelfth International Conference on Learning Representations},
  year={2023}
}

@article{guo2025deepseek,
  title={Deepseek-r1: Incentivizing reasoning capability in llms via reinforcement learning},
  author={Guo, Daya and Yang, Dejian and Zhang, Haowei and Song, Junxiao and Zhang, Ruoyu and Xu, Runxin and Zhu, Qihao and Ma, Shirong and Wang, Peiyi and Bi, Xiao and others},
  journal={arXiv preprint arXiv:2501.12948},
  year={2025}
}

@article{jaech2024openai,
  title={Openai o1 system card},
  author={Jaech, Aaron and Kalai, Adam and Lerer, Adam and Richardson, Adam and El-Kishky, Ahmed and Low, Aiden and Helyar, Alec and Madry, Aleksander and Beutel, Alex and Carney, Alex and others},
  journal={arXiv preprint arXiv:2412.16720},
  year={2024}
}

@misc{aops_aime,
	title        = {AIME Problems and Solutions},
	year         = 2024,
	note         = {Accessed: 2025-09-13}
}

@misc{maa_amc,
	title        = {American Mathematics Competitions},
	author       = {{Mathematical Association of America}},
	year         = 2025,
	note         = {Accessed: 2025-09-13}
}

@inproceedings{rein2024gpqa,
  title={Gpqa: A graduate-level google-proof q\&a benchmark},
  author={Rein, David and Hou, Betty Li and Stickland, Asa Cooper and Petty, Jackson and Pang, Richard Yuanzhe and Dirani, Julien and Michael, Julian and Bowman, Samuel R},
  booktitle={First Conference on Language Modeling},
  year={2024}
}

@article{qwen25math,
  title={Qwen2. 5-math technical report: Toward mathematical expert model via self-improvement},
  author={Yang, An and Zhang, Beichen and Hui, Binyuan and Gao, Bofei and Yu, Bowen and Li, Chengpeng and Liu, Dayiheng and Tu, Jianhong and Zhou, Jingren and Lin, Junyang and others},
  journal={arXiv preprint arXiv:2409.12122},
  year={2024}
}

@article{park2024llm,
  title={Do llm agents have regret? a case study in online learning and games},
  author={Park, Chanwoo and Liu, Xiangyu and Ozdaglar, Asuman and Zhang, Kaiqing},
  journal={arXiv preprint arXiv:2403.16843},
  year={2024}
}

@inproceedings{shani2020optimistic,
  title={Optimistic policy optimization with bandit feedback},
  author={Shani, Lior and Efroni, Yonathan and Rosenberg, Aviv and Mannor, Shie},
  booktitle={International Conference on Machine Learning},
  pages={8604--8613},
  year={2020},
  organization={PMLR}
}

@article{wei2022chain,
  title={Chain-of-thought prompting elicits reasoning in large language models},
  author={Wei, Jason and Wang, Xuezhi and Schuurmans, Dale and Bosma, Maarten and Xia, Fei and Chi, Ed and Le, Quoc V and Zhou, Denny and others},
  journal={Advances in neural information processing systems},
  volume={35},
  pages={24824--24837},
  year={2022}
}

@article{wang2024transformers,
  title={Transformers Can Learn Temporal Difference Methods for In-Context Reinforcement Learning},
  author={Wang, Jiuqi and Blaser, Ethan and Daneshmand, Hadi and Zhang, Shangtong},
  journal={arXiv preprint arXiv:2405.13861},
  year={2024}
}

@article{huang2025best,
  title={Is best-of-n the best of them? coverage, scaling, and optimality in inference-time alignment},
  author={Huang, Audrey and Block, Adam and Liu, Qinghua and Jiang, Nan and Krishnamurthy, Akshay and Foster, Dylan J},
  journal={arXiv preprint arXiv:2503.21878},
  year={2025}
}

@article{zhang2024accessing,
  title={Accessing gpt-4 level mathematical olympiad solutions via monte carlo tree self-refine with llama-3 8b},
  author={Zhang, Di and Huang, Xiaoshui and Zhou, Dongzhan and Li, Yuqiang and Ouyang, Wanli},
  journal={arXiv preprint arXiv:2406.07394},
  year={2024}
}

@inproceedings{von2023transformers,
  title={Transformers learn in-context by gradient descent},
  author={Von Oswald, Johannes and Niklasson, Eyvind and Randazzo, Ettore and Sacramento, Jo{\~a}o and Mordvintsev, Alexander and Zhmoginov, Andrey and Vladymyrov, Max},
  booktitle={International Conference on Machine Learning},
  pages={35151--35174},
  year={2023},
  organization={PMLR}
}

@article{zhang2024trained,
  title={Trained transformers learn linear models in-context},
  author={Zhang, Ruiqi and Frei, Spencer and Bartlett, Peter L},
  journal={Journal of Machine Learning Research},
  volume={25},
  number={49},
  pages={1--55},
  year={2024}
}

@article{bai2023transformers,
  title={Transformers as statisticians: Provable in-context learning with in-context algorithm selection},
  author={Bai, Yu and Chen, Fan and Wang, Huan and Xiong, Caiming and Mei, Song},
  journal={Advances in neural information processing systems},
  volume={36},
  pages={57125--57211},
  year={2023}
}

@article{chen2024transformers,
  title={How transformers utilize multi-head attention in in-context learning? a case study on sparse linear regression},
  author={Chen, Xingwu and Zhao, Lei and Zou, Difan},
  journal={Advances in Neural Information Processing Systems},
  volume={37},
  pages={119573--119613},
  year={2024}
}

@article{lin2023transformers,
  title={Transformers as decision makers: Provable in-context reinforcement learning via supervised pretraining},
  author={Lin, Licong and Bai, Yu and Mei, Song},
  journal={arXiv preprint arXiv:2310.08566},
  year={2023}
}

@article{song2024mind,
  title={Mind the gap: Examining the self-improvement capabilities of large language models},
  author={Song, Yuda and Zhang, Hanlin and Eisenach, Carson and Kakade, Sham and Foster, Dean and Ghai, Udaya},
  journal={arXiv preprint arXiv:2412.02674},
  year={2024}
}

@book{shalev2007online,
  title={Online learning: Theory, algorithms, and applications},
  author={Shalev-Shwartz, Shai},
  year={2007},
  publisher={Hebrew University}
}

@inproceedings{mcmahan2011follow,
  title={Follow-the-regularized-leader and mirror descent: Equivalence theorems and l1 regularization},
  author={McMahan, Brendan},
  booktitle={Proceedings of the Fourteenth International Conference on Artificial Intelligence and Statistics},
  pages={525--533},
  year={2011},
  organization={JMLR Workshop and Conference Proceedings}
}

@article{wang2023math,
  title={Math-shepherd: Verify and reinforce llms step-by-step without human annotations},
  author={Wang, Peiyi and Li, Lei and Shao, Zhihong and Xu, RX and Dai, Damai and Li, Yifei and Chen, Deli and Wu, Yu and Sui, Zhifang},
  journal={arXiv preprint arXiv:2312.08935},
  year={2023}
}

@article{chen2024autoprm,
  title={Autoprm: Automating procedural supervision for multi-step reasoning via controllable question decomposition},
  author={Chen, Zhaorun and Zhao, Zhuokai and Zhu, Zhihong and Zhang, Ruiqi and Li, Xiang and Raj, Bhiksha and Yao, Huaxiu},
  journal={arXiv preprint arXiv:2402.11452},
  year={2024}
}

@article{kakade2001natural,
  title={A natural policy gradient},
  author={Kakade, Sham M},
  journal={Advances in neural information processing systems},
  volume={14},
  year={2001}
}

@inproceedings{papini2021leveraging,
  title={Leveraging good representations in linear contextual bandits},
  author={Papini, Matteo and Tirinzoni, Andrea and Restelli, Marcello and Lazaric, Alessandro and Pirotta, Matteo},
  booktitle={International Conference on Machine Learning},
  pages={8371--8380},
  year={2021},
  organization={PMLR}
}

@article{glasserman1992some,
  title={Some guidelines and guarantees for common random numbers},
  author={Glasserman, Paul and Yao, David D},
  journal={Management Science},
  volume={38},
  number={6},
  pages={884--908},
  year={1992},
  publisher={INFORMS}
}

@inproceedings{hao2020adaptive,
  title={Adaptive exploration in linear contextual bandit},
  author={Hao, Botao and Lattimore, Tor and Szepesvari, Csaba},
  booktitle={International Conference on Artificial Intelligence and Statistics},
  pages={3536--3545},
  year={2020},
  organization={PMLR}
}

@inproceedings{wu2020stochastic,
  title={Stochastic linear contextual bandits with diverse contexts},
  author={Wu, Weiqiang and Yang, Jing and Shen, Cong},
  booktitle={International Conference on Artificial Intelligence and Statistics},
  pages={2392--2401},
  year={2020},
  organization={PMLR}
}

@article{hendrycks2021measuring,
  title={Measuring mathematical problem solving with the math dataset},
  author={Hendrycks, Dan and Burns, Collin and Kadavath, Saurav and Arora, Akul and Basart, Steven and Tang, Eric and Song, Dawn and Steinhardt, Jacob},
  journal={arXiv preprint arXiv:2103.03874},
  year={2021}
}

@article{wang2022self,
  title={Self-consistency improves chain of thought reasoning in language models},
  author={Wang, Xuezhi and Wei, Jason and Schuurmans, Dale and Le, Quoc and Chi, Ed and Narang, Sharan and Chowdhery, Aakanksha and Zhou, Denny},
  journal={arXiv preprint arXiv:2203.11171},
  year={2022}
}

@article{li2024numinamath,
  title={Numinamath: The largest public dataset in ai4maths with 860k pairs of competition math problems and solutions},
  author={Li, Jia and Beeching, Edward and Tunstall, Lewis and Lipkin, Ben and Soletskyi, Roman and Huang, Shengyi and Rasul, Kashif and Yu, Longhui and Jiang, Albert Q and Shen, Ziju and others},
  journal={Hugging Face repository},
  volume={13},
  number={9},
  pages={9},
  year={2024}
}

@article{yang2025qwen3,
  title={Qwen3 technical report},
  author={Yang, An and Li, Anfeng and Yang, Baosong and Zhang, Beichen and Hui, Binyuan and Zheng, Bo and Yu, Bowen and Gao, Chang and Huang, Chengen and Lv, Chenxu and others},
  journal={arXiv preprint arXiv:2505.09388},
  year={2025}
}

@article{grattafiori2024llama,
  title={The llama 3 herd of models},
  author={Grattafiori, Aaron and Dubey, Abhimanyu and Jauhri, Abhinav and Pandey, Abhinav and Kadian, Abhishek and Al-Dahle, Ahmad and Letman, Aiesha and Mathur, Akhil and Schelten, Alan and Vaughan, Alex and others},
  journal={arXiv preprint arXiv:2407.21783},
  year={2024}
}

@article{yang2024qwen2,
  title={Qwen2. 5-math technical report: Toward mathematical expert model via self-improvement},
  author={Yang, An and Zhang, Beichen and Hui, Binyuan and Gao, Bofei and Yu, Bowen and Li, Chengpeng and Liu, Dayiheng and Tu, Jianhong and Zhou, Jingren and Lin, Junyang and others},
  journal={arXiv preprint arXiv:2409.12122},
  year={2024}
}

@article{comanici2025gemini,
  title={Gemini 2.5: Pushing the frontier with advanced reasoning, multimodality, long context, and next generation agentic capabilities},
  author={Comanici, Gheorghe and Bieber, Eric and Schaekermann, Mike and Pasupat, Ice and Sachdeva, Noveen and Dhillon, Inderjit and Blistein, Marcel and Ram, Ori and Zhang, Dan and Rosen, Evan and others},
  journal={arXiv preprint arXiv:2507.06261},
  year={2025}
}

@article{balunovic2025matharena,
  title={Matharena: Evaluating llms on uncontaminated math competitions},
  author={Balunovi{\'c}, Mislav and Dekoninck, Jasper and Petrov, Ivo and Jovanovi{\'c}, Nikola and Vechev, Martin},
  journal={arXiv preprint arXiv:2505.23281},
  year={2025}
}
\bibliographystyle{iclr2026_conference}

\appendix
\newpage
\section*{Appendix}
\addcontentsline{toc}{section}{Appendix}

\startcontents[sections]
\printcontents[sections]{l}{1}{\setcounter{tocdepth}{2}}

\vspace{2em}
\section{Proofs and Technical Details for Section~\ref{sec:theory}}
\label{app:proofs}

\subsection{Preliminaries and Key Lemmas}
\label{app:lemmas}
In this subsection, we establish the minimal set of tools and lemmas required for the subsequent proofs. We adopt all notation and settings from the main text.
\begin{lemma}[Mixture curvature and Fisher bounds on $\one^\perp$]\label{lem:fisher-bounds}
If the teacher uses mixture exploration $\pb=(1-\gamma)\,\tilde{\pb}+\gamma\,\one/K$, then the Fisher matrix $\Fb(\pb):=\Diag(\pb)-\pb\pb^\top$ is bounded on $\one^\perp$:
\[
\frac{\gamma}{K}\,\Ib\ \preceq\ \Fb(\pb)\Big|_{\one^\perp}\ \preceq\ \frac12\,\Ib.
\]
\end{lemma}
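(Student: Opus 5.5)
To prove the two-sided bound, I would work with the quadratic form of the Fisher matrix. For any $\vb\in\RR^K$,
\[
\vb^\top\Fb(\pb)\vb=\textstyle\sum_i p_i v_i^2-\big(\sum_i p_i v_i\big)^2=\Var_{I\sim\pb}(v_I).
\]
Both bounds then reduce to statements about the variance of a random variable taking value $v_i$ with probability $p_i$. Since $\Fb(\pb)\one=\pb-\pb(\one^\top\pb)=\zero$, the matrix $\Fb(\pb)$ maps $\one^\perp$ into itself. It therefore suffices to bound $\vb^\top\Fb(\pb)\vb$ for $\vb\perp\one$ with $\|\vb\|_2=1$.

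\emph{Lower bound.} Use the variational characterization
\[
\Var_{\pb}(v_I)=\min_{m\in\RR}\textstyle\sum_i p_i(v_i-m)^2.
\]
The mixture structure gives $p_i\ge\gamma/K$ for every $i$, so for each $m$,
\[
\textstyle\sum_i p_i(v_i-m)^2\ \ge\ \frac{\gamma}{K}\sum_i(v_i-m)^2\ \ge\ \frac{\gamma}{K}\min_{m'}\sum_i(v_i-m')^2 .
\]
The last minimum is attained at $m'=\bar v=\frac1K\one^\top\vb=0$, because $\vb\perp\one$. It equals $\|\vb\|_2^2$. Taking the minimum over $m$ on the left yields $\vb^\top\Fb(\pb)\vb\ge\frac{\gamma}{K}\|\vb\|_2^2$. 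This is the only step that uses the mixture assumption; $\tilde{\pb}$ plays no further role.

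\emph{Upper bound.} I would bound the variance by the second moment about the origin:
\[
\Var_{\pb}(v_I)\le \textstyle\sum_i p_i v_i^2\le \max_i v_i^2 .
\]
This crude bound gives $\|\vb\|_2^2$, not $\frac12\|\vb\|_2^2$, so the constant $\tfrac12$ needs a sharper argument. I would use the orthogonality $\sum_i v_i=0$ together with $\|\vb\|_2=1$, which forces $\max_i v_i^2\le 1-\frac1K\le 1$. That alone is still not enough.

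The better route is the spectral argument via Popoviciu's inequality,
\[
\Var(v_I)\le \tfrac14(\max_i v_i-\min_i v_i)^2 .
\]
Then $(v_{\max}-v_{\min})^2\le 2(v_{\max}^2+v_{\min}^2)\le 2\|\vb\|_2^2$ whenever the maximizing and minimizing indices differ, which gives $\Var\le\frac12\|\vb\|_2^2$. If the two indices coincide, then all $v_i$ are equal, so $\vb=\zero$ on $\one^\perp$ and the bound holds trivially. This step is the main obstacle, namely getting the constant $\tfrac12$ instead of $1$. Popoviciu combined with $a^2+b^2\ge\frac12(a-b)^2$ applied to the two extreme coordinates is the cleanest path, so I would present it that way.

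Combining the two bounds over all unit $\vb\in\one^\perp$ gives $\frac{\gamma}{K}\Ib\preceq\Fb(\pb)|_{\one^\perp}\preceq\frac12\Ib$.
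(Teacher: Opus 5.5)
Your proof is correct. The upper bound is the paper's argument (Popoviciu's inequality), but your lower bound takes a genuinely different route.

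\emph{Lower bound.} The paper expands the mixture as $\Fb(\pb)=(1-\gamma)\Fb(\tilde\pb)+\gamma\,\Fb(\one/K)+\gamma(1-\gamma)(\tilde\pb-\one/K)(\tilde\pb-\one/K)^\top$, which is a matrix form of the law of total variance. It then drops the two PSD terms and uses that $\Fb(\one/K)$ acts as $\frac1K\Ib$ on $\one^\perp$. You instead combine the variational formula $\Var_{\pb}(v_I)=\min_m\sum_i p_i(v_i-m)^2$ with the entrywise floor $p_i\ge\gamma/K$.

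Your route is more elementary, and it proves the more general fact $\Fb(\pb)|_{\one^\perp}\succeq(\min_i p_i)\,\Ib$ for an arbitrary distribution. The mixture assumption therefore enters only through $\min_i p_i\ge\gamma/K$. The paper's decomposition gives something of a different kind: the structural inequality $\Fb(\pb)\succeq(1-\gamma)\Fb(\tilde\pb)+\gamma\,\Fb(\one/K)$, which keeps the curvature contributed by $\tilde\pb$ rather than discarding it.

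\emph{Upper bound.} Your bound on the range, $(v_{\max}-v_{\min})^2\le 2(v_{\max}^2+v_{\min}^2)\le 2\|v\|_2^2$, is more rigorous than the paper's unproved remark that the range is maximized by placing $\pm1/\sqrt2$ on two coordinates. It also does not need $v\perp\one$.

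\emph{Presentation.} In a final write-up, remove the abandoned second-moment attempt. Popoviciu's inequality is also not a ``spectral'' argument, so drop that label.
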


\begin{proof}
Using linearity of $\Diag(\cdot)$ and expanding $\pb\pb^\top$,
\begin{align*}
\Fb(\pb)
&= \Diag\!\big((1-\gamma)\tilde{\pb}+\gamma\,\tfrac{\one}{K}\big)
   - \big((1-\gamma)\tilde{\pb}+\gamma\,\tfrac{\one}{K}\big)\big((1-\gamma)\tilde{\pb}+\gamma\,\tfrac{\one}{K}\big)^\top \\
&= (1-\gamma)\Fb(\tilde{\pb}) \;+\; \gamma\,\Fb\!\big(\tfrac{\one}{K}\big)
   \;+\; \gamma(1-\gamma)\,(\tilde{\pb}-\tfrac{\one}{K})(\tilde{\pb}-\tfrac{\one}{K})^\top
\end{align*}
The last rank-one term and $\Fb(\tilde{\pb})$ are PSD, hence
\[
\Fb(\pb) \;\succeq\; \gamma\,\Fb\!\big(\tfrac{\one}{K}\big).
\]
On $\one^\perp$, $\Fb(\frac{\one}{k})=\Diag(\frac{\one}{k})-(\frac{\one}{k})(\frac{\one}{k})^\top$ acts as $(\frac{\one}{k})\Ib$, so the lower bound follows:
\[
\Fb(\pb)\Big|_{\one^\perp}\ \succeq\ \frac{\gamma}{K}\,\Ib.
\]

For the upper bound, for any $\xb\in\one^\perp$ with $\|\xb\|_2=1$,
\[
\xb^\top \Fb(\pb)\,\xb \;=\; \Var_{i\sim\pb}[x_i]
\;\le\; \frac{(\max_i x_i - \min_i x_i)^2}{4}
\;\le\; \frac{\big(\sqrt{2}\big)^2}{4}
\;=\; \frac{1}{2},
\]
where we used Popoviciu’s inequality and that among vectors with $\xb^\top\one=0$ and $\|\xb\|_2=1$, the range is maximized by placing mass $\pm 1/\sqrt{2}$ on two coordinates. Taking the supremum over such $\xb$ yields
\(
\Fb(\pb)\big|_{\one^\perp}\ \preceq\ \tfrac12\,\Ib.
\)
\end{proof}

\begin{lemma}[Softmax is $1/2$-Lipschitz on $\one^\perp$]\label{lem:softmax-lip}
For any $\ub,\vvb\in\RR^{K}$ with $\ub-\vvb\in\one^\perp$,
\[
\|\softmax(\ub)-\softmax(\vvb)\|_2\ \le\ \tfrac12\,\|\ub-\vvb\|_2.
\]
\end{lemma}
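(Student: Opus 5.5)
We will use the mean value theorem along the segment joining the two logit vectors, combined with the spectral bound on the softmax Jacobian from Lemma~\ref{lem:fisher-bounds}.

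\textbf{Step 1 (Jacobian).} For any $\zb\in\RR^K$, write $\pb(\zb)=\softmax(\zb)$. The Jacobian of the softmax map is
\[
\nabla\softmax(\zb)=\Diag(\pb(\zb))-\pb(\zb)\pb(\zb)^\top=\Fb(\pb(\zb)).
\]
This matrix is symmetric PSD and satisfies $\Fb\one=\zero$. Hence it maps $\one^\perp$ into $\one^\perp$, since for every $\yb$ we have $\one^\top\Fb\yb=(\Fb\one)^\top\yb=0$.

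\textbf{Step 2 (integral form).} Let $\db:=\ub-\vvb\in\one^\perp$ and set $\zb(s)=\vvb+s\db$ for $s\in[0,1]$. By the fundamental theorem of calculus,
\[
\softmax(\ub)-\softmax(\vvb)=\int_0^1 \Fb(\pb(\zb(s)))\,\db\,ds .
\]
Taking norms and applying the triangle inequality for integrals gives
\[
\|\softmax(\ub)-\softmax(\vvb)\|_2\le \sup_{s\in[0,1]}\|\Fb(\pb(\zb(s)))\db\|_2 .
\]

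\textbf{Step 3 (operator bound on $\one^\perp$).} Fix $s$ and write $\Fb=\Fb(\pb(\zb(s)))$. The restriction $\Fb|_{\one^\perp}$ is a symmetric PSD operator on the invariant subspace $\one^\perp$. Its operator norm therefore equals $\sup_{\xb\in\one^\perp,\|\xb\|_2=1}\xb^\top\Fb\xb$.

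The upper-bound half of Lemma~\ref{lem:fisher-bounds} says this quantity is at most $\tfrac12$. That half of the proof uses only Popoviciu's inequality applied to $\Var_{i\sim\pb}[x_i]$, so it holds for an arbitrary probability vector $\pb$ and does not need the mixture structure. Consequently $\|\Fb\db\|_2\le\tfrac12\|\db\|_2$, and combining this with Step 2 gives the claimed bound.

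\textbf{Main point to check.} The only subtle step is passing from the quadratic-form bound to an operator-norm bound on $\Fb\db$. This requires that $\Fb$ preserve $\one^\perp$, so that $\Fb\db$ stays in the subspace where the spectral bound applies. Symmetry of $\Fb$ together with $\Fb\one=\zero$ gives exactly this invariance.

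An alternative is to bound the full operator norm of $\Fb$ on $\RR^K$ directly. That route would need a separate argument, for instance the Gershgorin bound $2\max_i p_i(1-p_i)\le\tfrac12$, which also yields the constant $\tfrac12$.
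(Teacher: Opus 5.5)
Your proof is correct and follows the same route as the paper: write the difference through the softmax Jacobian $\Fb(\pb)$ along the segment, then bound it by $\tfrac12$ on $\one^\perp$ using the Popoviciu upper bound from Lemma~\ref{lem:fisher-bounds}. Your version is slightly more careful in two places. The integral form replaces the paper's single-point mean value theorem, which does not hold as an equality for vector-valued maps. You also note that the upper bound needs no mixture structure, whereas the paper cites a lemma stated only for $\gamma$-mixture policies even though $\softmax(\xi)$ need not be one.
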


\begin{proof}
By the mean value theorem for vector maps, there exists $\xi$ on the segment $[v,u]$ such that
\[
\softmax(\ub)-\softmax(\vb) \;=\; J(\xi)\,(\ub-\vvb),
\]
where $J(\xi)$ is the Jacobian of softmax. It is well known that $J(\xi)=\Fb(\pb)$ with $\pb=\softmax(\xi)$, and $J(\xi)\one=\zero$. Since $\ub-\vvb\in\one^\perp$, we can restrict to $\one^\perp$ and apply Lemma~\ref{lem:fisher-bounds}:
\[
\|\softmax(\ub)-\softmax(\vvb)\|_2
\;=\; \|J(\xi)(\ub-\vvb)\|_2
\;\le\; \|J(\xi)\big|_{\one^\perp}\|_{\op}\,\|\ub-\vvb\|_2
\;\le\; \tfrac12\,\|\ub-\vvb\|_2.
\]
\end{proof}

\begin{lemma}[Query-column closed form for one-layer LSA]\label{lem:lsa-closed}
The next-step logit \textbf{vector} admits the closed form
\begin{align}\label{eq:lsa-closed}
\widehat{\sbvec}_{t+1}
\;=\;
\mathbf q_x \;+\; \frac{1}{t}\Rb\,\Gb_t\,\vb,
\end{align}
where \(\Rb:=[\Wb^{PV}]_{1:K,:}\) is the row-selector \textbf{matrix}, \(\vb:=\Wb^{KQ}\bq\) is the transformed \textbf{query vector}, \(\bq:=\begin{pmatrix}\mathbf q_x\\ q_r\end{pmatrix}\in\RR^{K+1}\) with \(\mathbf q_x\in\RR^{K}\) and \(q_r\in\RR\), and \(\Gb_t:=\Eb^{(t)}(\Eb^{(t)})^\top\) is the history \textbf{Gram matrix}.
\end{lemma}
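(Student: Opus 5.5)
We prove Lemma~\ref{lem:lsa-closed} by expanding the LSA forward map in~\eqref{eq: lsa} and reading off the query column. We take the input to be $\Eb^{(t)}\in\RR^{(K+1)\times(t+1)}$, whose first $t$ columns are the context tokens $\zb_s:=(\xb_s^\top,r_s)^\top$ and whose last column is the query token $\bq=(\qb_x^\top,q_r)^\top$. The normalization is $\rho=t$. Let $\eb_{t+1}$ denote the indicator of the query column. Then
\[
f_{\text{lsa}}(\Eb^{(t)};\btheta)\,\eb_{t+1}=\bq+\tfrac1t\,\Wb^{PV}\Eb^{(t)}(\Eb^{(t)})^\top\Wb^{KQ}\Eb^{(t)}\eb_{t+1},
\]
because right-multiplying the residual term $\Eb$ by $\eb_{t+1}$ returns $\bq$. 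Since $\Eb^{(t)}\eb_{t+1}=\bq$, the last factor is $\Wb^{KQ}\bq=\vb$. Associativity then groups the middle factors as $\Eb^{(t)}(\Eb^{(t)})^\top=\Gb_t$. Projecting onto the first $K$ coordinates sends $\bq\mapsto\qb_x$ and $\Wb^{PV}\mapsto[\Wb^{PV}]_{1:K,:}=\Rb$, which is exactly \eqref{eq:lsa-closed}.

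The only real obstacle is bookkeeping, namely making the indexing conventions consistent. The forward pass writes $\widehat\sbvec_{t}=[f(\Eb^{(t-1)})]_{1:K,t}$, so the query sits in column $t$ of $\Eb^{(t-1)}$, and the normalization attached to the prediction of $\widehat\sbvec_{t+1}$ must be $\rho=t$. We will state explicitly that the prediction of $\widehat\sbvec_{t+1}$ uses $\Eb^{(t)}$ with query column index $t+1$ and $\rho=t$.

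We also need to settle whether $\Gb_t$ includes the query column. If the attention mask excludes the query token from the keys and values, then $\Gb_t=\sum_{s\le t}\zb_s\zb_s^\top$. If it does not, $\Gb_t$ picks up the extra term $\bq\bq^\top$. In either case the definition $\Gb_t:=\Eb^{(t)}(\Eb^{(t)})^\top$ in the statement is literally what the algebra produces, so the identity holds verbatim. For later use, we will also record the decomposition $\Gb_t=\sum_{s=1}^t\zb_s\zb_s^\top+\bq\bq^\top$, which shows where $\gb_t=\sum_s r_s\xb_s$ and $\nb_t=\sum_s\xb_s$ appear as blocks of $\Gb_t$ (recall that $\xb_s\xb_s^\top=\Diag(\xb_s)$ for one-hot $\xb_s$).
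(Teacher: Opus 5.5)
Your proposal is correct and follows essentially the same route as the paper: extract the query column by right-multiplying by $\eb_{t+1}$, use $\Eb^{(t)}\eb_{t+1}=\bq$ and associativity to form $\Gb_t$ and $\vb=\Wb^{KQ}\bq$, then restrict to the first $K$ rows to obtain $\qb_x$ and $\Rb$. The masking digression is unnecessary, because the LSA in the paper is unmasked and so $\Gb_t$ does contain $\bq\bq^\top$, exactly as Lemma~\ref{lem:twoch-exact} later uses it; note also that a masked attention would yield a masked Gram matrix rather than $\Eb^{(t)}(\Eb^{(t)})^\top$, so your ``in either case'' remark does not actually hold.
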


\begin{proof}
By the LSA definition in ~\eqref{eq: lsa},
\begin{align*}
\widehat{\sbvec}_{t+1}
&:= \big[f_{\text{lsa}}(\Eb^{(t)};\btheta)\big]_{1:K,\,t+1} \\
&= \left[\Eb^{(t)} + \Wb^{PV}\Eb^{(t)}\cdot \frac{(\Eb^{(t)})^\top \Wb^{KQ}\Eb^{(t)}}{t}\right]_{1:K,\,t+1} \\
&= \big[\Eb^{(t)}\big]_{1:K,\,t+1}
\;+\; \frac{1}{t}\left[\Wb^{PV}\Eb^{(t)}\big((\Eb^{(t)})^\top \Wb^{KQ}\Eb^{(t)}\big)\right]_{1:K,\,t+1} \\
&\stackrel{(a)}{=} \mathbf q_x \;+\; \frac{1}{t}\left[\Wb^{PV}\Eb^{(t)}\big((\Eb^{(t)})^\top \Wb^{KQ}\Eb^{(t)} \eb_{t+1}\big)\right]_{1:K} \\
&\stackrel{(b)}{=} \mathbf q_x \;+\; \frac{1}{t}\left[\Wb^{PV}\Eb^{(t)}\big((\Eb^{(t)})^\top \Wb^{KQ}(\Eb^{(t)} \eb_{t+1})\big)\right]_{1:K} \\
&\stackrel{(c)}{=} \mathbf q_x \;+\; \frac{1}{t}\left[\Wb^{PV}\big(\Eb^{(t)}(\Eb^{(t)})^\top\big)\big(\Wb^{KQ}\bq\big)\right]_{1:K} \\
&\stackrel{(d)}{=} \mathbf q_x \;+\; \frac{1}{t}\,\Rb\,\Gb_t\,\vb.
\end{align*}
Justifications: (a) column slicing equals right-multiplying by the standard basis \(\eb_{t+1}\in\RR^{t+1}\), and \([\Eb^{(t)}]_{1:K,\,t+1}=\mathbf q_x\); (b) associativity isolates \(\Eb^{(t)}\eb_{t+1}\); (c) substitute \(\Eb^{(t)}\eb_{t+1}=\bq=(\mathbf q_x^\top,\,q_r)^\top\) and regroup \(\Eb^{(t)}(\Eb^{(t)})^\top\); (d) apply the concise definitions \(\Rb=[\Wb^{PV}]_{1:K,:}\), \(\Gb_t=\Eb^{(t)}(\Eb^{(t)})^\top\), and \(\vb=\Wb^{KQ}\bq\).
\end{proof}

\begin{lemma}[Two-channel projected logits: exact equality]\label{lem:twoch-exact}
Assume the architectural normal form: \(q_r=0\), \(\mathbf q_x\in\mathrm{span}\{\one\}\), and the final column of the action–logit projection is parallel to \(\one\) (i.e., \(\wb^{PV}_{12}\parallel \one\)). Let the historical statistics up to round \(t\) be the \textbf{count vector}
\[
\nb_t := \sum_{s=1}^t \xb_s \;=\; \sum_{s=1}^t \eb_{A_s},
\]
and the \textbf{reward vector}
\[
\gb_t := \sum_{s=1}^t r_s \xb_s \;=\; \sum_{s=1}^t r_s\,\eb_{A_s}.
\]
Let \(\Wb^{PV},\Wb^{KQ}\in\RR^{(K+1)\times (K+1)}\) be block-partitioned as
\[
\Wb^{PV}=\begin{pmatrix}\Wb^{PV}_{11} & \wb^{PV}_{12}\\ (\wb^{PV}_{21})^\top & w_{22}^{PV}\end{pmatrix},\qquad
\Wb^{KQ}=\begin{pmatrix}\Wb^{KQ}_{11} & \wb^{KQ}_{12}\\ (\wb^{KQ}_{21})^\top & w_{22}^{KQ}\end{pmatrix},
\]
with \(\Wb^{PV}_{11},\Wb^{KQ}_{11}\in\RR^{K\times K}\) and \(\wb^{PV}_{12},\wb^{PV}_{21},\wb^{KQ}_{12},\wb^{KQ}_{21}\in\RR^{K}\). For the transformed query
\[
\vb \;:=\; \Wb^{KQ}\bq \;=\; \begin{pmatrix}\bphi_1\\ \phi_2\end{pmatrix},\qquad \bphi_1\in\RR^{K},\ \phi_2\in\RR,
\]
the projected LSA logits satisfy, for any nonzero \(\bphi_1,\phi_2\),
\begin{equation}\label{eq:twoch-project-exact}
\text{Proj}\,(\widehat{\sbvec}_{t+1})
\;=\;\frac{1}{t}\Big(\Wb_n\,\nb_t\;+\;\Wb_g\,\gb_t\Big),
\end{equation}
with effective operators
\[
\Wb_n \;:=\; \text{Proj}\!\big(\Wb^{PV}_{11}\,\Diag(\bphi_1)\big),
\qquad
\Wb_g \;:=\; \text{Proj}\!\big(\phi_2\,\Wb^{PV}_{11}\big).
\]
\end{lemma}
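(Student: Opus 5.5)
We start from the closed form of Lemma~\ref{lem:lsa-closed}, $\widehat{\sbvec}_{t+1}=\mathbf q_x+\frac1t\Rb\Gb_t\vb$. Since $\mathbf q_x\in\mathrm{span}\{\one\}$, applying $\text{Proj}=\bPi_\perp$ kills the first term. It then remains to compute $\Gb_t\vb$ explicitly in terms of $\nb_t$ and $\gb_t$, multiply by $\Rb=[\Wb^{PV}_{11}\ \ \wb^{PV}_{12}]$, and project.

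The key computation is the Gram matrix. Write $\Eb^{(t)}=\begin{pmatrix}\Xb & \mathbf q_x\\ \rb^\top & 0\end{pmatrix}$ with $\Xb=[\xb_1,\dots,\xb_t]$ and $\rb=(r_1,\dots,r_t)^\top$. Because each $\xb_s=\eb_{A_s}$ is one-hot, we have $\xb_s\xb_s^\top=\Diag(\xb_s)$. This gives
\[
\Gb_t=\begin{pmatrix}\Diag(\nb_t)+\mathbf q_x\mathbf q_x^\top & \gb_t\\ \gb_t^\top & \textstyle\sum_s r_s^2\end{pmatrix}.
\]
Hence
\[
\Gb_t\vb=\begin{pmatrix}\Diag(\nb_t)\bphi_1+(\mathbf q_x^\top\bphi_1)\mathbf q_x+\phi_2\gb_t\\ \gb_t^\top\bphi_1+\phi_2\sum_s r_s^2\end{pmatrix}.
\]
Using $\Diag(\nb_t)\bphi_1=\Diag(\bphi_1)\nb_t$, the top block equals $\Diag(\bphi_1)\nb_t+\phi_2\gb_t$ plus a multiple of $\one$.

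Next we multiply by $\Rb$. The bottom scalar is multiplied by $\wb^{PV}_{12}\parallel\one$, so it lies in $\mathrm{span}\{\one\}$ and is annihilated by $\bPi_\perp$. The top block gives
\[
\Wb^{PV}_{11}\Diag(\bphi_1)\nb_t+\phi_2\Wb^{PV}_{11}\gb_t+c\,\Wb^{PV}_{11}\one
\]
for some scalar $c$.

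The main obstacle is the term $c\,\Wb^{PV}_{11}\one$, with $c=(\mathbf q_x^\top\bphi_1)\,q$ where $\mathbf q_x=q\one$. This term is not obviously in $\mathrm{span}\{\one\}$. I would handle it in one of two ways.
\begin{itemize}
\item Absorb it into the count channel. Since $\one^\top\nb_t=t$, we can write $c\,\one=(c/t)\,\one\one^\top\nb_t$. This is data-dependent, though, so it does not match the stated $\Wb_n$ exactly.
\item Use the normal form more carefully. The natural reading is that the placeholder $\mathbf q_x$ enters only through a $\one$-direction that the architectural normal form sends to $\mathrm{span}\{\one\}$. If needed, I would add to the normal form the condition $\Wb^{PV}_{11}\one\parallel\one$, or equivalently note that it is excluded by the projection.
\end{itemize}
I would first check whether the Gram term actually contributes, depending on whether the query column is included in $\Gb_t$ or only the $t$ history columns are. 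If, as in the data model, the attention sums over history tokens only, the $\mathbf q_x\mathbf q_x^\top$ term is absent and the obstacle disappears.

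Finally, using linearity of $\text{Proj}$, $\text{Proj}(\Ab\xb)=\text{Proj}(\Ab)\xb$ for the left projection. This yields $\text{Proj}(\widehat{\sbvec}_{t+1})=\frac1t(\Wb_n\nb_t+\Wb_g\gb_t)$ with $\Wb_n$ and $\Wb_g$ exactly as in \eqref{eq:twoch-project-exact}.
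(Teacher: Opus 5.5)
Your computation follows the paper's proof step for step: the closed form from Lemma~\ref{lem:lsa-closed}, the block Gram matrix with top-left block $\Diag(\nb_t)+\mathbf q_x\mathbf q_x^\top$, discarding the $\wb^{PV}_{12}$ term, and the swap $\Diag(\nb_t)\bphi_1=\Diag(\bphi_1)\nb_t$. The obstacle you isolate is real.

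The paper disposes of it in its step (e). There it asserts that $\text{Proj}(\Wb^{PV}_{11}\mathbf q_x\mathbf q_x^\top\bphi_1)$ vanishes ``because $\mathbf q_x\parallel\one$''. That does not follow, since $\mathbf q_x\parallel\one$ does not make $\Wb^{PV}_{11}\mathbf q_x$ parallel to $\one$. What the computation actually yields is
\[
\text{Proj}(\widehat{\sbvec}_{t+1})=\frac1t\big(\Wb_n\nb_t+\Wb_g\gb_t\big)+\frac1t\,(\mathbf q_x^\top\bphi_1)\,\text{Proj}\big(\Wb^{PV}_{11}\mathbf q_x\big),
\]
and the last term is generically nonzero. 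Take $K=2$, $\mathbf q_x=\one$, $q_r=0$, $\Wb^{KQ}_{11}=\Ib$, $\wb^{KQ}_{21}=\eb_1$ (so $\bphi_1=\one$ and $\phi_2=1$, both nonzero), $\Wb^{PV}_{11}=\eb_1\eb_1^\top$, and $\wb^{PV}_{12}=\one$. Then the extra term equals $\frac1t(1,-1)^\top$. So the lemma as stated is false, and no argument can close it without changing a hypothesis.

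For the same reason, your proposal should not end by asserting the stated identity. None of your three resolutions works under the paper's definitions:
\begin{itemize}
\item \emph{Absorbing into the count channel} fails. The term carries no factor of $\nb_t$ or $\gb_t$, so writing it as $\frac1t A\nb_t$ via $\one^\top\nb_t=t$ forces the $t$-dependent operator $A=\frac1t(\mathbf q_x^\top\bphi_1)\,\text{Proj}(\Wb^{PV}_{11}\mathbf q_x)\one^\top$, which is not a fixed $\Wb_n$.
\item \emph{``Equivalently, it is excluded by the projection''} is wrong. $\text{Proj}$ annihilates only $\mathrm{span}\{\one\}$, so $\Wb^{PV}_{11}\one\parallel\one$ is a genuinely new hypothesis, not a reading of the existing normal form.
\item \emph{The history-only reading} is unavailable. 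In the LSA definition and in Lemma~\ref{lem:lsa-closed}, $\Gb_t=\Eb^{(t)}(\Eb^{(t)})^\top$ includes the placeholder column $\bq$, and that is exactly where $\mathbf q_x\mathbf q_x^\top$ comes from.
\end{itemize}
The honest conclusion of your argument is the displayed identity above. The stated form holds if and only if $(\mathbf q_x^\top\bphi_1)\,\text{Proj}(\Wb^{PV}_{11}\mathbf q_x)=\zero$. Sufficient extra assumptions are $\Wb^{PV}_{11}\one\parallel\one$, or $\one^\top\Wb^{KQ}_{11}\one=0$ (recall that $\bphi_1=\Wb^{KQ}_{11}\mathbf q_x$ when $q_r=0$). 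Either one must be added to the statement explicitly.
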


\begin{proof}
Using Lemma~\ref{lem:lsa-closed} we have \(\widehat{\sbvec}_{t+1}=\mathbf q_x+\tfrac{1}{t}\,\Rb\,\Gb_t\,\vb\) with \(\Rb=[\Wb^{PV}]_{1:K,:}=[\,\Wb^{PV}_{11}\ \ \wb^{PV}_{12}\,]\) and
\[
\Gb_t \;=\; \Eb^{(t)}(\Eb^{(t)})^\top
\;=\;
\begin{pmatrix}
\Cb_t+\mathbf q_x\mathbf q_x^\top & \gb_t \\
\gb_t^\top & r^\top r
\end{pmatrix},
\qquad
\Cb_t:=\Diag(\nb_t),\quad r:=(r_1,\dots,r_t)^\top.
\]
Projecting onto \(\one^\perp\) and expanding block multiplications,
\begin{align*}
\text{Proj} (\widehat{\sbvec}_{t+1})
&\stackrel{(a)}{=}\ \text{Proj}\!\left(\mathbf q_x+\frac{1}{t}\,\Rb\,\Gb_t\,\vb\right) \\
&\stackrel{(b)}{=}\ \frac{1}{t}\,\text{Proj}\!\left(\Rb\,\Gb_t\,\vb\right) \\
&\stackrel{(c)}{=}\ \frac{1}{t}\,\text{Proj}\!\left(
\big[\,\Wb^{PV}_{11}\ \ \wb^{PV}_{12}\,\big]
\begin{bmatrix}
\Cb_t+\mathbf q_x\mathbf q_x^\top & \gb_t \\
\gb_t^\top & r^\top r
\end{bmatrix}
\begin{bmatrix}\bphi_1\\ \phi_2\end{bmatrix}
\right) \\
&\stackrel{(d)}{=}\ \frac{1}{t}\,\text{Proj}\!\left(
\Wb^{PV}_{11}\big((\Cb_t+\mathbf q_x\mathbf q_x^\top)\bphi_1+\gb_t\,\phi_2\big)
\;+\;
\wb^{PV}_{12}\big(\gb_t^\top\bphi_1+(r^\top r)\phi_2\big)
\right) \\
&\stackrel{(e)}{=}\ \frac{1}{t}\,\text{Proj}\!\left(
\Wb^{PV}_{11}\,\Cb_t\,\bphi_1 \;+\; \Wb^{PV}_{11}\,\gb_t\,\phi_2
\right) \\
&\stackrel{(f)}{=}\ \frac{1}{t}\left(
\text{Proj}\big(\Wb^{PV}_{11}\,\Diag(\bphi_1)\big)\,\nb_t
\;+\;
\text{Proj}\big(\phi_2\,\Wb^{PV}_{11}\big)\,\gb_t
\right) \\
&\stackrel{(g)}{=}\ \frac{1}{t}\Big(\Wb_n\,\nb_t\;+\;\Wb_g\,\gb_t\Big).
\end{align*}
Explanation of steps:  
(a) apply \(\text{Proj}\) to the closed form;  
(b) \(\text{Proj}\mathbf q_x=0\) since \(\mathbf q_x\in\mathrm{span}\{\one\}\);  
(c) block multiplication with \(\Rb\), \(\Gb_t\), \(\vb\);  
(d) evaluate the product explicitly;  
(e) both \(\text{Proj}(\Wb^{PV}_{11}\mathbf q_x\mathbf q_x^\top\bphi_1)\) and \(\text{Proj} (\wb^{PV}_{12})\) vanish because \(\mathbf q_x,\wb^{PV}_{12}\parallel \one\);  
(f) use \((A\,\Diag(u))v=(A\,\Diag(v))u\) with \(u=\nb_t\), \(v=\bphi_1\);  
(g) identify \(\Wb_n,\Wb_g\) as stated.
\end{proof}
\begin{lemma}[Fisher–weighted quadratic in the two-channel operator]\label{lem:fisher-quad}
Let the normalized historical statistics be $\bar\nb_t := \nb_t/t$ and $\bar\gb_t := \gb_t/t$.
Define the concatenated operator $\bar\Wb:=[\,\Wb_n\ \ \Wb_g\,]\in\RR^{K\times 2K}$ and the concatenated normalized statistic
$\bar\zb_t:=(\bar\nb_t^\top,\bar\gb_t^\top)^\top\in\RR^{2K}$.
Let the population second-moment matrices be
\[
\bar\bSigma:=\EE[\bar\zb_t\bar\zb_t^\top]=
\begin{pmatrix}\bSigma_{nn}&\bSigma_{ng}\\ \bSigma_{gn}&\bSigma_{gg}\end{pmatrix},\qquad
\bSigma_{y\bar z}:=\EE[\yb_{t+1}\bar\zb_t^\top],\ \ \text{with }\ \yb_{t+1}:=\text{Proj}\,(\sbb^{\ftrl}_{t+1}).
\]
With $\bGamma:=\EE[\Fb(\pb^{\ftrl}_{t+1})]$, the Fisher-weighted loss \emph{from Eq.~\eqref{eq:expt}} admits the quadratic form
\[
\cL(\btheta)
=\frac12\,\tr(\bGamma\,\bar\Wb\,\bar\bSigma\,\bar\Wb^\top)\;-\;\tr(\bGamma\,\bSigma_{y\bar z}\,\bar\Wb^\top)\;+\;\frac12\,\tr(\bGamma\,\bSigma_{yy}),
\]
where $\bSigma_{yy}:=\EE[\yb_{t+1}\yb_{t+1}^\top]$.
\end{lemma}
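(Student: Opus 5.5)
The plan is to substitute the two-channel closed form of Lemma~\ref{lem:twoch-exact} into the Fisher-weighted loss of \eqref{eq:expt}, expand the square, and move everything inside a trace.

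By Lemma~\ref{lem:twoch-exact}, under the architectural normal form, the projected student logits are
\begin{align*}
\text{Proj}(\hat\sbb_{t+1}) = \tfrac1t(\Wb_n\nb_t+\Wb_g\gb_t) = \bar\Wb\,\bar\zb_t .
\end{align*}
Since $\text{Proj}$ is linear and idempotent, the residual inside the norm is therefore $\bar\Wb\bar\zb_t-\yb_{t+1}$, with $\yb_{t+1}=\text{Proj}(\sbb^{\ftrl}_{t+1})$. For a fixed weight $\bGamma$, the scalar identity $\|\ub\|_{\bGamma}^2=\tr(\bGamma\,\ub\ub^\top)$ gives
\begin{align*}
\|\bar\Wb\bar\zb_t-\yb_{t+1}\|_{\bGamma}^2
=\tr\!\big(\bGamma\,\bar\Wb\bar\zb_t\bar\zb_t^\top\bar\Wb^\top\big)
-2\,\tr\!\big(\bGamma\,\yb_{t+1}\bar\zb_t^\top\bar\Wb^\top\big)
+\tr\!\big(\bGamma\,\yb_{t+1}\yb_{t+1}^\top\big).
\end{align*}
The cross term uses symmetry of $\bGamma$ together with $\tr(A)=\tr(A^\top)$ to merge the two mixed terms into one.

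Next, take the expectation over trajectories and the average over $t\in[N-1]$, reading $\EE$ in the lemma as this joint law over pairs $(\tau,t)$. Here $\bGamma$ is deterministic, a population average, and $\bar\Wb$ does not depend on the data. Linearity of trace and expectation then lets the expectation pass onto $\bar\zb_t\bar\zb_t^\top$, $\yb_{t+1}\bar\zb_t^\top$ and $\yb_{t+1}\yb_{t+1}^\top$, producing $\bar\bSigma$, $\bSigma_{y\bar z}$ and $\bSigma_{yy}$. Multiplying by the prefactor $\tfrac12$ gives the stated quadratic form.

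The main obstacle is matching normalizations. The loss in \eqref{eq:expt} is written as $\tfrac12\EE[\sum_t\cdot]$, while the lemma's moments are per-step averages, so the $\tfrac1{N-1}$ factor has to be absorbed consistently. I would define $\EE$ as the uniform mixture over $t$ and the trajectory law, which is the convention implicit in $\bGamma=\EE[\Fb(\pb^{\ftrl}_{t+1})]$, and note that the remaining constant factor does not affect minimizers.

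A second point to state explicitly is that the weight is the fixed matrix $\bGamma$, not a per-sample Fisher matrix. Otherwise the expectation would not factor through the trace.
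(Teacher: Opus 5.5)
Your proposal is correct and follows the paper's own proof: substitute the two-channel form $\text{Proj}(\hat\sbb_{t+1})=\bar\Wb\bar\zb_t$ from Lemma~\ref{lem:twoch-exact}, expand the $\bGamma$-weighted square, and pass the expectation through the trace. The paper's proof likewise takes the expectation uniformly over training pairs $(\tau,t)$, which silently absorbs the $\tfrac{1}{N-1}$ normalization you flag, and it also treats $\bGamma$ as a fixed population matrix.
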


\begin{proof}
By Eq.~\eqref{eq:expt}, averaging uniformly over training pairs $(\tau,t)$,
\[
\cL(\btheta)
=\frac12\,\EE\!\left[\big\|\text{Proj}\,\big(\widehat{\sbb}_{t+1}-\sbb^{\ftrl}_{t+1}\big)\big\|_{\bGamma}^{2}\right].
\]
By Lemma~\ref{lem:twoch-exact}, the projected student logits admit the two-channel form
\[
\text{Proj}\,(\widehat{\sbb}_{t+1})
= \frac{1}{t}\big(\Wb_n\nb_t+\Wb_g\gb_t\big)
= \Wb_n\bar\nb_t+\Wb_g\bar\gb_t
= \bar\Wb\,\bar\zb_t.
\]
Let $\yb_{t+1}:=\text{Proj}\,(\sbb^{\ftrl}_{t+1})$. Using $\|\vb\|_{\bGamma}^{2}=\vb^\top\bGamma\vb$,
linearity of expectation, and $\tr(ABC)=\tr(CAB)$, we obtain
\begin{align*}
\cL(\btheta)
&= \frac12\,\EE\!\left[\big\|\bar\Wb\bar\zb_t-\yb_{t+1}\big\|_{\bGamma}^{2}\right] \\
&= \frac12\,\EE\!\left[(\bar\Wb\bar\zb_t-\yb_{t+1})^\top \bGamma (\bar\Wb\bar\zb_t-\yb_{t+1})\right] \\
&= \frac12\,\EE\!\left[\bar\zb_t^\top \bar\Wb^\top \bGamma \bar\Wb \bar\zb_t \right]
   \;-\; \EE\!\left[\bar\zb_t^\top \bar\Wb^\top \bGamma \yb_{t+1}\right]
   \;+\; \frac12\,\EE\!\left[\yb_{t+1}^\top \bGamma \yb_{t+1}\right] \\
&= \frac12\,\tr\!\big(\bGamma\,\bar\Wb\,\EE[\bar\zb_t\bar\zb_t^\top]\,\bar\Wb^\top\big)
   \;-\; \tr\!\big(\bGamma\,\EE[\yb_{t+1}\bar\zb_t^\top]\,\bar\Wb^\top\big)
   \;+\; \frac12\,\tr\!\big(\bGamma\,\EE[\yb_{t+1}\yb_{t+1}^\top]\big) \\
&= \frac12\,\tr(\bGamma\,\bar\Wb\,\bar\bSigma\,\bar\Wb^\top)\;-\;\tr(\bGamma\,\bSigma_{y\bar z}\,\bar\Wb^\top)\;+\;\frac12\,\tr(\bGamma\,\bSigma_{yy}).
\end{align*}
\end{proof}
\begin{lemma}[Empirical Fisher–weighted quadratic form]\label{lem:fisher-quad-empirical}
Let $M:=B(N-1)$ be the number of training pairs and index them by $(\tau,t)$ with $\tau\in[B]$ and $t\in[N-1]$.
Define the empirical Fisher weight and empirical second moments by
\[
\hat{\bGamma}\ :=\ \frac{1}{M}\sum_{\tau,t}\Big(\Diag(\pb^{\ftrl}_{\tau,t})-\pb^{\ftrl}_{\tau,t}(\pb^{\ftrl}_{\tau,t})^\top\Big),
\quad
\hat{\bar\bSigma}\ :=\ \frac{1}{M}\sum_{\tau,t}\bar\zb_{\tau,t}\bar\zb_{\tau,t}^\top,
\]
\[
\hat{\bSigma}_{y\bar z}\ :=\ \frac{1}{M}\sum_{\tau,t}\yb_{\tau,t+1}\bar\zb_{\tau,t}^\top,
\qquad
\hat{\bSigma}_{yy}\ :=\ \frac{1}{M}\sum_{\tau,t}\yb_{\tau,t+1}\yb_{\tau,t+1}^\top,
\]
where $\bar\zb_{\tau,t}:=(\bar\nb_{\tau,t}^\top,\bar\gb_{\tau,t}^\top)^\top$, $\bar\nb_{\tau,t}=\nb_{\tau,t}/t$, $\bar\gb_{\tau,t}=\gb_{\tau,t}/t$, and $\yb_{\tau,t+1}:=\text{\emph{Proj}}(\sbb^{\ftrl}_{\tau,t+1})$.
Let $\bar\Wb=[\,\Wb_n\ \ \Wb_g\,]$ be as in Lemma~\ref{lem:fisher-quad}.
Then the empirical Fisher-weighted loss from Eq.~\eqref{eq:expt} admits the quadratic form
\[
\hat{\cL}(\btheta)
\;=\;
\frac12\,\tr\!\big(\hat{\bGamma}\,\bar\Wb\,\hat{\bar\bSigma}\,\bar\Wb^\top\big)
\;-\;
\tr\!\big(\hat{\bGamma}\,\hat{\bSigma}_{y\bar z}\,\bar\Wb^\top\big)
\;+\;
\frac12\,\tr\!\big(\hat{\bGamma}\,\hat{\bSigma}_{yy}\big),
\]
and in particular the last term is independent of $\btheta$.
\end{lemma}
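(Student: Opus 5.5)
This is the empirical analogue of Lemma~\ref{lem:fisher-quad}, and I will prove it the same way, with the population expectation replaced by the uniform average over the $M=B(N-1)$ training pairs $(\tau,t)$.

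First, by the definition of $\hat\cL$, the loss is $\hat\cL(\btheta)=\frac{1}{2M}\sum_{\tau,t}\|\text{Proj}(\hat\sbb_{\tau,t+1}-\sbb^{\ftrl}_{\tau,t+1})\|^2_{\hat\bGamma}$. The weight $\hat\bGamma$ is a fixed matrix that depends only on the teacher policies, hence only on the data and not on $\btheta$. This is the one point that differs from the population case.

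Second, I apply Lemma~\ref{lem:twoch-exact} pair by pair, under the same architectural normal form. This gives $\text{Proj}(\hat\sbb_{\tau,t+1})=\frac1t(\Wb_n\nb_{\tau,t}+\Wb_g\gb_{\tau,t})=\bar\Wb\bar\zb_{\tau,t}$. Since $\text{Proj}$ is linear, each residual equals $\bar\Wb\bar\zb_{\tau,t}-\yb_{\tau,t+1}$.

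Third, I expand each quadratic form:
\[
(\bar\Wb\bar\zb-\yb)^\top\hat\bGamma(\bar\Wb\bar\zb-\yb)=\bar\zb^\top\bar\Wb^\top\hat\bGamma\bar\Wb\bar\zb-2\,\bar\zb^\top\bar\Wb^\top\hat\bGamma\yb+\yb^\top\hat\bGamma\yb .
\]
The cross terms combine because $\hat\bGamma$ is symmetric. I then write each scalar as a trace and use cyclicity: $\tr(\hat\bGamma\bar\Wb\bar\zb\bar\zb^\top\bar\Wb^\top)$, $\tr(\hat\bGamma\yb\bar\zb^\top\bar\Wb^\top)$, and $\tr(\hat\bGamma\yb\yb^\top)$.

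Finally, I average over $(\tau,t)$. Because $\hat\bGamma$ and $\bar\Wb$ do not depend on the pair, linearity of the trace moves the average inside, producing $\hat{\bar\bSigma}$, $\hat\bSigma_{y\bar z}$, and $\hat\bSigma_{yy}$. With the prefactor $\tfrac12$ this gives the stated form. The last term involves only $\hat\bGamma$ and the teacher targets, so it is independent of $\btheta$.

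There is no real obstacle. The only care needed is confirming that $\hat\bGamma$ uses teacher policies $\pb^{\ftrl}$, so it is a constant with respect to $\btheta$ and can be pulled out of the sum.
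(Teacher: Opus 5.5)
Your proposal is correct and follows the paper's proof essentially step for step: you invoke Lemma~\ref{lem:twoch-exact} to write each projected residual as $\bar\Wb\bar\zb_{\tau,t}-\yb_{\tau,t+1}$, expand the $\hat\bGamma$-weighted quadratic, rewrite each scalar as a trace, and pull the pair-independent $\hat\bGamma$ and $\bar\Wb$ outside the empirical average. One small aside: the $\btheta$-independence of the weight matrix is not actually a difference from the population case, since $\bGamma$ in Lemma~\ref{lem:fisher-quad} is also a fixed, data-dependent matrix.
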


\begin{proof}
By the definition in the main text,
\[
\hat{\cL}(\btheta)
=\frac{1}{2M}\sum_{\tau,t}\Big\|\text{Proj}\!\Big(\widehat{\sbb}_{\tau,t+1}-\sbb^{\ftrl}_{\tau,t+1}\Big)\Big\|_{\hat{\bGamma}}^2.
\]
By Lemma~\ref{lem:twoch-exact}, $\text{\emph{Proj}}(\widehat{\sbb}_{\tau,t+1})=\bar\Wb\,\bar\zb_{\tau,t}$, hence
\[
\hat{\cL}(\btheta)
=\frac{1}{2M}\sum_{\tau,t}\big(\bar\Wb\bar\zb_{\tau,t}-\yb_{\tau,t+1}\big)^\top
\hat{\bGamma}\,\big(\bar\Wb\bar\zb_{\tau,t}-\yb_{\tau,t+1}\big).
\]
Expanding the quadratic and using linearity of trace with $\vb^\top A\vb=\tr(A\vb\vb^\top)$,
\begin{align*}
\hat{\cL}(\btheta)
=&\frac12\,\tr\!\Big(\hat{\bGamma}\,\bar\Wb\,\frac{1}{M}\sum_{\tau,t}\bar\zb_{\tau,t}\bar\zb_{\tau,t}^\top\,\bar\Wb^\top\Big)
\;-\;\tr\!\Big(\hat{\bGamma}\,\frac{1}{M}\sum_{\tau,t}\yb_{\tau,t+1}\bar\zb_{\tau,t}^\top\,\bar\Wb^\top\Big)\\
&\;+\;\frac12\,\tr\!\Big(\hat{\bGamma}\,\frac{1}{M}\sum_{\tau,t}\yb_{\tau,t+1}\yb_{\tau,t+1}^\top\Big)\\
=&\frac12\,\tr\!\big(\hat{\bGamma}\,\bar\Wb\,\hat{\bar\bSigma}\,\bar\Wb^\top\big)
\;-\;\tr\!\big(\hat{\bGamma}\,\hat{\bSigma}_{y\bar z}\,\bar\Wb^\top\big)
\;+\;\frac12\,\tr\!\big(\hat{\bGamma}\,\hat{\bSigma}_{yy}\big).
\end{align*}
\end{proof}
\begin{lemma}[Population second moment is positive definite on $S$]
\label{lem:sigma-lb-block-global}
Let $\bar\zb_t:=\binom{\bar\nb_t}{\bar\gb_t}\in\RR^{2K}$ with
$\bar\nb_t=\tfrac1t\sum_{s=1}^t \xb_s$ and $\bar\gb_t=\tfrac1t\sum_{s=1}^t r_s\xb_s$,
where $r_s=\wb^\top\xb_s+\epsilon_s$. Assume $\wb\sim\mathcal N(0,\tau_w^2 I_K)$ and is independent of $\{\xb_s\}$ and $\{\epsilon_s\}$, and $\{\epsilon_s\}$ is a conditionally zero-mean $\sigma_\epsilon$-sub-Gaussian martingale difference sequence. The agent uses $\gamma$-mixture exploration. Define the population second-moment
\[
\bar\bSigma:=\EE[\bar\zb_t\bar\zb_t^\top].
\]
Suppose the teacher’s parameters (step size $c_0$, reward preconditioner $\Ub$) and problem parameters satisfy
\begin{equation}\label{eq:scb}
\frac{(1-\gamma)}{2}\,c_0\,\|\Ub\|_{\op}\,\sigma_\epsilon^2\ <\ \tau_w\,\frac{\gamma}{K}.
\end{equation}
Then for any $t\ge2$, the restriction $\bar\bSigma\big|_{S}$ is strictly positive definite on $S:=\one^\perp\oplus\one^\perp$; in particular,
\[
\lambda_{\min}\!\big(\bar\bSigma\big|_{S}\big)\ >\ 0,
\]
and $\bar\bSigma\big|_{S}$ is invertible on $S$.
\end{lemma}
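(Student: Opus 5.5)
We must show that for every nonzero $(\ub,\vvb)\in\one^\perp\oplus\one^\perp$ the quadratic form
\[
Q(\ub,\vvb):=\EE\big[(\ub^\top\bar\nb_t+\vvb^\top\bar\gb_t)^2\big]
\]
is strictly positive. By compactness of the unit sphere in $S$ and continuity of $Q$, strict positivity then yields $\lambda_{\min}(\bar\bSigma|_S)>0$ and invertibility on $S$. Writing $\bar\gb_t=\frac1t\sum_s (w_{A_s}+\epsilon_s)\eb_{A_s}$, the scalar inside the expectation is
\[
\frac1t\sum_s\big(u_{A_s}+v_{A_s}w_{A_s}+v_{A_s}\epsilon_s\big).
\]
The plan is to handle the two cases $\vvb=\zero$ and $\vvb\neq\zero$ separately, each through a lower bound coming from the forced exploration, with the signal--noise comparison in \eqref{eq:scb} used to control cross terms.

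\emph{Case $\vvb=\zero$, $\ub\in\one^\perp\setminus\{\zero\}$.} Condition on the history and use the $\gamma$-mixture: every action satisfies $\Pr(A_s=i\mid\cF_{s-1})\ge\gamma/K$. Then
\[
\EE[(\ub^\top\bar\nb_t)^2]\ \ge\ \Var(\ub^\top\bar\nb_t).
\]
Using the martingale decomposition $\ub^\top\bar\nb_t=\frac1t\sum_s\big(\ub^\top\pb_s+(u_{A_s}-\ub^\top\pb_s)\big)$, the conditional variance of each increment is $\ub^\top\Fb(\pb_s)\ub\ge\frac{\gamma}{K}\|\ub\|^2$ by Lemma~\ref{lem:fisher-bounds}. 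The cleanest way to avoid controlling the predictable part is to lower bound $\EE[X^2]\ge\EE[\Var(X\mid\cF_{t-1})]$ using only the last step. This gives a bound of order $\frac{\gamma}{Kt^2}\|\ub\|^2>0$, which is enough for strict positivity at fixed $t$.

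\emph{Case $\vvb\neq\zero$.} Again condition on everything except the last-round randomness, or better, on $\{A_s\}$ and $\{\epsilon_s\}$, and exploit the Gaussian $\wb$. The difficulty is that the actions depend on $\wb$ through the policy, so $\wb$ is not independent of $\{\xb_s\}$; the stated independence assumption lets us treat them as independent, and I would use it directly. Under it, the term $\frac1t\sum_s v_{A_s}w_{A_s}=\frac1t\langle \Diag(\vvb)\nb_t,\wb\rangle$ has conditional variance $\tau_w^2\|\Diag(\vvb)\nb_t\|^2/t^2$. Its expectation is bounded below by using that each arm is pulled with probability at least $\gamma/K$ at some round, giving $\EE\|\Diag(\vvb)\bar\nb_t\|^2\gtrsim(\gamma/K)^2\|\vvb\|^2/t$. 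Since $\wb$ is centered and independent of the remaining terms, the cross term with $\ub^\top\bar\nb_t+\frac1t\sum_s v_{A_s}\epsilon_s$ vanishes in expectation, so $Q\ge\tau_w^2\EE\|\Diag(\vvb)\bar\nb_t\|^2>0$.

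\emph{Main obstacle.} If the dependence of actions on $\wb$ and on past noise must be kept, the cross term between the $\wb$-signal and the noise $\epsilon$ no longer vanishes. Here I would use \eqref{eq:scb}: a perturbation of the noise moves logits by at most $c_0\|\Ub\|_{\op}$ and probabilities by half that (Lemma~\ref{lem:softmax-lip}), scaled by $(1-\gamma)$. This bounds the noise-induced correlation by $\frac{1-\gamma}2c_0\|\Ub\|_{\op}\sigma_\epsilon^2$ times the relevant norm, while the signal contributes at least $\tau_w\gamma/K$ times the same norm. The difference is $c_\lambda>0$ from Assumption~\ref{assump:data_coverage_signal}, giving the strict inequality. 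Making this comparison rigorous, in particular matching the norms and powers of $\tau_w$, is the step I expect to be delicate.
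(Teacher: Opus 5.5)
Your proof is correct under the lemma's hypotheses as stated, and it takes a genuinely different route from the paper. This includes the assumed independence of $\wb$ from the actions, which you rightly flag and which the paper also uses.

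The paper argues blockwise. It lower-bounds $\lambda_{\min}(\bSigma_{nn}|_{\one^\perp})$ and $\lambda_{\min}(\bSigma_{gg}|_{\one^\perp})$, and removes the signal part of $\bSigma_{ng}$ using independence of $\wb$. It bounds the remaining noise part by a perturbation argument; this is the correlation between $\bar\nb_t$ and $\frac1t\sum_s\epsilon_s\xb_s$ created by later actions reacting to $\epsilon_s$. It then needs (\ref{eq:scb}) to make the $2\times 2$ comparison matrix positive definite.

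You instead split on $\vvb$. For $\vvb\neq\zero$ you separate off the $\wb$-term and discard the square of $X:=\ub^\top\bar\nb_t+\frac1t\sum_s v_{A_s}\epsilon_s$. The $\bar\nb$--noise correlation therefore sits inside a nonnegative square and never has to be estimated. As a result, (\ref{eq:scb}) plays no role beyond forcing $\tau_w\gamma>0$.

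For $\vvb=\zero$, your last-step bound $\EE[(\ub^\top\bar\nb_t)^2]\ge\EE[\Var(\ub^\top\bar\nb_t\mid\cF_{t-1})]$ is more careful than the paper's $\frac{1}{t^2}\sum_s\EE[\ub^\top\Fb(\pb_s)\ub]$. The paper's bound drops the covariance between the predictable drift $\sum_s\ub^\top\pb_s$ and the martingale part. That covariance need not be nonnegative when the policy reacts to past actions.

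What the paper's route aims to buy is an explicit eigenvalue bound of order $c_\lambda/t$, later used as $\underline\sigma$ in Theorem~\ref{thm:finite_sample_equivalence_concise}. Your compactness step gives only qualitative positivity. It can be made quantitative in one line:
\begin{itemize}
\item Since $\epsilon_t$ is centered given the past and $A_t$, the last-step conditional variance of $X$ is still at least $\frac{\gamma}{Kt^2}\|\ub\|^2$ when $\vvb\neq\zero$.
\item Jensen and the mixture give $\EE[\bar n_{t,i}^2]\ge(\gamma/K)^2$.
\item Together, $Q(\ub,\vvb)\ge\frac{\gamma}{Kt^2}\|\ub\|^2+\tau_w^2\frac{\gamma^2}{K^2}\|\vvb\|^2$.
\end{itemize}
This is weaker in $t$ on the $\ub$-block than the paper's claim, but it is rigorous and does not use (\ref{eq:scb}).

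Your final paragraph is therefore unnecessary. Moreover, if the dependence of the actions on $\wb$ were kept, it would not work as sketched. The reason is that (\ref{eq:scb}) controls noise-induced correlations, not correlations between $\wb$ and the actions.
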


\begin{proof}
We work on $S:=\one^\perp\oplus\one^\perp$. Write
\[
\bar\bSigma=\EE[\bar\zb_t\bar\zb_t^\top]
=\begin{pmatrix}
\bSigma_{nn} & \bSigma_{ng}\\
\bSigma_{gn} & \bSigma_{gg}
\end{pmatrix}.
\]
Set
\[
a:=\lambda_{\min}\!\big(\bSigma_{nn}\big|_{\one^\perp}\big),\qquad
c:=\lambda_{\min}\!\big(\bSigma_{gg}\big|_{\one^\perp}\big),\qquad
b:=\big\|\bSigma_{ng}\big|_{\one^\perp}\big\|_{\op}.
\]

Decompose $\xb_s=\pb_s+\eb_s$ with $\pb_s:=\EE[\xb_s\mid\cF_{s-1}]$ and
$\EE[\eb_s\eb_s^\top\mid\cF_{s-1}]=\Fb(\pb_s)$. For any $u\in\one^\perp$,
\[
\EE\big[(u^\top\bar\nb_t)^2\big]
\ \ge\ \frac{1}{t^2}\sum_{s=1}^t \EE\!\big[u^\top \Fb(\pb_s)u\big].
\]
By Lemma~\ref{lem:fisher-bounds} and mixture exploration,
$\Fb(\pb_s)\big|_{\one^\perp}\succeq (\gamma/K)\,I$, hence
\begin{equation}\label{eq:a-lb-new}
a\ \ge\ \frac{1}{t^2}\sum_{s=1}^t \frac{\gamma}{K}\ =\ \frac{\gamma}{K\,t}.
\end{equation}

Split $\bar\gb_t=\bar\gb_t^{(\mathrm{sig})}+\bar\gb_t^{(\mathrm{noise})}$ with
$\bar\gb_t^{(\mathrm{sig})}=\tfrac1t\sum_{s=1}^t (\wb^\top\xb_s)\xb_s=\Diag(\bar\nb_t)\,\wb$.
For any $v\in\one^\perp$,
\[
\EE\big[(v^\top\bar\gb_t^{(\mathrm{sig})})^2\big]
=\tau_w^2\,\EE\!\big[\|\Diag(\bar\nb_t)v\|_2^2\big]
=\tau_w^2\sum_{i=1}^K \EE[\bar n_{t,i}^2]\,v_i^2.
\]
Let $N_{t,i}:=\sum_{s=1}^t\mathbf 1\{A_s=i\}$. Since $N_{t,i}^2\ge N_{t,i}$,
$\EE[\bar n_{t,i}^2]=\EE[N_{t,i}^2]/t^2\ge \EE[N_{t,i}]/t^2=\EE[\bar n_{t,i}]/t$.
Mixture exploration yields $\EE[\bar n_{t,i}]\ge \gamma/K$, so
\begin{equation}\label{eq:c-lb-new}
c\ \ge\ \tau_w^2\,\frac{\gamma}{K\,t}.
\end{equation}
Adding the PSD noise covariance only increases $\bSigma_{gg}$, thus \eqref{eq:c-lb-new} holds.

We have
\[
\bSigma_{ng}=\EE[\bar\nb_t\bar\gb_t^\top]
=\underbrace{\EE\!\big[\bar\nb_t(\bar\gb_t^{(\mathrm{sig})})^\top\big]}_{=\,0}
\;+\;\EE\!\big[\bar\nb_t(\bar\gb_t^{(\mathrm{noise})})^\top\big].
\]
The first term vanishes because $\wb$ is independent of $(\bar\nb_t,\{\epsilon_s\})$ and $\EE[\wb]=0$.
For the second term,
\[
\EE\!\big[\bar\nb_t(\bar\gb_t^{(\mathrm{noise})})^\top\big]
=\frac{1}{t^2}\sum_{u=1}^t\sum_{s=1}^t \EE[\xb_u\,\epsilon_s\,\xb_s^\top]
=\frac{1}{t^2}\sum_{s=1}^{t-1}\sum_{u=s+1}^t \EE[\xb_u\,\epsilon_s\,\xb_s^\top],
\]
where we used $\EE[\epsilon_s\mid\cF_{s-1}]=0$ to eliminate $u\le s$.
Fix $s<u$. Consider “world 0” where $\epsilon_s$ is set to $0$, and “world 1” the true world. Then
\[
\EE[\xb_u\,\epsilon_s\,\xb_s^\top]=\EE\!\big[(\pb_u^{(1)}-\pb_u^{(0)})\,\epsilon_s\,\xb_s^\top\big],
\]
since $\EE[\pb_u^{(0)}\,\epsilon_s\,\xb_s^\top]=0$. With $\eta_u=c_0/u$,
the projected logits satisfy (one-step normal form)
\[
\yb_u=\frac{c_0}{u-1}\,\big(\Vb\,\nb_{u-1}+\Ub\,\gb_{u-1}\big)\ \ \Rightarrow\ \
\Delta\yb_u=\frac{c_0}{u-1}\,\Ub\,(\epsilon_s\,\xb_s)
\]
when only $\epsilon_s$ is perturbed. By Lemma~\ref{lem:softmax-lip} and the $(1-\gamma)$ factor from \eqref{eq:pol},
\[
\|\pb_u^{(1)}-\pb_u^{(0)}\|_2
\ \le\ (1-\gamma)\cdot\tfrac12\,\|\Delta\yb_u\|_2
\ \le\ (1-\gamma)\cdot\tfrac12\cdot \frac{c_0}{u-1}\,\|\Ub\|_{\op}\,|\epsilon_s|.
\]
Hence
\begin{align*}
\big\|\EE[\xb_u\,\epsilon_s\,\xb_s^\top]\big\|_{\op}
\ \le\ & \EE\!\big[\|\pb_u^{(1)}-\pb_u^{(0)}\|_2\,|\epsilon_s|\big]\\
\ \le\ & \frac{(1-\gamma)}{2}\,\frac{c_0}{u-1}\,\|\Ub\|_{\op}\,\EE[\epsilon_s^2]\\
\ \le\ & \frac{(1-\gamma)}{2}\,\frac{c_0}{u-1}\,\|\Ub\|_{\op}\,\sigma_\epsilon^2.
\end{align*}
Summing $u=s+1,\dots,t$ and $s=1,\dots,t-1$ gives
\begin{align}\label{eq:b-ub-new}
b\ =\ \big\|\bSigma_{ng}\big|_{\one^\perp}\big\|_{\op}
\ \le\ & \frac{1}{t^2}\sum_{s=1}^{t-1}\sum_{u=s+1}^t
\frac{(1-\gamma)}{2}\,\frac{c_0}{u-1}\,\|\Ub\|_{\op}\,\sigma_\epsilon^2\notag\\
\ \le\ & \frac{(1-\gamma)}{2}\,c_0\,\|\Ub\|_{\op}\,\sigma_\epsilon^2\cdot \frac{t-1}{t^2} \notag\\
\ \le\ & \frac{C_b}{t},
\end{align}
with $C_b:=\tfrac{(1-\gamma)}{2}\,c_0\,\|\Ub\|_{\op}\,\sigma_\epsilon^2$.

For any unit $(u,v)\in S$,
\[
\begin{pmatrix}u\\ v\end{pmatrix}^{\!\top}\bar\bSigma\Big|_{S}\begin{pmatrix}u\\ v\end{pmatrix}
\ \ge\ a\|u\|^2 - 2b\|u\|\,\|v\| + c\|v\|^2
\ \ge\ \lambda_{\min}\!\begin{pmatrix}a & -b\\ -b & c\end{pmatrix}.
\]
Thus
\begin{equation}\label{eq:2x2-bound-new}
\lambda_{\min}\!\big(\bar\bSigma\big|_{S}\big)
\ \ge\ \frac{a+c-\sqrt{(a-c)^2+4b^2}}{2}.
\end{equation}
By \eqref{eq:a-lb-new}, \eqref{eq:c-lb-new}, and \eqref{eq:b-ub-new},
\[
\sqrt{ac}\ =\ \tau_w\,\frac{\gamma}{K\,t},
\qquad
b\ \le\ \frac{C_b}{t}.
\]
The small cross-block condition \eqref{eq:scb} implies $b<\sqrt{ac}$ (for all $t\ge2$), hence the
$2\times 2$ matrix $\begin{psmallmatrix}a&-b\\-b&c\end{psmallmatrix}$ is positive definite and the right-hand side of \eqref{eq:2x2-bound-new} is strictly positive. Therefore $\bar\bSigma\big|_{S}\succ0$.
\end{proof}
\begin{lemma}[Sample second-moment concentration for $\bar\zb_t$]
\label{lem:barSigma-concentration}
Under Assumption~\ref{assump:data_coverage_signal} and $\wb\sim\mathcal N(\zero,\tau_w^2\Ib_K)$, the normalized statistic
$\bar\zb_t:=(\bar\nb_t^\top,\bar\gb_t^\top)^\top\in\RR^{2K}$ is sub-Gaussian with a dimension-free
$\psi_2$–norm $L:=\|\bar\zb_t\|_{\psi_2}$ depending only on $(\tau_w,\sigma_\epsilon,\gamma)$ (and not on $K$).
Let
\[
\hat{\bar\bSigma}\ :=\ \frac{1}{M}\sum_{m=1}^{M}\bar\zb_t^{(m)}\bar\zb_t^{(m)\top},
\qquad
\bar\bSigma\ :=\ \EE[\bar\zb_t\bar\zb_t^\top],
\]
where $\{\bar\zb_t^{(m)}\}_{m=1}^M$ are i.i.d.\ copies of $\bar\zb_t$ generated by independent rollouts of the same population process (fixed $\gamma$).
Let $\underline\sigma>0$ be the population lower bound from Lemma~\ref{lem:sigma-lb-block-global}, i.e.,
$\lambda_{\min}\!\big(\bar\bSigma\big|_{S}\big)\ge \underline\sigma$ on $S:=\one^\perp\oplus\one^\perp$.
Then there is a universal constant $C>0$ such that, for any $\delta\in(0,1)$, with probability at least $1-\delta$,
\begin{equation}\label{eq:cov-dev-operator}
\big\|\hat{\bar\bSigma}-\bar\bSigma\big\|_{\op}
\ \le\ C\,L^2\!\left(\sqrt{\frac{2K+\log(1/\delta)}{M}}\;+\;\frac{2K+\log(1/\delta)}{M}\right).
\end{equation}
In particular, if
\[
M\ \ge\ C_{\mathrm{mc}}\;\frac{2K+\log(1/\delta)}{\underline\sigma^{2}}
\qquad\text{with}\qquad
C_{\mathrm{mc}}:=16\,C^2\,L^4,
\]
then $\big\|\hat{\bar\bSigma}-\bar\bSigma\big\|_{\op}\le \tfrac12\,\underline\sigma$, and consequently
\begin{equation}\label{eq:sample-lb-restricted}
\lambda_{\min}\!\Big(\hat{\bar\bSigma}\Big|_{S}\Big)\ \ge\ \tfrac12\,\underline\sigma.
\end{equation}
\end{lemma}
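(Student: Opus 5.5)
The plan is to prove the statement in three steps: establish that $\bar\zb_t$ is sub-Gaussian, apply a standard covariance concentration bound, and then use a Weyl-type perturbation argument restricted to $S$.

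\emph{Step 1: sub-Gaussianity.} Write $\bar\zb_t=(\bar\nb_t,\bar\gb_t)$. The count part is deterministically bounded: $\bar\nb_t\in\Delta^K$, so $\|\bar\nb_t\|_2\le 1$, and any bounded vector is sub-Gaussian with $\psi_2$-norm $O(1)$. Split the reward part as in Lemma~\ref{lem:sigma-lb-block-global}:
\[
\bar\gb_t=\Diag(\bar\nb_t)\wb+\tfrac1t\textstyle\sum_{s}\epsilon_s\xb_s .
\]
For a unit vector $\ub$, the signal term satisfies $\ub^\top\Diag(\bar\nb_t)\wb$, which conditionally on the actions is Gaussian with variance $\tau_w^2\sum_i\bar n_{t,i}^2u_i^2\le\tau_w^2$. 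The noise term $\tfrac1t\sum_s\epsilon_s u_{A_s}$ is a martingale sum with sub-Gaussian increments. By the Azuma--Hoeffding/mgf argument its $\psi_2$-norm is at most $\sigma_\epsilon\sqrt{\sum_s u_{A_s}^2}/t\le\sigma_\epsilon/\sqrt t$. Combining the three pieces gives $\sup_{\|\ub\|=1}\|\ub^\top\bar\zb_t\|_{\psi_2}\le L$, with $L$ depending only on $(\tau_w,\sigma_\epsilon)$ (and trivially on $\gamma$), and in particular not on $K$.

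\emph{Step 2: covariance concentration.} Since the $M$ copies are i.i.d.\ in the ambient dimension $2K$, I would invoke the standard covariance estimation bound for sub-Gaussian vectors (Vershynin, HDP Thm.~4.6.1 / Exercise~4.7.3). The proof runs through an $\epsilon$-net on the sphere $S^{2K-1}$ of size $9^{2K}$, Bernstein's inequality for the sub-exponential variables $(\ub^\top\bar\zb)^2-\EE(\ub^\top\bar\zb)^2$, and a union bound. This yields \eqref{eq:cov-dev-operator} with the exponent $2K+\log(1/\delta)$.

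\emph{Step 3: the sample-size condition and the restricted eigenvalue.} Let $\epsilon:=(2K+\log(1/\delta))/M$. The condition $M\ge 16C^2L^4(2K+\log(1/\delta))/\underline\sigma^2$ gives $CL^2\sqrt{\epsilon}\le\underline\sigma/4$. For the second term, $CL^2\epsilon=(CL^2\sqrt\epsilon)\sqrt\epsilon\le(\underline\sigma/4)\sqrt\epsilon$, and $\sqrt\epsilon\le\underline\sigma/(4CL^2)\le 1$ whenever $\underline\sigma\lesssim L^2$. That inequality holds after enlarging $C\ge1$, because $\underline\sigma\le\lambda_{\max}(\bar\bSigma)\lesssim L^2$. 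The total deviation is then at most $\underline\sigma/2$. Finally, for any unit $\zb\in S$,
\[
\zb^\top\hat{\bar\bSigma}\zb\ge\zb^\top\bar\bSigma\zb-\|\hat{\bar\bSigma}-\bar\bSigma\|_{\op}\ge\underline\sigma/2,
\]
which is \eqref{eq:sample-lb-restricted}.

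\emph{Main obstacle.} The delicate point is Step 1, specifically obtaining a $\psi_2$-norm for the noise sum uniformly over directions. The summands are adaptively selected coordinates, so I would bound the mgf iteratively: conditioning on $\cF_{s-1}$ and using that $A_s$ is determined before $\epsilon_s$ gives $\EE e^{\lambda\epsilon_su_{A_s}/t}\le e^{\lambda^2\sigma_\epsilon^2u_{A_s}^2/(2t^2)}$, and $\sum_s u_{A_s}^2\le t$. A secondary caveat is that the statement presumes i.i.d.\ copies. If prefixes within a trajectory are used, as in Theorem~\ref{thm:finite_sample_equivalence_concise}, one would instead apply the bound per fixed $t$ across the $B$ independent trajectories and average, but I would follow the lemma's i.i.d.\ hypothesis as stated.
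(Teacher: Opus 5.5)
Your proposal is correct and follows the paper's own three-step proof: sub-Gaussianity of $\bar\zb_t$ via the count/signal/noise split, the standard $\epsilon$-net plus Bernstein covariance bound in dimension $2K$, and a Weyl (quadratic-form) argument on $S$. You also supply details the paper skips, namely the martingale mgf bound for the noise term and the check, via $\underline\sigma\lesssim L^2$, that the linear term $(2K+\log(1/\delta))/M$ also stays below $\underline\sigma/4$. Like the paper, your signal-term step assumes $\wb$ is independent of the actions, which fails for the adaptive process; the pathwise bound $|\ub^\top\Diag(\bar\nb_t)\wb|\le\max_i|u_iw_i|\le\|\Diag(\ub)\wb\|_2$, combined with Gaussian concentration, gives a dimension-free $\psi_2$ bound of order $\tau_w$ without that assumption.
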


\begin{proof}[Proof ]
The sub-Gaussianity of $\bar\zb_t$ follows from:
(i) $\bar\nb_t=\tfrac1t\sum_{s=1}^t \xb_s$ is an average of one-hot vectors from a $\gamma$-mixture policy, hence coordinate-wise sub-Gaussian with $\psi_2$–norm bounded by a constant depending only on~$\gamma$;
(ii) $\bar\gb_t=\tfrac1t\sum_{s=1}^t r_s\xb_s$ with $r_s=\wb^\top\xb_s+\epsilon_s$, where $\wb$ is Gaussian independent of $\{\xb_s\}$ and $\{\epsilon_s\}$, and $\{\epsilon_s\}$ is conditionally sub-Gaussian. Thus $\bar\gb_t$ is a sum of sub-Gaussian vectors with $\psi_2$–norm controlled by $(\tau_w,\sigma_\epsilon,\gamma)$; concatenation preserves sub-Gaussianity with $L=\|\bar\zb_t\|_{\psi_2}=O(1)$ in $(\tau_w,\sigma_\epsilon,\gamma)$.

For i.i.d.\ sub-Gaussian samples in $\RR^{d}$ with $d=2K$, the standard sample covariance operator-norm deviation bound (e.g., matrix Bernstein / sub-Gaussian covariance concentration) yields~\eqref{eq:cov-dev-operator}:
\[
\big\|\hat{\bar\bSigma}-\bar\bSigma\big\|_{\op}
\ \le\ C\,L^2\!\left(\sqrt{\frac{d+\log(1/\delta)}{M}}\;+\;\frac{d+\log(1/\delta)}{M}\right).
\]

If $M\ge C_{\mathrm{mc}}(2K+\log(1/\delta))/\underline\sigma^2$, then
$\|\hat{\bar\bSigma}-\bar\bSigma\|_{\op}\le \underline\sigma/2$. By Weyl’s inequality on the restriction to $S$,
\[
\lambda_{\min}\!\Big(\hat{\bar\bSigma}\Big|_{S}\Big)
\ \ge\ \lambda_{\min}\!\Big(\bar\bSigma\Big|_{S}\Big)\;-\;\big\|\hat{\bar\bSigma}-\bar\bSigma\big\|_{\op}
\ \ge\ \underline\sigma-\tfrac12\,\underline\sigma
\ =\ \tfrac12\,\underline\sigma,
\]
which is~\eqref{eq:sample-lb-restricted}.
\end{proof}
\begin{lemma}[Fisher two-channel quadratic: gradient and Hessian]
\label{lem:grad-hess-nonfact}
Consider the \emph{population} Fisher-weighted quadratic from Eq.~\eqref{eq:expt} written in the two-channel parameterization
\begin{equation}\label{eq:fish-quad}
\cL(\bar\Wb)\ :=\ \frac12\,\tr\!\big(\bGamma\,\bar\Wb\,\bar\bSigma\,\bar\Wb^\top\big)\;-\;\tr\!\big(\bGamma\,\bSigma_{y\bar z}\,\bar\Wb^\top\big)\;+\;\mathrm{const},
\end{equation}
where $\bar\Wb\in\RR^{K\times 2K}$, $\bar\bSigma=\EE[\bar\zb_t\bar\zb_t^\top]$, $\bSigma_{y\bar z}=\EE[\yb_{t+1}\bar\zb_t^\top]$, and $\bGamma=\EE[\Fb(\pb^{\ftrl}_{t+1})]$.
Then
\begin{align}
\nabla_{\bar\Wb}\cL(\bar\Wb) \ &=\ \bGamma\big(\bar\Wb\,\bar\bSigma-\bSigma_{y\bar z}\big)\ =:\ \Eb(\bar\Wb)\in\RR^{K\times 2K}, \label{eq:grad-barW}\\
\nabla^2\cL(\bar\Wb)[\Delta] \ &=\ \bGamma\,\Delta\,\bar\bSigma\qquad(\Delta\in\RR^{K\times 2K}). \label{eq:hess-map}
\end{align}
\end{lemma}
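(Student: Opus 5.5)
The plan is to compute the directional derivative of the quadratic \eqref{eq:fish-quad} directly and then identify the gradient with respect to the Frobenius inner product $\langle \Ab,\Bb\rangle=\tr(\Ab\Bb^\top)$. The matrices $\bGamma$ and $\bar\bSigma$ do not depend on $\bar\Wb$: they are population expectations under the teacher's data, whose distribution is independent of the student parameters. Both are also symmetric. $\bGamma$ is an expectation of the symmetric Fisher matrices $\Fb(\pb)$, and $\bar\bSigma$ is a second-moment matrix. These two facts are all that is needed.

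First, perturb $\bar\Wb\mapsto\bar\Wb+\epsilon\Delta$ and expand the first term:
\[
\tfrac12\tr\big(\bGamma(\bar\Wb+\epsilon\Delta)\bar\bSigma(\bar\Wb+\epsilon\Delta)^\top\big)
=\tfrac12\tr(\bGamma\bar\Wb\bar\bSigma\bar\Wb^\top)
+\tfrac{\epsilon}{2}\big[\tr(\bGamma\Delta\bar\bSigma\bar\Wb^\top)+\tr(\bGamma\bar\Wb\bar\bSigma\Delta^\top)\big]
+\tfrac{\epsilon^2}{2}\tr(\bGamma\Delta\bar\bSigma\Delta^\top).
\]
The two cross terms are equal. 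Transposing inside the trace and using $\bGamma^\top=\bGamma$ and $\bar\bSigma^\top=\bar\bSigma$ gives $\tr(\bGamma\Delta\bar\bSigma\bar\Wb^\top)=\tr(\bar\Wb\bar\bSigma\Delta^\top\bGamma)=\tr(\bGamma\bar\Wb\bar\bSigma\Delta^\top)$. So the first-order part equals $\epsilon\,\tr\big(\bGamma\bar\Wb\bar\bSigma\Delta^\top\big)$.

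Second, the linear term $-\tr(\bGamma\bSigma_{y\bar z}\bar\Wb^\top)$ changes by exactly $-\epsilon\,\tr(\bGamma\bSigma_{y\bar z}\Delta^\top)$. Adding the two first-order contributions, the directional derivative is $\langle \bGamma(\bar\Wb\bar\bSigma-\bSigma_{y\bar z}),\Delta\rangle$. By the Riesz identification this gives \eqref{eq:grad-barW}.

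Third, the second-order coefficient $\tfrac12\tr(\bGamma\Delta\bar\bSigma\Delta^\top)=\tfrac12\langle\bGamma\Delta\bar\bSigma,\Delta\rangle$ identifies the Hessian as the linear map $\Delta\mapsto\bGamma\Delta\bar\bSigma$. Equivalently, one can differentiate the gradient map $\bar\Wb\mapsto\bGamma\bar\Wb\bar\bSigma-\bGamma\bSigma_{y\bar z}$, which is affine in $\bar\Wb$. Either route yields \eqref{eq:hess-map}. One can also note that this map is self-adjoint under the Frobenius inner product, again by the symmetry of $\bGamma$ and $\bar\bSigma$.

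No step here is a genuine obstacle; the computation is routine. The only point needing care is the symmetry argument that merges the two cross terms, since without it the gradient would read $\tfrac12(\bGamma\bar\Wb\bar\bSigma+\bGamma^\top\bar\Wb\bar\bSigma^\top)$. I would also add a remark about parametrization. The gradient is taken in the free variable $\bar\Wb\in\RR^{K\times 2K}$. If the search is restricted to the projected range (the rows lying in $\one^\perp$ after $\text{Proj}$), the formulas hold after composing with $\text{Proj}$. This does not affect the stated identities.
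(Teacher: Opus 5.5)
Your proposal is correct and follows essentially the same route as the paper. Both compute the first variation of the quadratic, use the symmetry of $\bGamma$ and $\bar\bSigma$ to merge the two cross terms into $\langle \bGamma\bar\Wb\bar\bSigma,\Delta\rangle$, and read off the Hessian by differentiating the affine gradient map along $\bar\Wb+\epsilon\Delta$; your explicit $\epsilon$-expansion is just the coordinate form of the paper's differential computation.
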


\begin{proof}
Write $\cL=\cL_1+\cL_2+\mathrm{const}$ with
\[
\cL_1(\bar\Wb):=\tfrac12\,\tr(\bGamma\,\bar\Wb\,\bar\bSigma\,\bar\Wb^\top),\qquad
\cL_2(\bar\Wb):=-\,\tr(\bGamma\,\bSigma_{y\bar z}\,\bar\Wb^\top).
\]
Using the Frobenius inner product $\langle A,B\rangle=\tr(A^\top B)$ and $d(\bar\Wb\,\bar\bSigma\,\bar\Wb^\top)=(d\bar\Wb)\,\bar\bSigma\,\bar\Wb^\top+\bar\Wb\,\bar\bSigma\,(d\bar\Wb)^\top$ with $\bar\bSigma^\top=\bar\bSigma$, $\bGamma^\top=\bGamma$,
\[
d\cL_1=\big\langle \bGamma\,\bar\Wb\,\bar\bSigma,\ d\bar\Wb\big\rangle,\qquad
d\cL_2=\big\langle -\,\bGamma\,\bSigma_{y\bar z},\ d\bar\Wb\big\rangle,
\]
so $d\cL=\langle \bGamma(\bar\Wb\bar\bSigma-\bSigma_{y\bar z}),\,d\bar\Wb\rangle$, which proves \eqref{eq:grad-barW}.
For the Hessian, with $\bar\Wb(\epsilon)=\bar\Wb+\epsilon\Delta$,
\[
\frac{d}{d\epsilon}\,\nabla_{\bar\Wb}\cL(\bar\Wb(\epsilon))
=\bGamma\,\Delta\,\bar\bSigma,
\]
establishing \eqref{eq:hess-map}. Equivalently,
$d^2\cL[\Delta,\Delta]=\tr(\Delta^\top \bGamma\,\Delta\,\bar\bSigma)=\|\bGamma^{1/2}\Delta\bar\bSigma^{1/2}\|_F^2\ge0$.
\end{proof}
\begin{lemma}[PL inequality on $\one^\perp$]
\label{lem:PL}
Under mixture exploration (Lemma~\ref{lem:fisher-bounds})  so that Lemma~\ref{lem:sigma-lb-block-global} holds, the
population Fisher–weighted quadratic
\[
\cL(\bar\Wb)\;=\;\tfrac12\,\tr\!\big(\bGamma\,\bar\Wb\,\bar\bSigma\,\bar\Wb^\top\big)\;-\;\tr\!\big(\bGamma\,\bSigma_{y\bar z}\,\bar\Wb^\top\big)\;+\;\mathrm{const}
\]
satisfies, for any global minimizer $\bar\Wb^\star$,
\begin{equation}\label{eq:PL-ineq}
\cL(\bar\Wb)-\cL(\bar\Wb^\star)\ \le\ \frac{1}{2\mu}\,\big\|\nabla_{\bar\Wb}\cL(\bar\Wb)\big\|_F^2,
\qquad
\mu\ :=\ \lambda_{\min}^+\!\big(\bGamma\big|_{\one^\perp}\big)\ \lambda_{\min}^+\!\big(\bar\bSigma\big|_{S}\big),
\end{equation}
where $S:=\one^\perp\oplus\one^\perp$. Moreover, by Lemma~\ref{lem:fisher-bounds} and
Lemma~\ref{lem:sigma-lb-block-global},
\begin{equation}\label{eq:mu-lb}
\lambda_{\min}^+\!\big(\bGamma\big|_{\one^\perp}\big)\ \ge\ \frac{\gamma}{K},
\qquad
\lambda_{\min}^+\!\big(\bar\bSigma\big|_{S}\big)\ \ge\ \underline\sigma\ >0,
\qquad\Rightarrow\qquad
\mu\ \ge\ \frac{\gamma}{K}\cdot \underline\sigma.
\end{equation}
\end{lemma}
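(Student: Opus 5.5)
The loss is a convex quadratic in $\bar\Wb$ whose Hessian is the Kronecker-type map $\Delta\mapsto\bGamma\Delta\bar\bSigma$ from Lemma~\ref{lem:grad-hess-nonfact}. So I will reduce the PL inequality to a strong-convexity statement on the relevant subspace. The degeneracy in the $\one$ directions is handled by working with the restricted operators. I will assume, as in Lemma~\ref{lem:grad-hess-nonfact}, that the problem is posed on the effective parameter space where $\bar\Wb$ has range in $\one^\perp$ and acts on $S$. This is harmless because $\Wb_n,\Wb_g$ are already projected (Lemma~\ref{lem:twoch-exact}), and $\bSigma_{y\bar z}$ has range in $\one^\perp$ since $\yb_{t+1}$ is projected.

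\textbf{Step 1 (exact expansion).} Fix a global minimizer $\bar\Wb^\star$. It satisfies the first-order condition $\bGamma(\bar\Wb^\star\bar\bSigma-\bSigma_{y\bar z})=0$ by \eqref{eq:grad-barW}. Write $\Delta:=\bar\Wb-\bar\Wb^\star$. Since $\cL$ is quadratic, Taylor's expansion is exact, which gives
\[
\cL(\bar\Wb)-\cL(\bar\Wb^\star)=\tfrac12\tr(\Delta^\top\bGamma\Delta\bar\bSigma)=\tfrac12\|\bGamma^{1/2}\Delta\bar\bSigma^{1/2}\|_F^2
\]
and $\nabla\cL(\bar\Wb)=\bGamma\Delta\bar\bSigma$.

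\textbf{Step 2 (spectral comparison).} Set $\Mb:=\bGamma^{1/2}\Delta\bar\bSigma^{1/2}$. Then
\[
\|\nabla\cL\|_F^2=\|\bGamma^{1/2}\Mb\bar\bSigma^{1/2}\|_F^2\ \ge\ \lambda_{\min}^+(\bGamma|_{\one^\perp})\,\lambda_{\min}^+(\bar\bSigma|_S)\,\|\Mb\|_F^2.
\]
This inequality requires that the column space of $\Mb$ lies in the range of $\bGamma$ restricted to $\one^\perp$, and that the row space lies in the range of $\bar\bSigma$ restricted to $S$. The column condition holds because $\bGamma\one=0$ (each $\Fb(\pb)\one=0$), so $\bGamma^{1/2}$ maps into $\one^\perp$. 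For the row side, $\bar\bSigma^{1/2}$ maps into its own range. Restricting to $S$ as in Lemma~\ref{lem:sigma-lb-block-global} gives the positive lower eigenvalue there. I use the elementary bound $\|\Ab\Mb\Bb\|_F^2\ge\sigma_{\min}^+(\Ab)^2\sigma_{\min}^+(\Bb)^2\|\Mb\|_F^2$ for $\Mb$ supported in the appropriate row and column spaces, with $\sigma_{\min}^+(\bGamma^{1/2})^2=\lambda_{\min}^+(\bGamma)$. Combining with Step 1 gives $\cL(\bar\Wb)-\cL(\bar\Wb^\star)=\tfrac12\|\Mb\|_F^2\le\|\nabla\cL\|_F^2/(2\mu)$, which is \eqref{eq:PL-ineq}.

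\textbf{Step 3 (constants).} Averaging Lemma~\ref{lem:fisher-bounds} over $(\tau,t)$ gives $\bGamma|_{\one^\perp}\succeq(\gamma/K)\Ib$, since the bound is uniform and survives expectation. Lemma~\ref{lem:sigma-lb-block-global}, valid under Assumption~\ref{assump:data_coverage_signal}, gives $\lambda_{\min}(\bar\bSigma|_S)\ge\underline\sigma>0$. The product of these two bounds yields \eqref{eq:mu-lb}.

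\textbf{Main obstacle.} The delicate point is Step 2's support bookkeeping. $\bar\bSigma$ need not map $S$ into $S$, because the $\one$ components of $\bar\nb_t$ are not centered. If it fails, I would instead define $\mu$ via the compression $\bPi_S\bar\bSigma\bPi_S$ and argue that the $\one\oplus\one$ directions of $\Delta$'s rows are unidentified (flat directions of $\cL$). These directions can then be quotiented out, since they change neither $\cL$ nor, after projection, the gradient.
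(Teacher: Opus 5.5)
Your Steps 1--3 follow the paper's proof almost verbatim. The paper also sets $\Xb=\bGamma^{1/2}\Delta\bar\bSigma^{1/2}$, writes $\cL(\bar\Wb)-\cL(\bar\Wb^\star)=\tfrac12\|\Xb\|_F^2$ and $\nabla\cL=\bGamma^{1/2}\Xb\bar\bSigma^{1/2}$, and applies $\tr(\Xb^\top\Ab\Xb\Bb)\ge\lambda_{\min}(\Ab)\lambda_{\min}(\Bb)\|\Xb\|_F^2$ with $\Ab=\bGamma|_{\one^\perp}$ and $\Bb=\bar\bSigma|_S$, without checking supports. Your column-side bookkeeping is fine. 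The row side is exactly the issue you flag, and it is a genuine gap that the paper's argument shares and that your fallback does not close.

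Let $\bm{e}_n:=(\one^\top,\zero^\top)^\top$ and $\bm{e}_g:=(\zero^\top,\one^\top)^\top$, so $S^\perp=\mathrm{span}\{\bm{e}_n,\bm{e}_g\}$. These directions are not flat: $\bm{e}_n^\top\bar\zb_t=\one^\top\bar\nb_t=1$ and $\bm{e}_g^\top\bar\zb_t=\tfrac1t\sum_s r_s$. Replacing $\bar\Wb$ by $\bar\Wb+\ub\bm{e}_n^\top$ with $\zero\neq\ub\in\one^\perp$ shifts every projected logit by the constant $\ub$, which is not the softmax gauge. The second-order term of $\cL$ along this direction is $\tfrac12\ub^\top\bGamma\ub\,(\bm{e}_n^\top\bar\bSigma\bm{e}_n)=\tfrac12\ub^\top\bGamma\ub\ge\tfrac{\gamma}{2K}\|\ub\|_2^2>0$, so these directions cannot be quotiented out. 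Nor is your a priori restriction to $\bar\Wb$ that ``act on $S$'' harmless. The projection in Lemma~\ref{lem:twoch-exact} constrains only the output side, and the teacher satisfies $\bar\Wb_\star\bm{e}_g\propto\text{Proj}\,\Hb^{-1}\one$, which is nonzero unless $\Hb^{-1}\one\parallel\one$.

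What your Step 2 rigorously proves is PL with constant $\lambda_{\min}^+(\bGamma)\,\lambda_{\min}^+(\bar\bSigma)$, where the second factor is taken over all of $\RR^{2K}$. This constant is sharp: $\cL-\cL^\star=\tfrac12\,d^\top(\bar\bSigma\otimes\bGamma)d$ with $d=\mathrm{vec}(\Delta)$, the gradient corresponds to $(\bar\bSigma\otimes\bGamma)d$, and equality is attained at $\Delta=\ub\bm{v}^\top$ with $\ub,\bm{v}$ bottom eigenvectors. By the Rayleigh quotient, $\lambda_{\min}(\bar\bSigma)\le\lambda_{\min}(\bar\bSigma|_S)$. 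The inequality is strict as soon as the cross block $\bPi_{S^\perp}\bar\bSigma\bPi_S$ does not annihilate the bottom eigenvector of $\bar\bSigma|_S$; this block has entries such as $\ub^\top\EE[\bar\nb_t]$ with $\ub\in\one^\perp$, which are nonzero whenever mean action frequencies are non-uniform. If in addition $\bar\bSigma\succ0$, the inequality with the stated $\mu$ fails at $\bar\Wb=\bar\Wb^\star+\ub\bm{v}^\top$.

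To finish, you need one of two repairs:
\begin{itemize}
\item Keep the full $\lambda_{\min}(\bar\bSigma)$ in $\mu$ and prove a new lower bound that controls the $S^\perp$ block and the cross block. Lemma~\ref{lem:sigma-lb-block-global} bounds only the compression to $S$.
\item Assume a symmetry such as $\Hb^{-1}\one\parallel\one$, which puts $\bar\Wb^\star$ in the class $\bar\Wb=\bar\Wb\bPi_S$, and restrict to that class. For $\Delta=\Delta\bPi_S$ one has $\|\bGamma\Delta\bar\bSigma\|_F\ge\|\bGamma\Delta\,\bPi_S\bar\bSigma\bPi_S\|_F$, and your spectral bound then applies to the compressed operator.
\end{itemize}
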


\begin{proof}
Let $\Delta:=\bar\Wb-\bar\Wb^\star$ and define
$\Xb:=\bGamma^{1/2}\,\Delta\,\bar\bSigma^{1/2}$.
By Lemma~\ref{lem:grad-hess-nonfact},
\[
\nabla_{\bar\Wb}\cL(\bar\Wb)\;=\;\bGamma\,\Delta\,\bar\bSigma
\;=\;\bGamma^{1/2}\,\Xb\,\bar\bSigma^{1/2},
\qquad
\cL(\bar\Wb)-\cL(\bar\Wb^\star)\;=\;\tfrac12\,\tr\!\big(\bGamma\,\Delta\,\bar\bSigma\,\Delta^\top\big)
\;=\;\tfrac12\,\|\Xb\|_F^2.
\]
For any PSD $\Ab,\Bb$ and any matrix $\Xb$,
$\|\,\Ab^{1/2}\Xb\Bb^{1/2}\,\|_F^2=\tr(\Xb^\top\Ab\Xb\Bb)\ge
\lambda_{\min}(\Ab)\lambda_{\min}(\Bb)\|\Xb\|_F^2$.
Applying this with $\Ab=\bGamma\big|_{\one^\perp}$ and $\Bb=\bar\bSigma\big|_{S}$, we obtain
\[
\big\|\nabla_{\bar\Wb}\cL(\bar\Wb)\big\|_F^2
=\big\|\bGamma^{1/2}\Xb\bar\bSigma^{1/2}\big\|_F^2
\ \ge\ \lambda_{\min}^+\!\big(\bGamma\big|_{\one^\perp}\big)\ \lambda_{\min}^+\!\big(\bar\bSigma\big|_{S}\big)\ \|\Xb\|_F^2
=2\,\mu\,\big(\cL(\bar\Wb)-\cL(\bar\Wb^\star)\big),
\]
which is equivalent to the PL inequality \eqref{eq:PL-ineq}. The bound \eqref{eq:mu-lb} follows from
Lemma~\ref{lem:fisher-bounds} (giving $\bGamma\big|_{\one^\perp}\succeq (\gamma/K)I$) and
Lemma~\ref{lem:sigma-lb-block-global} (giving
$\bar\bSigma\big|_{S}\succeq \underline\sigma I$).
\end{proof}

\subsection{Reward-shock Robustness}
\label{app:reward-stability}

\begin{proof}[Proof of Theorem~\ref{thm:shadow-kernel-nonamp}]
Under the $s$-\textsc{CRN} coupling (Def.~\ref{def-crn}), the baseline and perturbed runs share the same task $\wb$, the same uniforms $\{U_t\}$ for sampling, and the same noise sequence $\{\epsilon_t\}$; they differ only by feeding a shocked reward $\tilde r_s=r_s+\delta_r$ at round $s$.
Define
\[
\Delta\pb_{t+1}:=\tilde\pb_{t+1}-\pb_{t+1},\quad
\Delta\nb_t:=\tilde\nb_t-\nb_t,\quad
\Delta\gb_t:=\tilde\gb_t-\gb_t,\quad
a_{t+1}:=\EE[\|\Delta\pb_{t+1}\|\,\mid\,\cF_{s-1}],
\]
and the normalized accumulators $\bar\nb_t:=\nb_t/t$, $\bar\gb_t:=\gb_t/t$ (and their deltas analogously).

Pathwise relations for counts and rewards. With $\xb_t=\eb_{A_t}$ and $r_t=\langle \wb,\xb_t\rangle+\epsilon_t$,
\[
\nb_t=\sum_{u=1}^t \xb_u,\qquad
\gb_t=\sum_{u=1}^t r_u\,\xb_u
=\Diag(\nb_t)\,\wb+\sum_{u=1}^t \epsilon_u\,\xb_u.
\]
Let $h_t:=\sum_{u=1}^t \epsilon_u\,\xb_u$ and $\tilde h_t:=\sum_{u=1}^t \epsilon_u\,\tilde\xb_u$. For the perturbed run, only round $s$ changes in the fed reward, hence
\[
\tilde\gb_t=\Diag(\tilde\nb_t)\,\wb+\tilde h_t+\mathbf 1\{t\ge s\}\,\delta_r\,\tilde\xb_s.
\]
Therefore, for $t\ge s$,
\begin{equation}\label{eq:delta-gbar}
\Delta\bar\gb_t
=\frac{1}{t}\,\Diag(\wb)\,\Delta\nb_t
+\frac{1}{t}\,\Delta h_t
+\frac{1}{t}\,\delta_r\,\tilde\xb_s,
\qquad
\Delta h_t:=\tilde h_t-h_t.
\end{equation}

Two-channel linearization of projected logits. Using the (population-optimal) one-step normal form aligned with \eqref{eq:real-update} and projecting away the softmax gauge,
\begin{equation}\label{eq:two-channel-delta}
\Delta\yb_{t+1}
:=\text{Proj}\!\Big(\widehat{\sbvec}^{(\mathrm{pert})}_{t+1}-\widehat{\sbvec}^{(\mathrm{base})}_{t+1}\Big)
=\frac{c}{t}\,\Big(\Vb\,\Delta\nb_t+\Ub\,\Delta\gb_t\Big)
=c\Big(\Vb\,\Delta\bar\nb_t+\Ub\,\Delta\bar\gb_t\Big).
\end{equation}
Softmax is $1/2$-Lipschitz on $\one^\perp$, and the $\gamma$-mixture \eqref{eq:pol} scales the sensitivity by $(1-\gamma)$, hence
\begin{equation}\label{eq:lip}
\|\Delta\pb_{t+1}\|
\ \le\ (1-\gamma)\,\frac{1}{2}\,\|\Delta\yb_{t+1}\|.
\end{equation}
Combining \eqref{eq:delta-gbar}–\eqref{eq:lip} and conditioning on $\cF_{s-1}$ yields
\begin{align}
a_{t+1}
\le& \frac{c(1-\gamma)}{2}\,
\EE\!\left[\,
\big\|\big(\Vb+\Ub\Diag(\wb)\big)\,\Delta\bar\nb_t
+\Ub\,\Delta\bar h_t
+\Ub\,\frac{\delta_r}{t}\,\tilde\xb_s
\big\|\ \Bigm|\ \cF_{s-1}\right]\notag\\
\le& \frac{c(1-\gamma)}{2t}\,
\Big(
\|\Vb+\Ub\Diag(\wb)\|_{\op}\ \EE[\|\Delta\nb_t\|\mid \cF_{s-1}]\notag\\
&+\|\Ub\|_{\op}\ \EE[\|\Delta h_t\|\mid \cF_{s-1}]
+\|\Ub\|_{\op}\,|\delta_r|
\Big).
\label{eq:a-upper-pre}
\end{align}

\paragraph{CRN coupling controls $\EE\|\Delta\nb_t\|$ and $\EE\|\Delta h_t\|$.}
By inverse-CDF coupling,
\[
\PP\!\big(\tilde A_u\neq A_u\mid \cF_{u-1}\big)
=\tfrac12\|\Delta\pb_u\|_1
\ \le\ \tfrac{\sqrt{K}}{2}\,\|\Delta\pb_u\|.
\]
Since $\|\Delta\xb_u\|\le \sqrt2$,
\begin{equation}\label{eq:delta-x}
\EE\!\big[\|\Delta\xb_u\|\mid \cF_{u-1}\big]
\ \le\ \sqrt{\tfrac{K}{2}}\ \|\Delta\pb_u\|.
\end{equation}
Summing \eqref{eq:delta-x} from $u=s$ to $t$ and conditioning on $\cF_{s-1}$,
\begin{equation}\label{eq:delta-n}
\EE\!\big[\|\Delta\nb_t\|\mid \cF_{s-1}\big]
\ \le\ \sqrt{\tfrac{K}{2}}\sum_{u=s}^t a_u.
\end{equation}
For the noise accumulator, using the zero-mean $\sigma_\epsilon$-sub-Gaussian assumption (hence $\EE[|\epsilon_u|\,|\,\cF_{u-1}]\le C_\epsilon:=\sqrt{2/\pi}\,\sigma_\epsilon$) gives
\begin{equation}\label{eq:delta-h}
\EE\!\big[\|\Delta h_t\|\mid \cF_{s-1}\big]
\ \le\ C_\epsilon\,\sqrt{\tfrac{K}{2}}\sum_{u=s}^t a_u.
\end{equation}

\paragraph{A Grönwall-type recursion and its solution.}
Plugging \eqref{eq:delta-n}–\eqref{eq:delta-h} into \eqref{eq:a-upper-pre}, define
\[
a\ :=\ \frac{c(1-\gamma)}{2}\,\|\Ub\|_{\op},\qquad
b\ :=\ \frac{c(1-\gamma)}{2}\,\sqrt{\frac{K}{2}}\,
\Big(\ \|\Vb+\Ub\Diag(\wb)\|_{\op}\ +\ C_\epsilon\,\|\Ub\|_{\op}\ \Big),
\]
and obtain, for $t\ge s$,
\begin{equation}\label{eq:recurrence}
a_{t+1}\ \le\ \frac{a}{t}\,|\delta_r|\ +\ \frac{b}{t}\,\sum_{u=s}^{t} a_u.
\end{equation}
Let $S_t:=\sum_{u=s}^{t} a_u$ with $S_{s-1}=0$. Then
\begin{equation}\label{eq:S-recursion}
S_{t+1}\ \le\ \Big(1+\frac{b}{t}\Big)\,S_t\ +\ \frac{a}{t}\,|\delta_r|.
\end{equation}
Unrolling \eqref{eq:S-recursion} and using the standard Gamma-ratio bound yields a constant $C_b>0$ (depending only on $b$) such that
\[
S_t\ \le\ a\,C_b\,|\delta_r|\sum_{u=s}^{t-1}\frac{1}{u}\left(\frac{t}{u}\right)^{\!b}
\ \le\ \frac{a\,C_b}{b}\,\Big(\frac{t}{s}\Big)^{\!b}\,|\delta_r|.
\]
Returning to \eqref{eq:recurrence},
\[
a_{t+1}\ \le\ \frac{a}{t}\,|\delta_r|\ +\ \frac{b}{t}\,S_t
\ \le\ \frac{a}{t}\,|\delta_r|\ +\ \frac{a\,C_b}{t}\,\Big(\frac{t}{s}\Big)^{\!b}\,|\delta_r|
\ =\ \frac{a(1+C_b)}{s}\,\Big(\frac{t}{s}\Big)^{\!b-1}\,|\delta_r|.
\]
Therefore, for any $1\le s\le t$,
\begin{equation}\label{eq:crn-ce-bound-s-explicit-app}
\EE\!\left[\ \|\Delta\widehat\pb_{t+1}^s\|_2\ \bigm|\ \cF_{s-1}\right]
\ \le\ \frac{a(1+C_b)}{s}\left(\frac{t}{s}\right)^{b-1}|\delta_r|.
\end{equation}
In particular, if the learning-rate constant $c$ is small enough so that $b<1$, then $\left(\tfrac{t}{s}\right)^{b-1}\!\to 0$ as $t\to\infty$, and the one-shot impact decays to zero:
\[
\lim_{t\to\infty}\EE\!\left[\ \|\Delta\widehat\pb_{t+1}^s\|_2\ \bigm|\ \cF_{s-1}\right]=0.
\]
This completes the proof.
\end{proof}

\subsection{KL Divergence vs.\ Fisher-weighted Duadratic}
\label{app:kl-vs-fisher}

\begin{proof}[Proof of Theorem~\ref{thm:kl-vs-fisher-quadratic-mixed}]
Recall that $N$ denotes the trajectory length in the dataset construction of Sec.~\ref{sec:theory}, and expectations below average over $(\tau,t)$ with $t\in\{1,\dots,N-1\}$ and $\tau\in\cD$.
Let $\tilde\pb(\sbvec):=\softmax(\sbvec)$ and define the projected logit error
\[
\Delta_{t+1}\ :=\ \text{Proj}\big(\hat\sbvec_{t+1}-\sbvec^{\ftrl}_{t+1}\big)\in\one^\perp,
\qquad
\text{Proj}:=\Ib-\tfrac{1}{K}\one\one^\top.
\]
Write the (softmax) Fisher matrix as $\Fb(\pb):=\Diag(\pb)-\pb\pb^\top$ and recall from Eq.~\eqref{eq:expt} that
\[
\cL(\btheta)\;=\;\EE\!\left[\frac{1}{N-1}\sum_{t=1}^{N-1}\Delta_{t+1}^\top\bGamma\,\Delta_{t+1}\right],
\qquad
\bGamma\ :=\ \frac{1}{N-1}\EE\!\left[\sum_{t=1}^{N-1}\Fb\!\big(\pb^{\ftrl}_{t}\big)\right].
\]
By Lemma~\ref{lem:fisher-bounds} (mixture curvature), along the teacher’s $\gamma$-mixture policy we have the spectral sandwich on $\one^\perp$:
\begin{equation}
\label{eq:mix-sandwich-app}
\frac{\gamma}{K}\,\Ib\ \preceq\ \Fb\!\big(\pb^{\ftrl}_{t}\big)\Big|_{\one^\perp}\ \preceq\ \frac12\,\Ib.
\end{equation}
By convexity of the PSD cone, the same bounds transfer to $\bGamma$ on $\one^\perp$.

\paragraph{Upper bound.}
Mixing with the uniform distribution is a Markov kernel; by data processing for $f$-divergences,
\[
\KL\!\big(\pb^{\ftrl}_{t+1}\,\Vert\,\hat \pb_{t+1}\big)
\ \le\
\KL\!\big(\tilde \pb(\sbvec^{\ftrl}_{t+1})\,\Vert\,\tilde \pb(\hat\sbvec_{t+1})\big)
=\KL\!\big(\tilde \pb(\sbvec^{\ftrl}_{t+1})\,\Vert\,\tilde \pb(\sbvec^{\ftrl}_{t+1}+\Delta_{t+1})\big).
\]
Let $\phi(\sbvec):=\log\sum_i e^{s_i}$; then $\nabla\phi=\softmax$ and $\nabla^2\phi(\sbvec)=\Fb(\tilde\pb(\sbvec))$.
The Bregman integral form of KL yields, for any $\Delta\in\one^\perp$,
\begin{equation}
\label{eq:bregman-kl}
\KL\!\big(\tilde \pb(\sbvec)\,\Vert\,\tilde \pb(\sbvec+\Delta)\big)
=\int_0^1 (1-\tau)\,\Delta^\top \Fb\!\big(\tilde \pb(\sbvec+\tau\Delta)\big)\,\Delta\,d\tau.
\end{equation}
Using $\|\Fb(\cdot)\big|_{\one^\perp}\|_{\op}\le \tfrac12$ and $\int_0^1(1-\tau)\,d\tau=\tfrac12$,
\[
\KL\!\big(\pb^{\ftrl}_{t+1}\,\Vert\,\hat \pb_{t+1}\big)
\ \le\ \tfrac14\,\|\Delta_{t+1}\|_2^2.
\]
From \eqref{eq:mix-sandwich-app}, $\Delta_{t+1}^\top\bGamma\,\Delta_{t+1}\ \ge\ \tfrac{\gamma}{K}\|\Delta_{t+1}\|_2^2$, hence
$\|\Delta_{t+1}\|_2^2\ \le\ \tfrac{K}{\gamma}\,\Delta_{t+1}^\top\bGamma\,\Delta_{t+1}$.
Averaging over $t$ and taking expectation,
\[
\EE\!\left[\frac{1}{N-1}\sum_{t=1}^{N-1}\KL\!\big(\pb^{\ftrl}_{t+1}\,\Vert\,\hat \pb_{t+1}\big)\right]
\ \le\ \frac{K}{4\gamma}\,\cL(\btheta).
\]

\paragraph{Lower bound.}
Pinsker’s inequality implies $\KL(\pb\|\qb)\ge \tfrac12\|\pb-\qb\|_1^2\ge \tfrac12\|\pb-\qb\|_2^2$.
Let $f(\sbvec):=(1-\gamma)\,\tilde \pb(\sbvec)+\gamma\,\one/K$ so that
$\pb^{\ftrl}_{t+1}=f(\sbvec^{\ftrl}_{t+1})$ and $\hat \pb_{t+1}=f(\hat\sbvec_{t+1})$.
By the mean-value integral,
\[
\pb^{\ftrl}_{t+1}-\hat \pb_{t+1}
=\int_0^1 \nabla f(\sbvec^{\ftrl}_{t+1}+\tau\Delta_{t+1})\,\Delta_{t+1}\,d\tau
=(1-\gamma)\!\int_0^1 \Fb\!\big(\tilde \pb(\sbvec^{\ftrl}_{t+1}+\tau\Delta_{t+1})\big)\,\Delta_{t+1}\,d\tau,
\]
whence $\|\pb^{\ftrl}_{t+1}-\hat \pb_{t+1}\|_2 \le (1-\gamma)\cdot \tfrac12\,\|\Delta_{t+1}\|_2$ and thus
\[
\|\Delta_{t+1}\|_2^2 \ \ge\ \frac{4}{(1-\gamma)^2}\,\|\pb^{\ftrl}_{t+1}-\hat \pb_{t+1}\|_2^2.
\]
Using the upper side of \eqref{eq:mix-sandwich-app}, $\Delta_{t+1}^\top\bGamma\,\Delta_{t+1}\le \tfrac12\,\|\Delta_{t+1}\|_2^2$, we obtain
\[
\cL(\btheta)
\ \le\ \frac12\,\EE\!\left[\frac{1}{N-1}\sum_{t=1}^{N-1}\|\Delta_{t+1}\|_2^2\right]
\ \le\ \frac{2}{(1-\gamma)^2}\,\EE\!\left[\frac{1}{N-1}\sum_{t=1}^{N-1}\|\pb^{\ftrl}_{t+1}-\hat \pb_{t+1}\|_2^2\right].
\]
Finally, combining with Pinsker yields
\[
\EE\!\left[\frac{1}{N-1}\sum_{t=1}^{N-1}\KL\!\big(\pb^{\ftrl}_{t+1}\,\Vert\,\hat \pb_{t+1}\big)\right]
\ge \frac12\,\EE\!\left[\frac{1}{N-1}\sum_{t=1}^{N-1}\|\pb^{\ftrl}_{t+1}-\hat \pb_{t+1}\|_2^2\right]
 \ge \frac{(1-\gamma)^2}{4}\,\cL(\btheta).
\]
This proves the two-sided bound.
\end{proof}

\subsection{Closed-Loop Imitation of Policy Optimization}
\begin{proof}[Proof of Theorem~\ref{thm:pop-equiv-under-scb}]
By Lemma~\ref{lem:twoch-exact}, the \emph{projected} student logits are linear in the normalized statistics:
\[
\text{Proj}\,\widehat{\sbb}_{t+1}\;=\;\bar\Wb\,\bar\zb_t,
\qquad
\bar\Wb:=[\,\Wb_n\ \ \Wb_g\,]\in\RR^{K\times 2K},
\quad
\bar\zb_t:=\binom{\bar\nb_t}{\bar\gb_t}\in\RR^{2K}.
\]
For the teacher generated by \eqref{eq:real-update} with $\eta_t=c/t$, we likewise have
\[
\yb_{t+1}:=\text{Proj}\,\sbb^{\ftrl}_{t+1}
\;=\;\frac{c}{t}\,\text{Proj}\!\big(\Vb\,\nb_t+\Ub\,\gb_t\big)
\;=\;\bar\Wb_\star\,\bar\zb_t,
\qquad
\bar\Wb_\star:=\text{Proj}\,[\,\Vb\ \ \Ub\,]\in\RR^{K\times 2K},
\]
so the \emph{labels are realizable} by the same two-channel structure.

Lemma~\ref{lem:fisher-quad} gives the population Fisher–weighted quadratic form
\[
\cL(\btheta)
=\frac12\,\tr(\bGamma\,\bar\Wb\,\bar\bSigma\,\bar\Wb^\top)
-\tr(\bGamma\,\bSigma_{y\bar z}\,\bar\Wb^\top)+\text{const},
\qquad
\bGamma:=\EE\!\big[\Fb(\pb^{\ftrl}_{t+1})\big],\ \ \bar\bSigma:=\EE[\bar\zb_t\bar\zb_t^\top].
\]
Using realizability $\yb_{t+1}=\bar\Wb_\star\bar\zb_t$, we have
$\bSigma_{y\bar z}=\EE[\yb_{t+1}\bar\zb_t^\top]
=\EE[\bar\Wb_\star\bar\zb_t\bar\zb_t^\top]=\bar\Wb_\star\bar\bSigma$.
Substituting gives the \emph{completed-square} form
\[
\cL(\btheta)
=\frac12\,\tr\!\Big(\bGamma\,(\bar\Wb-\bar\Wb_\star)\,\bar\bSigma\,(\bar\Wb-\bar\Wb_\star)^\top\Big)
+\text{const}.
\]

By Lemma~\ref{lem:fisher-bounds} (mixture curvature),
$\bGamma\big|_{\one^\perp}\succeq(\gamma/K)\,I$; by Lemma~\ref{lem:sigma-lb-block-global},
$\bar\bSigma\big|_{S}\succeq\underline\sigma\,I$ on $S:=\one^\perp\oplus\one^\perp$ under
Assumption~\ref{assump:data_coverage_signal}. Hence the quadratic is
\emph{strongly convex} in $\bar\Wb$ on $S$ and has the unique minimizer
$\bar\Wb=\bar\Wb_\star$. Therefore, for any history prefix $\cH_t$,
\[
\text{Proj}\,\widehat{\sbb}_{t+1}(\cH_t;\btheta^\star)
=\bar\Wb_\star\,\bar\zb_t
=\text{Proj}\,\sbb^{\ftrl}_{t+1}(\cH_t).
\]
Since softmax is invariant to additive constants (the projected logits remove exactly the
$\mathrm{span}\{\one\}$ gauge), the induced policies coincide:
$\widehat\pb_{t+1}(\cH_t;\btheta^\star)=\pb^{\ftrl}_{t+1}(\cH_t)$.
\end{proof}

\begin{proof}[Proof of Theorem~\ref{thm:finite_sample_equivalence_concise}]
By Lemma~\ref{lem:twoch-exact}, the projected student logits are linear in the normalized statistics:
$\text{Proj}\,\widehat{\sbb}_{t+1}=\bar\Wb\,\bar\zb_t$ with $\bar\Wb=[\Wb_n\ \Wb_g]$ and $\bar\zb_t=(\bar\nb_t^\top,\bar\gb_t^\top)^\top$. Let $\yb_{t+1}:=\text{Proj}\,(\sbb^{\ftrl}_{t+1})$. Lemma~\ref{lem:fisher-quad} and Lemma~\ref{lem:fisher-quad-empirical} give the population and empirical Fisher–weighted quadratic forms, and Theorem~\ref{thm:pop-equiv-under-scb} (under Assumption~\ref{assump:data_coverage_signal}) guarantees realizability: there exists a unique $\bar\Wb_\star$ such that $\yb_{t+1}=\bar\Wb_\star\bar\zb_t$ for all prefixes. Moreover, Lemma~\ref{lem:fisher-bounds} implies $\bGamma\big|_{\one^\perp}\succeq(\gamma/K)I$, and Lemma~\ref{lem:sigma-lb-block-global} implies $\bar\bSigma\big|_{S}\succeq \underline\sigma I$ with $\underline\sigma\asymp c_\lambda/t$ on $S:=\one^\perp\oplus\one^\perp$, so the population quadratic in $\bar\Wb$ is strongly convex on $S$ with unique minimizer $\bar\Wb_\star$. For the empirical problem, Lemma~\ref{lem:barSigma-concentration} yields (for i.i.d.\ rollouts) 
\[
\|\hat{\bar\bSigma}-\bar\bSigma\|_{\op}\ \le\ C_1L^2\Big(\sqrt{\tfrac{2K+\log(1/\delta)}{M}}+\tfrac{2K+\log(1/\delta)}{M}\Big).
\]
Taking $M\ge C\,t^2(2K+\log(1/\delta))/c_\lambda^2$ (for a large enough absolute $C$ depending on $C_1,L$) guarantees $\hat{\bar\bSigma}\big|_{S}\succeq \tfrac12\,\underline\sigma I$, while $\hat\bGamma\big|_{\one^\perp}\succeq(\gamma/K)I$ by mixture curvature. Plugging the realizable labels $\yb_{t+1}=\bar\Wb_\star\bar\zb_t$ into Lemma~\ref{lem:fisher-quad-empirical} gives
\[
\hat\cL(\bar\Wb)=\tfrac12\,\tr\!\big(\hat\bGamma\,(\bar\Wb-\bar\Wb_\star)\,\hat{\bar\bSigma}\,(\bar\Wb-\bar\Wb_\star)^\top\big)+\text{const},
\]
so the empirical minimizer satisfies $\bar\Wb=\bar\Wb_\star$ whenever $\hat{\bar\bSigma}\big|_{S}\succ0$ (which holds with probability $\ge 1-\delta$). Thus $\text{Proj}\,\widehat{\sbb}_{t+1}=\text{Proj}\,\sbb^{\ftrl}_{t+1}$ and consequently $\widehat\pb_{t+1}(\cH_t;\hat\btheta)=\pb^{\ftrl}_{t+1}(\cH_t)$ for the fixed test history $\cH_t$. For the expected mismatch, on the high-probability event the error is $0$, and on its complement a crude bound together with the $(1-\gamma)$ mixture factor yields $\|\widehat\pb_{t+1}-\pb^{\ftrl}_{t+1}\|_2^2\le 2(1-\gamma)^2$, hence
\[
\EE_{\mathrm{train}}\big[\|\widehat\pb_{t+1}-\pb^{\ftrl}_{t+1}\|_2^2\big]\ \le\ 0\cdot(1-\delta)+2(1-\gamma)^2\cdot\delta
=2(1-\gamma)^2\delta.
\]
This proves the theorem.
\end{proof}

\subsection{Convergence of the Fisher-trained Two-channel LSA}
\label{subsec:nonfact-convergence}

We analyze the continuous-time gradient flow on the two-channel operator $\bar\Wb$ for the Fisher-weighted quadratic objective in \eqref{eq:fish-quad}.

\begin{theorem}[Exponential convergence of the $\bar\Wb$–flow]
\label{thm:nonfact-exp}
Assume mixture exploration and that Assumption~\ref{assump:data_coverage_signal},
Lemma~\ref{lem:fisher-bounds}, and Lemma~\ref{lem:sigma-lb-block-global} hold.
Consider the gradient flow for the \emph{population} Fisher–weighted quadratic $\cL$:
\begin{equation}\label{eq:flow-nonfact}
\dot{\bar\Wb}(t)\ =\ -\,\nabla_{\bar\Wb}\cL\big(\bar\Wb(t)\big)
\ =\ -\,\bGamma\big(\bar\Wb(t)\,\bar\bSigma-\bSigma_{y\bar z}\big),
\end{equation}
initialized at any $\bar\Wb(0)$. Let $\bar\Wb^\star$ be a global minimizer of $\cL$.
Then $\cL\big(\bar\Wb(t)\big)$ is strictly decreasing along the flow and
\begin{equation}\label{eq:nonfact-exp-rate}
\cL\big(\bar\Wb(t)\big)-\cL(\bar\Wb^\star)
\ \le\ \exp(-2\mu\,t)\,\big(\cL\big(\bar\Wb(0)\big)-\cL(\bar\Wb^\star)\big),
\end{equation}
with
\[
\mu\ :=\ \lambda_{\min}^+\!\big(\bGamma\big|_{\one^\perp}\big)\;
          \lambda_{\min}^+\!\big(\bar\bSigma\big|_{S}\big),
\qquad
S:=\one^\perp\oplus\one^\perp,
\]
and, by Lemma~\ref{lem:fisher-bounds} and Lemma~\ref{lem:sigma-lb-block-global},
\[
\lambda_{\min}^+\!\big(\bGamma\big|_{\one^\perp}\big)\ \ge\ \frac{\gamma}{K},
\qquad
\lambda_{\min}^+\!\big(\bar\bSigma\big|_{S}\big)\ \ge\ \underline{\sigma}\ >\ 0,
\qquad\Rightarrow\qquad
\mu\ \ge\ \frac{\gamma}{K}\cdot \underline{\sigma}.
\]
\end{theorem}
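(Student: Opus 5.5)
The argument is the standard Lyapunov/PL argument for gradient flow, with Lemma~\ref{lem:grad-hess-nonfact} and Lemma~\ref{lem:PL} supplying all the ingredients. Set $\phi(t):=\cL(\bar\Wb(t))-\cL(\bar\Wb^\star)\ge 0$. By the chain rule and the gradient formula \eqref{eq:grad-barW},
\[
\dot\phi(t)=\big\langle \nabla_{\bar\Wb}\cL(\bar\Wb(t)),\,\dot{\bar\Wb}(t)\big\rangle=-\big\|\nabla_{\bar\Wb}\cL(\bar\Wb(t))\big\|_F^2\le 0 .
\]
So $\cL$ is nonincreasing along \eqref{eq:flow-nonfact}. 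It is strictly decreasing at every time when the gradient is nonzero. If the gradient vanishes at some time, then by Lemma~\ref{lem:PL} the point is already a global minimizer. The flow then stays there, since the vector field is zero, and the rate bound holds trivially. I would therefore read ``strictly decreasing'' as holding whenever $\phi>0$.

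Next I would invoke the PL inequality \eqref{eq:PL-ineq}, $\|\nabla\cL\|_F^2\ge 2\mu\,\phi$, to get the differential inequality $\dot\phi\le -2\mu\phi$. Grönwall then gives $\frac{d}{dt}\big(e^{2\mu t}\phi(t)\big)\le 0$, hence $\phi(t)\le e^{-2\mu t}\phi(0)$, which is \eqref{eq:nonfact-exp-rate}. The lower bound $\mu\ge(\gamma/K)\underline\sigma$ is exactly \eqref{eq:mu-lb}, using Lemma~\ref{lem:fisher-bounds} for $\bGamma$ and Lemma~\ref{lem:sigma-lb-block-global} for $\bar\bSigma$ under Assumption~\ref{assump:data_coverage_signal}.

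The main obstacle is the gauge and null directions. Both $\bGamma$ (which annihilates $\one$) and $\bar\bSigma$ (which is only positive definite on $S$) are degenerate, so $\mu$ is defined through restricted minimal eigenvalues. The PL step in Lemma~\ref{lem:PL} implicitly uses that $\Delta=\bar\Wb-\bar\Wb^\star$ has rows and columns in the right subspaces. To make this rigorous, I would decompose $\Delta$ into its component supported on $\one^\perp\otimes S$ and the complementary null component.

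On the null component, both the loss and the gradient vanish. This holds because $\bGamma\one=0$, and because directions outside $S$ that $\bar\bSigma$ does not see contribute nothing to $\tr(\bGamma\Delta\bar\bSigma\Delta^\top)$. Concretely, I would note that the flow $\dot{\bar\Wb}=-\bGamma(\cdot)$ has range in $\one^\perp$, so the $\one$-row component of $\bar\Wb$ is frozen and irrelevant to $\cL$. I would then choose $\bar\Wb^\star$ to share that frozen component. With this choice, $\Delta$ lies in the subspace where the inequality $\tr(\Xb^\top\Ab\Xb\Bb)\ge\lambda^+_{\min}(\Ab)\lambda^+_{\min}(\Bb)\|\Xb\|_F^2$ applies. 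Should the column null space of $\bar\bSigma$ outside $S$ be nontrivial, the same reasoning shows that component of $\Delta$ changes neither $\cL$ nor $\nabla\cL$. The bound on $\phi$ is then unaffected.
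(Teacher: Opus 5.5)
Your main line is exactly the paper's proof: the energy identity $\dot\phi=-\|\nabla_{\bar\Wb}\cL\|_F^2$, then the PL inequality of Lemma~\ref{lem:PL}, then Gr\"onwall, then $\mu\ge(\gamma/K)\underline\sigma$ from Lemmas~\ref{lem:fisher-bounds} and~\ref{lem:sigma-lb-block-global}. Your treatment of the output-side gauge is also correct: $\one^\top\bGamma=0$ freezes $\one^\top\bar\Wb(t)$, and $\cL$ and its minimizer set are invariant under $\bar\Wb\mapsto\bar\Wb+\one c^\top$. The trouble is on the feature side, in the very step you flagged as the main obstacle.

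\textbf{Why your repair fails.} You treat the complement of $S$ in $\RR^{2K}$ as directions that $\bar\bSigma$ ``does not see.'' That is false.
\begin{itemize}
\item We have $S^\perp=\mathrm{span}\{(\one;\zero),(\zero;\one)\}$, and $\one^\top\bar\nb_t=1$ identically. Hence $(\one;\zero)^\top\bar\bSigma(\one;\zero)=\EE[(\one^\top\bar\nb_t)^2]=1$.
\item $S$ is not $\bar\bSigma$-invariant: $(\one;\zero)^\top\bar\bSigma(u;v)=\EE[u^\top\bar\nb_t+v^\top\bar\gb_t]$ need not vanish for $(u;v)\in S$.
\item So the rows of $\Delta(t)$ cannot be confined to $S$. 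The initialization is arbitrary, and $\dot\Delta=-\bGamma\Delta\bar\bSigma$ would push rows out of $S$ anyway.
\item The cross terms in $\tr(\bGamma\Delta\bar\bSigma\Delta^\top)$ between the $S$ and $S^\perp$ parts survive.
\end{itemize}
No decomposition can place $\Delta$ where the $S$-restricted eigenvalue bound applies.

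\textbf{The target inequality is false in general.} Let $\alpha:=\lambda^+_{\min}(\bGamma|_{\one^\perp})$ with unit eigenvector $a\in\one^\perp$. Let $\beta:=\lambda^+_{\min}(\bar\bSigma)$, taken over all of $\RR^{2K}$, with unit eigenvector $z$. Initializing at $\Delta(0)=az^\top$ gives $\Delta(t)=\exp(-\alpha\beta t)\,az^\top$. Hence $\phi(t)=\exp(-2\alpha\beta t)\,\phi(0)$ with $\phi(0)=\alpha\beta/2>0$. This violates \eqref{eq:nonfact-exp-rate} whenever $\beta<\lambda^+_{\min}(\bar\bSigma|_S)$. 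By Courant--Fischer, for $\bar\bSigma\succ0$ one always has $\beta\le\lambda_{\min}(\bar\bSigma|_S)$, with equality only if a bottom eigenvector of $\bar\bSigma$ lies in $S$.

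\textbf{What does hold.} PL holds with constant $\alpha\beta$, with no decomposition at all. The map $\mathcal{T}:\Delta\mapsto\bGamma\Delta\bar\bSigma$ is self-adjoint and PSD with eigenvalues $\lambda_i(\bGamma)\lambda_j(\bar\bSigma)$. Therefore $\|\nabla\cL\|_F^2=\langle\Delta,\mathcal{T}^2\Delta\rangle\ge 2\alpha\beta\,\phi$, and null directions drop out automatically. Recovering a bound like $\mu\ge(\gamma/K)\underline\sigma$ then requires a lower bound on $\bar\bSigma$ over all of $\RR^{2K}$, including the $(\one;\zero)$ and $(\zero;\one)$ directions. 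Lemma~\ref{lem:sigma-lb-block-global} does not provide this.

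The same defect sits in the paper's Lemma~\ref{lem:PL}. It applies the $\bar\bSigma|_S$ eigenvalue to $\Xb=\bGamma^{1/2}\Delta\bar\bSigma^{1/2}$, whose rows lie in $\mathrm{range}(\bar\bSigma)$ rather than in $S$. So your argument is exactly as complete as the paper's, but your repair does not close the gap. The rate should be stated with the full-space $\lambda^+_{\min}(\bar\bSigma)$.
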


\begin{proof}[Proof of Theorem~\ref{thm:nonfact-exp}]
By Lemma~\ref{lem:grad-hess-nonfact}, we have
\[
\nabla_{\bar\Wb}\cL(\bar\Wb)=\bGamma\big(\bar\Wb\bar\bSigma-\bSigma_{y\bar z}\big),
\]
so the gradient flow
\(
\dot{\bar\Wb}(t)=-\,\nabla_{\bar\Wb}\cL(\bar\Wb(t))
\)
is well-defined. Along the trajectory,
\[
\frac{d}{dt}\,\cL\big(\bar\Wb(t)\big)
=\big\langle \nabla_{\bar\Wb}\cL\big(\bar\Wb(t)\big),\ \dot{\bar\Wb}(t)\big\rangle
=-\,\big\|\nabla_{\bar\Wb}\cL\big(\bar\Wb(t)\big)\big\|_F^2
\ \le\ 0,
\]
hence $\cL(\bar\Wb(t))$ is nonincreasing and strictly decreasing whenever
$\nabla_{\bar\Wb}\cL(\bar\Wb(t))\neq \zero$.

Next, apply the Polyak–Łojasiewicz (PL) inequality on the restricted subspace
$S:=\one^\perp\oplus\one^\perp$ (Lemma~\ref{lem:PL}). Let
\[
\mu\ :=\ \lambda_{\min}^+\!\big(\bGamma\big|_{\one^\perp}\big)\;
          \lambda_{\min}^+\!\big(\bar\bSigma\big|_{S}\big).
\]
Then for any global minimizer $\bar\Wb^\star$,
\[
\cL(\bar\Wb)-\cL(\bar\Wb^\star)
\ \le\ \frac{1}{2\mu}\,\big\|\nabla_{\bar\Wb}\cL(\bar\Wb)\big\|_F^2.
\]
Combining this with the energy decay identity gives, for all $t\ge 0$,
\[
\frac{d}{dt}\,\big(\cL(\bar\Wb(t))-\cL(\bar\Wb^\star)\big)
\ =\ -\,\big\|\nabla_{\bar\Wb}\cL(\bar\Wb(t))\big\|_F^2
\ \le\ -\,2\mu\,\big(\cL(\bar\Wb(t))-\cL(\bar\Wb^\star)\big).
\]
By Grönwall’s inequality,
\[
\cL\big(\bar\Wb(t)\big)-\cL(\bar\Wb^\star)
\ \le\ \exp(-2\mu\,t)\,\big(\cL\big(\bar\Wb(0)\big)-\cL(\bar\Wb^\star)\big).
\]

Finally, by mixture exploration (Lemma~\ref{lem:fisher-bounds}) and the lower bound on the
population second moment on $S$ (Lemma~\ref{lem:sigma-lb-block-global}),
\[
\lambda_{\min}^+\!\big(\bGamma\big|_{\one^\perp}\big)\ \ge\ \frac{\gamma}{K},
\qquad
\lambda_{\min}^+\!\big(\bar\bSigma\big|_{S}\big)\ \ge\ \underline{\sigma}\ > 0,
\]
which implies $\mu\ge (\gamma/K)\,\underline{\sigma}$. Strict decrease of the loss
holds except at stationary points; by Lemma~\ref{lem:grad-hess-nonfact} together with the
restricted positive definiteness, these coincide with the global minimizers on $\one^\perp$.
\end{proof}

\section{Experimental  Details}
\label{app:exp_setup}

This section provides additional details regarding the experimental setup, including specific hyperparameters for our method and the baselines, as well as the hardware and software environment used for all experiments. We also present supplementary experimental results that complement the main text.

\subsection{Hyperparameter Sensitivity.}
\label{app:hyperparameterssensitivity}
Figure~\ref{fig:hyperparam} shows the impact of varying the number of in-context optimization rounds ($n$) and candidates per round ($k$) on AIME 2024. As shown in Figure~\ref{fig:vary_k}, performance improves substantially as the number of candidates ($k$) increases from 2 to 64, with Maj@16 accuracy more than doubling, then saturates beyond this point. Increasing the number of rounds ($n$) is broadly beneficial up to a peak near $n=5$. A complementary grid and the corresponding latency/VRAM measures are reported in Appendix~\ref{app:empirical_cost} (Tables~\ref{tab:emp_cost_meicrl}).

\begin{figure}[h!]
    \centering
    \begin{subfigure}[b]{0.48\textwidth}
        \centering
        \includegraphics[width=\textwidth]{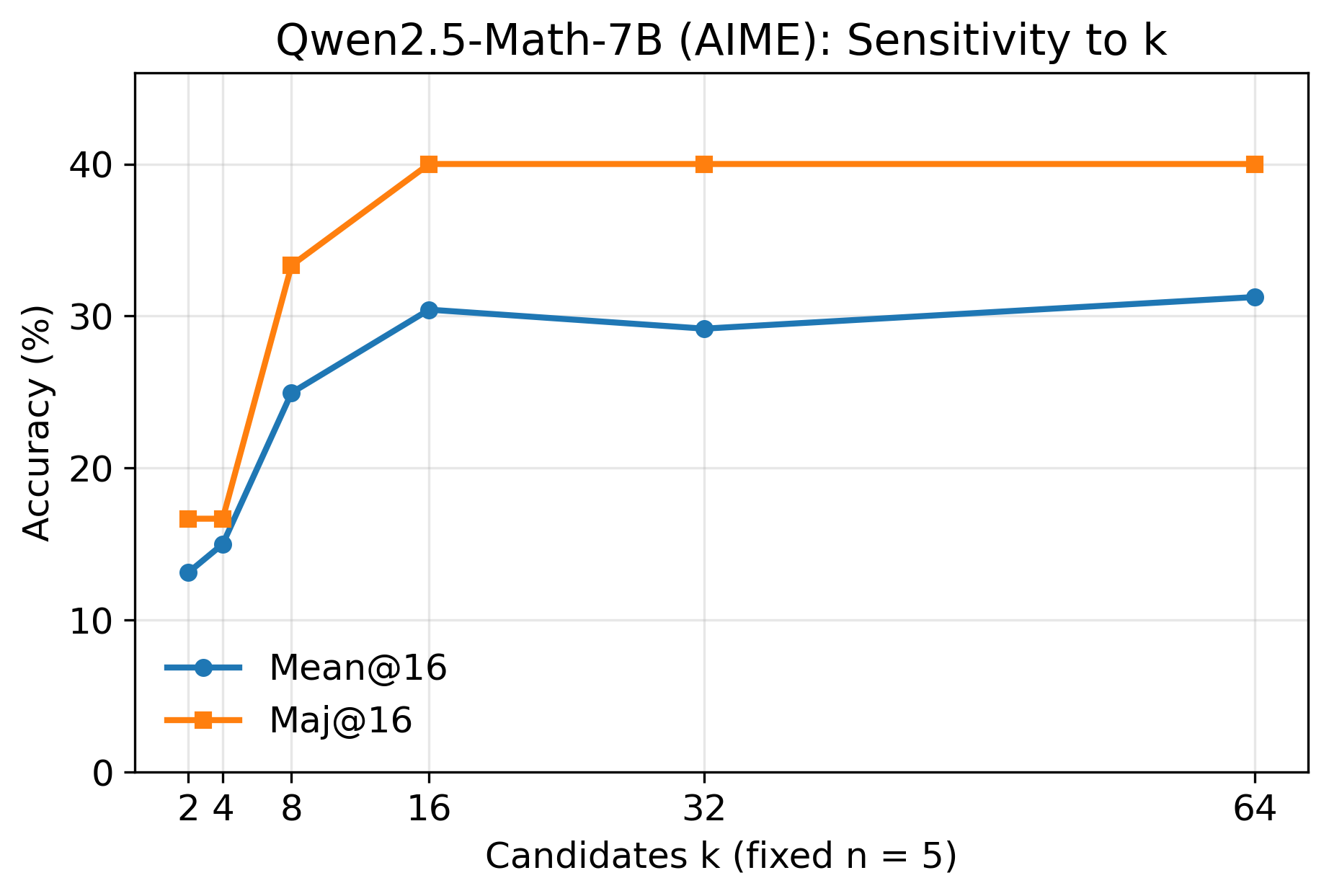}
        \caption{Varying candidates $k$ (fixed $n=5$)}
        \label{fig:vary_k}
    \end{subfigure}
    \hfill
    \begin{subfigure}[b]{0.48\textwidth}
        \centering
        \includegraphics[width=\textwidth]{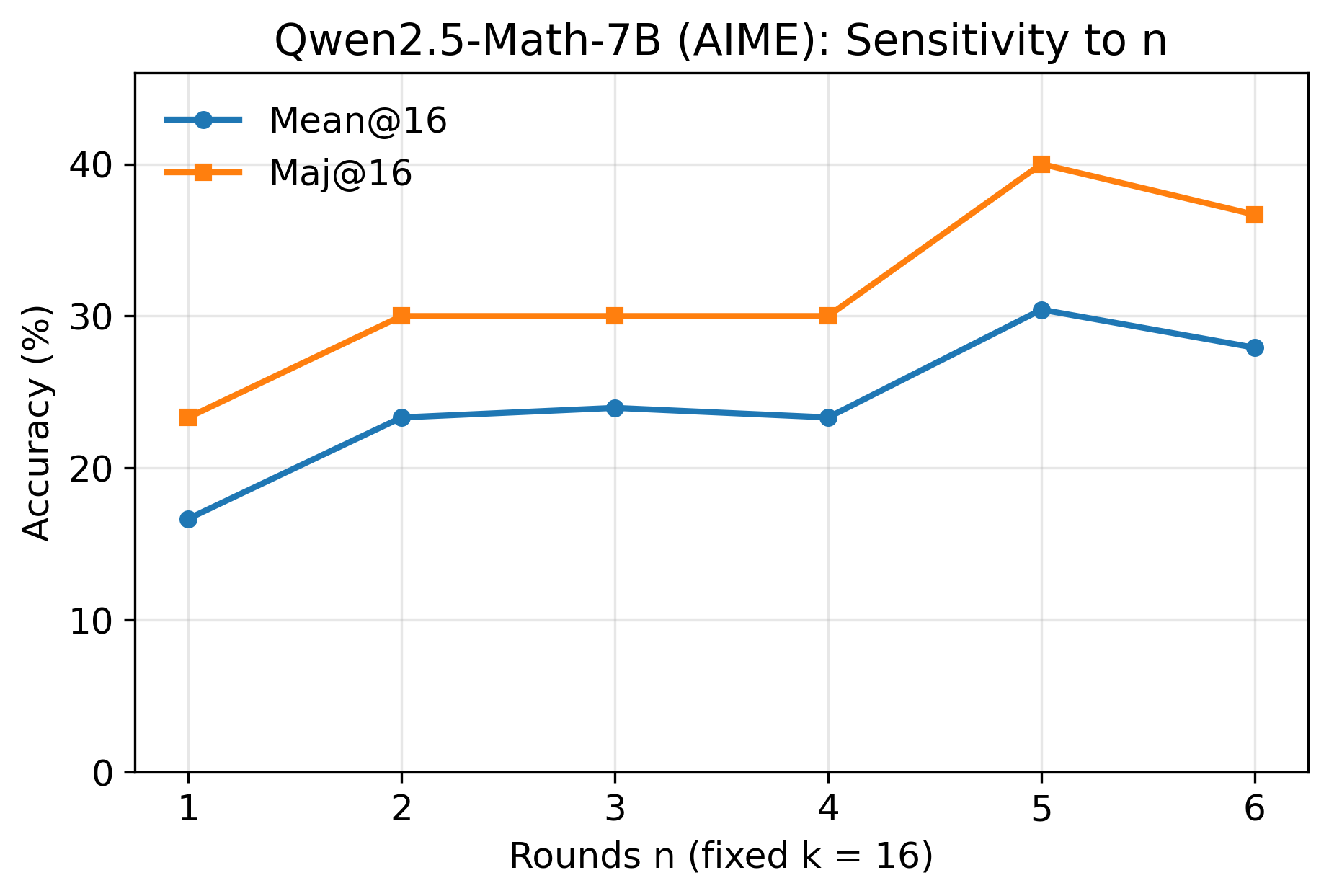}
        \caption{Varying rounds $n$ (fixed $k=16$)}
        \label{fig:vary_n}
    \end{subfigure}
    \caption{Hyperparameter sensitivity of ME-ICPO on AIME 2024 with Qwen2.5-Math-7B.}
    \label{fig:hyperparam}
\end{figure}

\subsection{Hyperparameter and Implementation Details}
\label{app:hyperparameters}

\paragraph{ME-ICPO (Our Method).}
For our method, we use the primary settings described in the main text ($n=5$ rounds of optimization, $k=16$ candidates per round). Candidate chains-of-thought are sampled using a temperature of 0.6 and a top-p value of 0.95. The summarization step, which is a crucial component for managing context length, uses greedy decoding and is prompted to produce a concise summary of approximately 100 tokens, with a hard limit of 500 tokens, focusing on the reasoning strategy. The predictive answer distribution, used for the entropy calculation, is also estimated with a temperature of 0.6. The full text for all system and summarization prompts is provided in Appendix~\ref{app:prompt}.



\subsection{Hardware and Environment Configuration}
\label{app:environment}
All experiments were conducted on a single server node equipped with 8 NVIDIA L40S GPUs, each with 48GB of HBM2e memory. Our implementation uses PyTorch 2.2, vLLM 0.10.0, and Transformers 4.55.0. The operating system is Red Hat Enterprise Linux 9.6, and the environment is managed via Conda with Python 3.11.7 and CUDA 12.9.

\subsection{Supplementary Experimental Results}
\label{subsec:add-exp}

This section contains supplementary results and comparisons added during the rebuttal phase to address specific computational and scalability questions.

\subsubsection{Computational Cost and Baseline Comparison}

To assess efficiency, we compare ME-ICPO against standard prompt-style test-time search baselines—Tree of Thoughts (ToT)~\citep{yao2023tree} and Monte-Carlo Tree Refinement (MCTR)~\citep{zhang2024accessing}—and the training-based test-time scaling method TTRL~\citep{zuo2025ttrl}. 
For fairness, ME-ICPO uses a fixed configuration of $N{=}5$ refinement rounds and $k{=}16$ samples per round, averaged over 5 seeds. 
We report average inference time (seconds per question) on AIME 2024. 
Because ToT/MCTR employ different search depths and branching factors, their runs are not strictly time-matched; for TTRL, we control the experiment to use a similar GPU hours as ME-ICPO. 
As shown in Table~\ref{tab:cost-mean} and Table~\ref{tab:cost-acc}, ME-ICPO achieves top-tier accuracy and Mean@16 at a competitive compute budget, reflecting principled gains from the underlying Policy Optimization mechanism. Shallow-search prompt methods (ToT/MCTR) can be faster when search depth is limited but lag notably in accuracy, while under matched wall-clock budgets ME-ICPO attains higher Acc/Mean@16 than TTRL, demonstrating more efficient scaling; importantly, ICPO provides a mechanistic account of test-time self-refinement as policy optimization rather than a heuristic leaderboard tweak.

\subsubsection{Frontier Model Scalability}

To demonstrate the versatility of our framework, we evaluate ME-ICPO on recent frontier models, including Qwen3-4B-Instruct~\citep{yang2024qwen2} (a long-CoT specialized model), and the Gemini-2.5 series (Pro and Flash)~\citep{comanici2025gemini}.
Tables~\ref{tab:frontier-aime} show that ME-ICPO consistently enhances performance across these diverse architectures, confirming its scalability.

\subsubsection{Harder Task Generalization}

We further evaluate performance on exceptionally difficult, high-school/collegiate level math competition tasks, specifically the Harvard-MIT Mathematics Tournament (HMMT)~\citep{balunovic2025matharena} and the APEX-shortlist dataset~\citep{balunovic2025matharena}.
Results in Table~\ref{tab:harder-7b} and Table~\ref{tab:harder-flash} indicate that ME-ICPO provides robustness even on tasks with low baseline solve rates.
\begin{table}[H]
\centering
\caption{Computational Cost and Performance Comparison (Mean@16, Qwen2.5-Math-7B) against Inference and Training Baselines.}
\label{tab:cost-mean}
\begin{tabular}{l|cccc}
\toprule
Method & AIME-2024 & AMC & MATH(Avg) & Time(s/question)\\
\midrule
ToT (self eval) & 4.38 & 16.19 & 12.51 & 708\\
ToT (Maj vote) & 19.58 & 29.37 & 35.63 & \textbf{363}\\
MCTR & 4.60 & 1.20 & 17.20 & 1758\\
TTRL & 27.20 & 45.18 & 46.83 & 1253 \\
\textbf{ME-ICPO (ours)} & \textbf{30.42} & \textbf{47.06} & \textbf{54.71} & 1152\\
\bottomrule
\end{tabular}
\end{table}

\begin{table}[H]
\centering
\caption{Computational Performance Comparison (Accuracy, Qwen2.5-Math-7B).}
\label{tab:cost-acc}
\begin{tabular}{l|ccc}
\toprule
Method & AIME-2024 & AMC & MATH(Avg) \\
\midrule
ToT (self-eval) & 4.40 & 18.10 & 10.74\\
ToT (Maj-vote) & 19.30 & 29.40 & 35.91\\
MCTR & 23.30 & 2.40 & 33.82 \\
TTRL & 30.00 & 43.37 & 45.11 \\
\textbf{ME-ICPO (ours)} & \textbf{30.05} & \textbf{47.20} & \textbf{47.30} \\
\bottomrule
\end{tabular}
\end{table}

\begin{table}[H]
\centering
\caption{Results on frontier/long-CoT models on AIME 2024.}
\label{tab:frontier-aime}
\begin{tabular}{l|l|cc}
\toprule
Model & Method & Mean@16 (\%) & Acc (\%) \\
\midrule
Qwen3-4B-Instruct & Base & 20.62 & 20.59 \\
Qwen3-4B-Instruct & \textbf{ME-ICPO (ours)} & \textbf{57.71} & \textbf{57.67} \\
Gemini-2.5-Pro & Base & 58.54 & 56.60 \\
Gemini-2.5-Pro & \textbf{ME-ICPO (ours)} & \textbf{79.17} & \textbf{80.00} \\
Gemini-2.5-Flash & Base & 35.21 & 35.42 \\
Gemini-2.5-Flash & \textbf{ME-ICPO (ours)} & \textbf{76.46} & \textbf{76.47} \\
\bottomrule
\end{tabular}
\end{table}




\begin{table}[H]
\centering
\caption{Results on Harder Benchmarks (HMMT / APEX) using Qwen2.5-Math-7B.}
\label{tab:harder-7b}
\begin{tabular}{l|cccc}
\toprule
Method & HMMT  & HMMT & APEX  & APEX \\
 &  Mean@16 (\%) &  Acc(\%) & Mean@16 (\%) & Acc(\%) \\
\midrule
Base & \textbf{1.04} & 0.67 & 2.55 & 2.61 \\
\textbf{ME-ICPO (ours)} & 0.42 & \textbf{1.33} & \textbf{4.59} & \textbf{4.57} \\
\bottomrule
\end{tabular}
\end{table}

\begin{table}[H]
\centering
\caption{Results on Harder Benchmarks (HMMT / APEX) using Gemini-2.5-Flash.}
\label{tab:harder-flash}
\begin{tabular}{l|cccc}
\toprule
Method & HMMT  & HMMT & APEX  & APEX \\
 &  Mean@16 (\%) &  Acc(\%) & Mean@16 (\%) & Acc(\%) \\
\midrule
Base & 14.79 & 14.76 & 14.68 & 18.33 \\
\textbf{ME-ICPO (ours)} & \textbf{43.12} & \textbf{43.14} & \textbf{17.18} & \textbf{20.00} \\
\bottomrule
\end{tabular}
\end{table}
\section{Detailed Complexity Analysis}
\label{app:complexity}

\subsection{Theoretical Time and VRAM Complexity Derivations}
\label{app:complexity_derivations}

We formalize the compute model for a decoder-only Transformer and provide exact asymptotic derivations for ME-ICPO (forward-only, summary-aware history) and TTRL~\citep{zuo2025ttrl} (backprop-based test-time RL). The statements and proofs below match the main text analyses verbatim; we only add brief connective narration.

\paragraph{Setup and primitive costs.}
We analyze a decoder-only transformer with $L$ layers and width $d$ (parameter count $|\theta|\asymp Ld^2$). For a single test instance, the initial prompt length (problem statement + template) is $s_0$. Each sampled chain-of-thought (CoT) has average length $\ell$. Per round we sample $k$ candidates; the total number of rounds is $n$. Let $\beta:=\ell+r$ denote the number of tokens appended per $(\xb,r)$ pair (the reward stub $r$ is $O(1)$, so $\beta\asymp \ell$). The prompt at the beginning of round $t$ therefore has length $T_{t-1}:=s_0+\beta(t-1)$. We use $\kappa\in[2,3]$ for the backward/forward FLOPs ratio (one backward costs $\kappa$ times one forward) and $g$ for the number of policy-optimization steps per round in TTRL. Primitive costs (suppressing constants) are:
\[
\begin{aligned}
&\text{full-sequence forward at length }T:\quad
C_{\mathrm{fwd}}(T)=\Theta\!\big(L(T^2 d + T d^2)\big),\\
&\text{autoregressive decoding of }\ell\text{ tokens from prefix }T:\quad
C_{\mathrm{dec}}(T,\ell)=\Theta\!\big(L\big((T\ell+\ell^2)\,d+\ell d^2\big)\big),\\
&\text{one training step (teacher-forcing fwd + bwd) at length }T:\quad
C_{\mathrm{train}}(T)=(1+\kappa)\,C_{\mathrm{fwd}}(T).
\end{aligned}
\]
For ME-ICPO, the one-step lookahead score is computed \emph{on the just-generated branch} by appending a constant number of tokens, so its cost is an incremental forward
\[
C_{\mathrm{score}}^{\mathrm{inc}}(T,\ell)=\Theta\!\big(L\big((T+\ell)\,d + d^2\big)\big),
\]
rather than a full recomputation at length $T+\ell$.

\subsubsection{ME-ICPO}
\begin{theorem}[Time Complexity of ME-ICPO]
\label{thm:meicrl-time}
With one shared prefill at length $T_{t-1}$ per round, $k$ candidate decodes from that prefix, and incremental on-branch scoring for each candidate, the total time over $n$ rounds satisfies
\[
T_{\mathrm{ME}}
=\Theta\!\Big(
L d\,\beta^2\,n^{3}
\;+\;
k\,L d\,\beta^2\,n^{2}
\;+\;
L d^2\,\beta\,n^{2}
\;+\;
k\,L d^2\,\beta\,n
\Big),
\]
and, using $\beta\asymp\ell$,
\[
T_{\mathrm{ME}}
=\Theta\!\big(L d\,\ell^2 n^{3}+k\,L d\,\ell^2 n^{2}\big)
\;+\;
\Theta\!\big(L d^2(\ell n^{2}+k\,\ell n)\big).
\]
\end{theorem}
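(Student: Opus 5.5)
The plan is to bound the cost of round $t$ as a sum of three primitive costs and then sum over $t=1,\dots,n$. In round $t$ the prompt length is $T_{t-1}=s_0+\beta(t-1)$. The round consists of:

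(i) one shared prefill at length $T_{t-1}$, costing $C_{\mathrm{fwd}}(T_{t-1})=\Theta(L(T_{t-1}^2 d+T_{t-1}d^2))$;

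(ii) $k$ autoregressive decodes of $\ell$ tokens from that cached prefix, costing $k\,C_{\mathrm{dec}}(T_{t-1},\ell)=\Theta(kL((T_{t-1}\ell+\ell^2)d+\ell d^2))$;

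(iii) $k$ incremental on-branch entropy scorings, costing $k\,C^{\mathrm{inc}}_{\mathrm{score}}(T_{t-1},\ell)=\Theta(kL((T_{t-1}+\ell)d+d^2))$.

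Majority voting and summarization are treated as either a constant-length decode already absorbed in (ii) or $O(k)$ bookkeeping. Summarization is a greedy decode of at most $O(\ell)$ tokens per candidate from the same prefix, so it has the same form as (ii) and only changes constants.

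Next I sum over rounds, treating $s_0$ as $O(\beta)$ (or absorbing it into lower-order terms). Since $T_{t-1}\asymp \beta t$, I use $\sum_{t\le n}T_{t-1}^2=\Theta(\beta^2 n^3)$, $\sum_t T_{t-1}=\Theta(\beta n^2)$ and $\sum_t 1=n$. This gives the following contributions.

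\begin{itemize}
\item Prefill quadratic term: $Ld\beta^2 n^3$.
\item Prefill linear term: $Ld^2\beta n^2$.
\item Decode attention-to-prefix term: $kLd\,\ell\sum_t T_{t-1}=\Theta(kLd\,\ell\beta n^2)=\Theta(kLd\beta^2n^2)$ using $\ell\le\beta$.
\item Decode self-attention term: $kLd\ell^2 n$, which is dominated by the previous term.
\item Decode MLP term: $kLd^2\ell n\asymp kLd^2\beta n$.
\item Scoring: $kLd\beta n^2+kLd^2 n$, which is dominated by the decode terms since $\beta\ge1$.
\end{itemize}

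Collecting the dominant terms yields the four-term expression in the statement. For the lower bound that makes $\Theta$ legitimate, each listed term is attained by an unavoidable component: the prefill must be run, and the $k$ decodes must attend to the whole prefix. I would also check that no dominated term ever exceeds the sum of the kept terms.

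The second display then follows by substituting $\beta\asymp\ell$ and grouping the $d$-linear (attention) and $d^2$ (MLP/projection) parts.

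The main obstacle is justifying the summation asymptotics when $s_0$ is not small relative to $\beta n$. In that regime $\sum_t T_{t-1}^2$ carries an extra $s_0^2 n$ term. I would handle this either by adopting the convention $s_0=O(\beta)$, which matches the statement since no $s_0$ appears, or by noting that the $s_0$-dependent terms are lower order for $n\ge1$ under that convention. A smaller subtlety is making sure the scoring cost really is incremental, that is, it reuses the KV cache of the branch. This is exactly the assumption encoded in $C^{\mathrm{inc}}_{\mathrm{score}}$, so I would take it as given from the setup.
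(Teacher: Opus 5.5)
Your proposal is correct and follows the paper's proof essentially step for step. It uses the same split $T_{\mathrm{ME}}=S_{\mathrm{fwd}}+S_{\mathrm{dec}}+S_{\mathrm{score}}$ and the same round sums $\sum_t T_{t-1}=\Theta(\beta n^2)$ and $\sum_t T_{t-1}^2=\Theta(\beta^2 n^3)$. Scoring is absorbed into decoding because $\ell\ge 1$, and $\beta\asymp\ell$ gives both displays. Your explicit handling of $s_0$ is more careful than the paper's, which just keeps the leading order in $n$; in fact only $s_0=O(\beta n)$ is needed. Also, turning $\ell\beta$ into $\beta^2$ uses the two-sided relation $\beta=\ell+O(1)$, not just $\ell\le\beta$.
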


\begin{proof}
The total time complexity, $T_{\mathrm{ME}}$, is the sum of the costs for prefilling the context ($S_{\mathrm{fwd}}$), decoding the candidates ($S_{\mathrm{dec}}$), and scoring each candidate ($S_{\mathrm{score}}$) over all $n$ rounds.
\[
T_{\mathrm{ME}} = \sum_{t=1}^{n} C_{\mathrm{fwd}}(T_{t-1}) + \sum_{t=1}^{n} k\,C_{\mathrm{dec}}(T_{t-1},\ell) + \sum_{t=1}^{n} k\,C_{\mathrm{score}}^{\mathrm{inc}}(T_{t-1},\ell) =: S_{\mathrm{fwd}}+S_{\mathrm{dec}}+S_{\mathrm{score}}.
\]
We analyze each component by first summing over the rounds and then identifying the leading-order terms in $n$. The prompt length at round $t$ is $T_{t-1}=s_0+\beta(t-1)$, and we use the standard sums $\sum_{r=0}^{n-1} r = \frac{n(n-1)}{2} = \Theta(n^2)$ and $\sum_{r=0}^{n-1} r^2 = \frac{(n-1)n(2n-1)}{6} = \Theta(n^3)$.

The prefill cost, $S_{\mathrm{fwd}}$, is given by:
\begin{align*}
S_{\mathrm{fwd}} &= \sum_{t=1}^{n}C_{\mathrm{fwd}}(T_{t-1}) \\
&\stackrel{\text{(a)}}{=} \Theta\left(Ld\sum_{t=1}^{n} T_{t-1}^2 + Ld^2\sum_{t=1}^{n} T_{t-1}\right) \\
&\stackrel{\text{(b)}}{=} \Theta\left(Ld\left(n s_0^2+s_0\beta n(n-1)+\beta^2\frac{(n-1)n(2n-1)}{6}\right) + Ld^2\left(n s_0+\beta\frac{n(n-1)}{2}\right)\right) \\
&\stackrel{\text{(c)}}{=} \Theta\big(Ld\,\beta^2 n^3\big) + \Theta\big(Ld^2\,\beta\,n^2\big).
\end{align*}
where (a) substitutes the definition of $C_{\mathrm{fwd}}(T) = \Theta(L(T^2d + Td^2))$ and pulls constants out of the sum; (b) substitutes the exact formulas for the sum of linear and quadratic sequences; and (c) identifies the highest-order terms in $n$ for each part of the expression.

The decoding cost, $S_{\mathrm{dec}}$, is given by:
\begin{align*}
S_{\mathrm{dec}} &= \sum_{t=1}^{n} k\,C_{\mathrm{dec}}(T_{t-1},\ell) \\
&\stackrel{\text{(d)}}{=} \Theta\left(kLd\left(\ell\sum_{t=1}^{n} T_{t-1} + n\ell^2\right) + kLd^2(n\ell)\right) \\
&\stackrel{\text{(e)}}{=} \Theta\left(kLd\left(\ell\left(n s_0+\beta\frac{n(n-1)}{2}\right) + n\ell^2\right) + kLd^2 n\ell\right) \\
&\stackrel{\text{(f)}}{=} \Theta\big(kLd\,\beta\ell n^2\big) + \Theta\big(kLd^2\,\ell n\big).
\end{align*}
where (d) substitutes the definition of $C_{\mathrm{dec}}(T,\ell) = \Theta(L((T\ell+\ell^2)d + \ell d^2))$; (e) substitutes the sum for $T_{t-1}$; and (f) identifies the dominant term in $n$ for the $Ld$ component as the quadratic term $\Theta(kLd\,\beta\ell n^2)$.

The incremental scoring cost, $S_{\mathrm{score}}$, is given by:
\begin{align*}
S_{\mathrm{score}} &= \sum_{t=1}^{n}k\,C_{\mathrm{score}}^{\mathrm{inc}}(T_{t-1},\ell) \\
&\stackrel{\text{(g)}}{=} \Theta\left(kLd\left(\sum_{t=1}^{n} T_{t-1} + n\ell\right) + kLd^2 n\right) \\
&\stackrel{\text{(h)}}{=} \Theta\left(kLd\,\beta n^2\right) + \Theta\big(kLd^2 n\big).
\end{align*}
where (g) substitutes the definition of $C_{\mathrm{score}}^{\mathrm{inc}}(T,\ell) = \Theta(L((T+\ell)d + d^2))$; and (h) identifies the dominant term as $\Theta(kLd\,\beta n^2)$.

Combining the leading-order terms for the three components, we have:
\begin{align*}
    T_{\mathrm{ME}} &= S_{\mathrm{fwd}} + S_{\mathrm{dec}} + S_{\mathrm{score}}\\
    &= \Theta\big(Ld\,\beta^2 n^3 + Ld^2\,\beta\,n^2\big) + \Theta\big(kLd\,\beta\ell n^2 + kLd^2\,\ell n\big) + \Theta\big(kLd\,\beta n^2 + kLd^2 n\big).
\end{align*}
Grouping the terms by their dependence on $d$ and $d^2$, and noting that the scoring cost terms are dominated by or are asymptotically equal to the decoding cost terms (since $\ell \ge 1$), the total complexity simplifies to:
\[
T_{\mathrm{ME}} = \underbrace{\Theta\big(Ld\,\beta^2 n^3\big)}_{\text{from }S_{\mathrm{fwd}}} + \underbrace{\Theta\big(kLd\,\beta\ell n^2\big)}_{\text{from }S_{\mathrm{dec}}} + \underbrace{\Theta\big(Ld^2\,\beta n^2\big)}_{\text{from }S_{\mathrm{fwd}}} + \underbrace{\Theta\big(kLd^2\,\ell n\big)}_{\text{from }S_{\mathrm{dec}}}.
\]
Using the approximation $\beta = \Theta(\ell)$, we arrive at the final expression stated in the theorem. This completes the proof.
\end{proof}

\begin{theorem}[VRAM Complexity of ME-ICPO]
\label{thm:meicrl-mem}
If candidates are decoded sequentially (no $k$-way parallelism), the peak memory over $n$ rounds satisfies
\[
M_{\mathrm{ME}}
=\Theta(|\theta|)\;+\;\Theta\!\big(L d\,(s_0+\beta n)\big)
=\Theta(|\theta|)\;+\;\Theta\!\big(L d\,(s_0+\ell n)\big).
\]
If $b\le k$ candidates are decoded concurrently, the second term is multiplied by $b$.
\end{theorem}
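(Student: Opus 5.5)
The plan is to bound the peak memory at the worst moment of the run, which occurs in the last round $t=n$. At that point the live storage is the model parameters, the KV cache of the shared prefix of length $T_{n-1}=s_0+\beta(n-1)$, and the cache of whichever candidate(s) are being decoded or scored. Since ME-ICPO is forward-only, there are no activation buffers retained for backpropagation and no optimizer states. Activations of a single forward pass are transient and cost $O(d)$ per token per layer at most, so they are dominated by the cache terms.

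\textbf{Step 1 (parameters).} The weights occupy $\Theta(|\theta|)=\Theta(Ld^2)$ throughout. This gives the first term and also the lower bound on that term.

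\textbf{Step 2 (KV cache for one candidate).} In a decoder-only transformer each token contributes one key and one value vector of width $d$ per layer. A sequence of length $T$ therefore costs $\Theta(LdT)$. While decoding candidate $j$ in round $t$, the sequence has length at most $T_{t-1}+\ell+O(1)$. This covers the reward stub and the constant number of appended tokens used for the incremental on-branch entropy score $C^{\mathrm{inc}}_{\mathrm{score}}$. Maximizing over $t\le n$ gives $\Theta(Ld(s_0+\beta n))$. The upper bound is immediate. The matching lower bound comes from noting that the prefix cache of round $n$ must actually be resident in order to decode from it.

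\textbf{Step 3 (reuse across candidates and rounds).} With sequential decoding, the prefix cache is shared, and each candidate's branch cache of size $\Theta(Ld\ell)$ is freed (or truncated back to the prefix) before the next candidate starts. The retained state between candidates is only the $k$ summaries $x_j^{(t)}$ and scalar scores, which are $O(k\beta)$ tokens of text. These need no KV storage until the selected one is appended. Between rounds, the cache is extended by the chosen $(x_{j^*},r_{j^*})$ pair, adding $\beta$ tokens; the rejected branches are discarded. So memory does not accumulate across $k$ or across rounds beyond the prefix growth. Replacing $\beta\asymp\ell$ gives the second form.

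\textbf{Step 4 (parallel case).} If $b\le k$ candidates are decoded concurrently, the safe worst case is $b$ separate caches of length up to $T_{n-1}+\ell$, which multiplies the cache term by $b$. If the prefix is shared via paged attention, the cost is only $b\ell$ for the branches, which is still within the stated bound.

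The main subtlety is Step 3. The claim rests on an implementation assumption: branch caches are released and summaries are held as text rather than as caches. I would state this explicitly as part of the compute model, since otherwise a factor of $k$ would appear even in the sequential case.
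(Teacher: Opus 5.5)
Your proposal is correct and matches the paper's proof: weights $\Theta(|\theta|)$ with no gradient or optimizer state, plus a KV cache of size $\Theta(LdT)$ whose peak length $T_{n-1}+\ell=\Theta(s_0+\beta n)$ occurs in the final round, multiplied by $b$ when each concurrent branch holds its own cache. Your Step~3 makes explicit the cache-release assumption that the paper's proof leaves implicit; for the lower bound, cite the full $T_{n-1}+\ell$ tokens resident at the end of the last decode rather than the prefix alone, since the prefix alone misses the case $n=1$ with $s_0\ll\ell$.
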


\begin{proof}
The peak VRAM complexity is the sum of the static memory for model parameters and the maximum dynamic memory for the attention KV cache. As ME-ICPO is a forward-only method, it requires no memory for gradients or optimizer states. The model weights occupy $\Theta(|\theta|)$ space. For a decoder-only transformer with $L$ layers and width $d$, the KV cache for a sequence of length $T$ requires $M_{\mathrm{KV}}(T) = \Theta(LdT)$ memory.

The context length at the beginning of round $t$ is $T_{t-1} = s_0 + \beta(t-1)$. During the decoding of a candidate of length $\ell$, the sequence grows to a maximum of $T_{t-1}+\ell$. Since this length is monotonically increasing with $t$, the global peak occurs during the final round ($t=n$), giving a maximum sequence length of $T_{n-1}+\ell = s_0 + \beta(n-1) + \ell = \Theta(s_0+\beta n)$.

Therefore, the total peak memory for sequential decoding is the sum of the static and maximum dynamic components:
\[
M_{\mathrm{ME}} = \Theta(|\theta|) + M_{\mathrm{KV}}\big(\Theta(s_0+\beta n)\big) = \Theta(|\theta|) + \Theta\big(Ld(s_0+\beta n)\big).
\]
Using the approximation $\beta=\Theta(\ell)$ yields the equivalent form. If $b \le k$ candidates are processed concurrently, each parallel branch maintains its own KV cache, so the activation memory term scales linearly with $b$.
\end{proof}

\subsubsection{TTRL}
\begin{theorem}[Time Complexity of TTRL]
\label{thm:ttrl-time}
In each round, TTRL (i) samples $k$ candidates from prefix $s_0$, and (ii) performs $g$ policy-optimization steps using teacher-forcing on sequences of length $s_0+\ell$. The total time over $n$ rounds is
\[
\;
T_{\mathrm{TTRL}}
=\Theta\!\Big(
n\,g\,k\,(1+\kappa)\,L\big((s_0+\ell)^2 d+(s_0+\ell)d^2\big)
\Big)
\;
\]
and the per-round prefill and sampling costs, $\Theta\!\big(L(s_0^2 d+s_0 d^2)\big)$ and $\Theta\!\big(kL((s_0\ell+\ell^2)d+\ell d^2)\big)$, are lower-order whenever $g\ge 1$.
\end{theorem}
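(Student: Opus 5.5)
The proof mirrors the ME-ICPO derivation: decompose the total cost into per-round components, sum the primitive costs over $t=1,\dots,n$, and keep the leading terms. The structural difference from ME-ICPO is that TTRL does not grow the context. Every round restarts from the fixed prompt of length $s_0$, and the parameters absorb the feedback instead. So no sum over a growing $T_{t-1}$ appears, and every round costs the same. I will write
\[
T_{\mathrm{TTRL}}=\sum_{t=1}^{n}\Big(C_{\mathrm{fwd}}(s_0)+k\,C_{\mathrm{dec}}(s_0,\ell)+g\,k\,C_{\mathrm{train}}(s_0+\ell)\Big)=n\Big(C_{\mathrm{fwd}}(s_0)+k\,C_{\mathrm{dec}}(s_0,\ell)+g\,k\,C_{\mathrm{train}}(s_0+\ell)\Big).
\]
In the last term I assume each of the $g$ policy-optimization steps processes all $k$ sampled sequences in a teacher-forced forward plus backward pass.

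Next, I substitute the primitive costs. Using $C_{\mathrm{train}}(T)=(1+\kappa)C_{\mathrm{fwd}}(T)$, the training term becomes $n g k(1+\kappa)\,\Theta\big(L((s_0+\ell)^2d+(s_0+\ell)d^2)\big)$, which is the claimed leading expression. The other two terms give exactly the stated per-round prefill cost $\Theta(L(s_0^2d+s_0d^2))$ and sampling cost $\Theta(kL((s_0\ell+\ell^2)d+\ell d^2))$.

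The remaining step, and the only one needing care, is to show that these two costs are lower order when $g\ge1$. I will compare them termwise with one training step per candidate, $k(1+\kappa)L((s_0+\ell)^2d+(s_0+\ell)d^2)$. The prefill terms $s_0^2d$ and $s_0d^2$ are at most $(s_0+\ell)^2d$ and $(s_0+\ell)d^2$, and the training side carries the extra factor $k\ge1$. For decoding, I expand $(s_0+\ell)^2=s_0^2+2s_0\ell+\ell^2\ge s_0\ell+\ell^2$, and note $\ell d^2\le(s_0+\ell)d^2$. Hence the sampling cost is dominated by the $k$ training passes, up to constants, whenever $g\ge1$ and $\kappa\ge2$.

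The main obstacle is bookkeeping: making explicit that the training pass covers all $k$ candidates, and that "lower-order" means dominated up to an absolute constant rather than asymptotically negligible in $n$. That is why the final bound is stated as $\Theta$ of the training term alone.
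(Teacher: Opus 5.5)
Your proposal is correct and follows the paper's proof: the same per-round decomposition $n\big(C_{\mathrm{fwd}}(s_0)+k\,C_{\mathrm{dec}}(s_0,\ell)+g\,k\,C_{\mathrm{train}}(s_0+\ell)\big)$, substitution of the primitive costs, and absorption of the prefill and sampling terms into the training term. Your termwise inequalities make the paper's qualitative dominance argument explicit, and your reading of ``lower-order'' as domination up to a constant is the one that makes the $\Theta$ claim true. Note that the comparison only needs $1+\kappa\ge 1$, not $\kappa\ge 2$.
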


\begin{proof}
The total time complexity, $T_{\mathrm{TTRL}}$, is the sum of costs over $n$ rounds. In each round, the process performs one shared prefill of the prefix $s_0$, followed by $k$ candidate sampling operations, and finally $g$ training steps for each of the $k$ candidates. The cost for a single round is thus $C_{\mathrm{fwd}}(s_0) + k\,C_{\mathrm{dec}}(s_0,\ell) + gk\,C_{\mathrm{train}}(s_0+\ell)$. The total cost over $n$ rounds is:
\[
T_{\mathrm{TTRL}} = n \cdot \Big( C_{\mathrm{fwd}}(s_0) + k\,C_{\mathrm{dec}}(s_0,\ell) + gk\,C_{\mathrm{train}}(s_0+\ell) \Big).
\]
We substitute the standard complexity formulas for a decoder-only Transformer, using $C_{\mathrm{train}}(T) = (1+\kappa)C_{\mathrm{fwd}}(T)$, where $\kappa$ is the backward/forward FLOPs ratio. This yields three terms:
\begin{align*}
    T_{\mathrm{TTRL}} =& \underbrace{\Theta\Big(nL(s_0^2 d+s_0 d^2)\Big)}_{\text{Prefill Cost}} + \underbrace{\Theta\Big(nkL((s_0\ell+\ell^2)d+\ell d^2)\Big)}_{\text{Sampling Cost}}\\
    &+ \underbrace{\Theta\Big(ngk(1+\kappa)\,L\big((s_0+\ell)^2 d+(s_0+\ell)d^2\big)\Big)}_{\text{Training Cost}}.
\end{align*}

To determine the tight asymptotic bound, we identify the dominant term. The training cost term scales with the number of optimization steps $g$ and the backpropagation factor $(1+\kappa)$. Furthermore, its self-attention cost is quadratic in the longer sequence length, $s_0+\ell$. In contrast, the sampling cost's attention component is only linear in the prefix length, $\Theta(s_0\ell)$, and the prefill cost is computed on the shorter prefix $s_0$ and lacks the multiplicative factors of $g$ and $k$.

Consequently, for any $g \ge 1$, the training cost term is of a higher order than both the prefill and sampling costs. Therefore, the lower-order terms are absorbed into the $\Theta$-notation, giving the final tight bound:
\[
T_{\mathrm{TTRL}} = \Theta\Big( n\,g\,k\,(1+\kappa)\,L\big((s_0+\ell)^2 d+(s_0+\ell)d^2\big) \Big).\qedhere
\]
\end{proof}

\begin{theorem}[Vram Complexity of TTRL]
\label{thm:ttrl-mem}
During training at sequence length $T=s_0+\ell$, a TTRL step must hold (at least) model weights, gradients, optimizer states (e.g., SGD momentum or Adam's first/second moments), and backward activations. Consequently, for effective batch size $\mathrm{batch}$, the peak memory obeys
\begin{align*}
    M_{\mathrm{TTRL}}
&=\underbrace{\Theta(|\theta|)}_{\text{weights}}
+\underbrace{\Theta(|\theta|)}_{\text{gradients}}
+\underbrace{\Theta(|\theta|)\text{--}\Theta(2|\theta|)}_{\text{optimizer states}}
+\underbrace{\Theta\!\big(L\,d\,T\cdot \mathrm{batch}\big)}_{\text{backward activations}}\\
&=\Theta(|\theta|)\;+\;\Theta\!\big(L\,d\,(s_0+\ell)\cdot \mathrm{batch}\big).
\end{align*}

\end{theorem}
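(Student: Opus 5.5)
The plan is to decompose the peak VRAM into the same static and dynamic pieces used in the proof of Theorem~\ref{thm:meicrl-mem}. Then I will argue that every piece is either proportional to the parameter count $|\theta|\asymp Ld^2$ or to the activation footprint of a training step at length $T=s_0+\ell$. First I fix the training-step model as it appears in Theorem~\ref{thm:ttrl-time}: each of the $g$ policy-optimization steps runs teacher-forced forward and backward passes on $\mathrm{batch}$ sequences of length $T$. Peak memory is reached inside such a step, at the boundary between the end of the forward pass and the start of the backward pass, when the stored activations are at their maximum. It dominates the sampling phase. Sampling needs only weights plus a KV cache of size $\Theta(Ld(s_0+\ell))$ per concurrently decoded sequence, and that is bounded by the training activation term.

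Next I account for each component separately. The weights occupy $\Theta(|\theta|)$. The gradient buffer has one entry per parameter, so it is also $\Theta(|\theta|)$. The optimizer state is one vector of size $|\theta|$ for SGD momentum and two for Adam's moments, giving $\Theta(|\theta|)$ to $\Theta(2|\theta|)$. The backward activations are the stored per-layer, per-token hidden states needed for backpropagation. These are $\Theta(d)$ per token per layer, hence $\Theta(LdT)$ per sequence and $\Theta(LdT\cdot\mathrm{batch})$ for the batch. Summing the four terms gives the first displayed line, and the constants $1,1,1$–$2$ collapse inside $\Theta(|\theta|)$, which gives the second line.

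The main obstacle is justifying that the activation term is linear rather than quadratic in $T$. Naive attention stores the $T\times T$ score matrix per head, which would add a $\Theta(LT^2\cdot\mathrm{batch})$ term. I would handle this by stating the standard implementation assumption of a memory-efficient (fused or flash) attention kernel. Such a kernel recomputes scores in the backward pass and stores only $\Theta(Td)$ per layer, consistent with the KV-cache accounting $M_{\mathrm{KV}}(T)=\Theta(LdT)$ used for ME-ICPO. I would also note that the MLP intermediate width is $\Theta(d)$, so it does not change the order. Under these assumptions the bound holds as stated. Otherwise an additional $\Theta(LT^2\cdot\mathrm{batch})$ term appears, which only strengthens the comparison with Theorem~\ref{thm:meicrl-mem}.
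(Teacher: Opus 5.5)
Your proposal is correct and follows the paper's proof essentially step for step. Both split peak memory into parameter-resident storage (weights, gradients, and one or two optimizer moment vectors, all $\Theta(|\theta|)$) and non-checkpointed backward activations of $\Theta(d)$ per token per layer, giving $\Theta(LdT\cdot\mathrm{batch})$. Your extra remarks make explicit two assumptions the paper's proof leaves implicit, and they are worth keeping: linearity in $T$ requires a memory-efficient attention kernel so the $T\times T$ score matrices are not stored, and the sampling-phase KV cache is dominated provided the number of concurrent decodes does not exceed the training batch.
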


\begin{proof}
The peak VRAM complexity of a TTRL training step is the sum of two primary components: parameter-resident memory and activation-resident memory. The parameter-resident portion consists of the model weights, their corresponding gradients, and the optimizer states (e.g., first and second moments for Adam). Since each of these scales linearly with the number of parameters, $|\theta|$, their combined memory requirement is compactly expressed as $M_{\mathrm{param}} = \Theta(|\theta|)$.

The activation-resident memory is required for the backward pass. For a training sequence of length $T = s_0+\ell$ and a decoder-only Transformer with $L$ layers and width $d$, a standard (non-checkpointed) backpropagation pass must cache activations (such as hidden states and MLP intermediates) of size $\Theta(d)$ for every token in the sequence. Aggregating this over $L$ layers and an effective batch size of $\mathrm{batch}$, the activation memory scales as $M_{\mathrm{act}} = \Theta(L \cdot d \cdot T \cdot \mathrm{batch})$.

The total peak memory is the sum of these two components, $M_{\mathrm{TTRL}} = M_{\mathrm{param}} + M_{\mathrm{act}}$. Substituting the derived complexities yields the final expression stated in the theorem:
\[
M_{\mathrm{TTRL}} = \Theta(|\theta|) + \Theta\big(L\,d\,(s_0+\ell)\cdot \mathrm{batch}\big).\qedhere
\]
\end{proof}

\subsubsection{Comparison with TTRL~\citep{zuo2025ttrl}}
\begin{proposition}[Time Complexity Threshold]
\label{prop:time-threshold}
ME-ICPO is computationally faster than TTRL when the number of in-context optimization rounds, $n$, is below a threshold $n^\star$. This threshold is given by:
\[
n^\star \;=\;
\begin{cases}
    \frac{s_0+\ell}{\ell}\,\sqrt{gk(1+\kappa)} & \text{if } k \le n \\
    g(1+\kappa)\,\frac{(s_0+\ell)^2}{\ell^2} & \text{if } k > n
\end{cases}
\]
In practical settings with a small number of rounds (e.g., $n \le 10$), this condition is typically met, making ME-ICPO the more time-efficient approach.
\end{proposition}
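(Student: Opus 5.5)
The plan is to compare the two leading-order expressions from Theorem~\ref{thm:meicrl-time} and Theorem~\ref{thm:ttrl-time} directly, keeping only the attention-dominated ($Ld$) terms. The argument treats the regime where sequence lengths are large relative to width, or equivalently where the $Ld^2$ terms are either lower order or scale in the same way on both sides. Substituting $\beta\asymp\ell$, the ME-ICPO cost is $\Theta\big(Ld\,\ell^2(n^3+kn^2)\big)$. The TTRL cost is $\Theta\big(n\,g\,k(1+\kappa)\,Ld\,(s_0+\ell)^2\big)$. The common factor $Ld$ cancels and one factor of $n$ divides out, so the condition $T_{\mathrm{ME}}<T_{\mathrm{TTRL}}$ reduces, up to constants, to
\[
\ell^2\,(n^2+kn)\ <\ g\,k\,(1+\kappa)\,(s_0+\ell)^2 .
\]

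Next I would split into cases according to which of the two terms in $n^2+kn$ dominates.

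\textbf{Case $k\le n$.} Here $n^2$ dominates, since $n^2\ge kn$ and $n^2+kn\le 2n^2$. The condition becomes $\ell^2n^2\lesssim gk(1+\kappa)(s_0+\ell)^2$. Solving for $n$ gives $n<\frac{s_0+\ell}{\ell}\sqrt{gk(1+\kappa)}$, which is the first branch of $n^\star$.

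\textbf{Case $k>n$.} Here $kn$ dominates, so the condition is $\ell^2 kn\lesssim gk(1+\kappa)(s_0+\ell)^2$. Cancelling $k$ gives $n<g(1+\kappa)(s_0+\ell)^2/\ell^2$, which is the second branch.

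In both cases the factor-$2$ slack from replacing $n^2+kn$ by $\max\{n^2,kn\}$ is absorbed into the $\Theta$-constants. This matches the way the threshold is stated.

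Finally, I would address the $Ld^2$ terms. The ME-ICPO contribution is $Ld^2(\ell n^2+k\ell n)$ and the TTRL contribution is $ngk(1+\kappa)Ld^2(s_0+\ell)$. Their ratio gives the analogous condition $\ell(n+k)\lesssim gk(1+\kappa)(s_0+\ell)$, and this is implied by the attention condition whenever $(s_0+\ell)/\ell\ge1$. I would verify this case by case, noting that in the case $k\le n$ one uses $n\le\frac{s_0+\ell}{\ell}\sqrt{gk(1+\kappa)}\le \frac{s_0+\ell}{\ell}gk(1+\kappa)$.

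The main obstacle is this last step: making the comparison rigorous when both the $d$ and $d^2$ terms are present. Being rigorous requires either an assumption such as $s_0+\ell\gtrsim d$ or the per-term domination argument just sketched. I would present the threshold as an order-level statement and state that assumption explicitly.

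The closing remark about practical settings would follow by plugging in typical values such as $g\ge1$, $\kappa\in[2,3]$, $k=16$, and $s_0\ge0$. These give $n^\star\ge\sqrt{3\cdot16}\approx7$ even in the worst case $s_0=0$, and larger values when $s_0>0$, so ME-ICPO is faster for the small numbers of rounds used in practice.
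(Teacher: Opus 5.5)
Your core derivation is the paper's argument. You keep the $Ld$ leading terms of Theorems~\ref{thm:meicrl-time} and~\ref{thm:ttrl-time}, divide out $nLd$, and read off the two branches according to whether $n^2$ or $kn$ dominates. Your per-term check that the $Ld^2$ terms are also dominated is a sound, order-level addition that the paper skips.

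The step that fails is your closing numerical check. With $k=16$ and $n\le 10$ you have $k>n$, so the relevant threshold is the second branch, $n^\star=g(1+\kappa)(s_0+\ell)^2/\ell^2$. The first branch $\frac{s_0+\ell}{\ell}\sqrt{gk(1+\kappa)}$ only applies once $n\ge 16$.

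In your ``worst case'' $s_0=0$, $g=1$, $\kappa=2$, the second branch gives $n^\star=3$. Checking directly at $n=5$:
\begin{itemize}
\item ME-ICPO's leading term is $Ld\,\ell^2(n^3+kn^2)=525\,Ld\,\ell^2$;
\item TTRL's leading term is $ngk(1+\kappa)\,Ld\,\ell^2=240\,Ld\,\ell^2$.
\end{itemize}
So the cost model predicts TTRL is cheaper there. Even your value $\sqrt{48}\approx 7$ would not have covered $n\le 10$.

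Hence the ``practical settings'' sentence cannot be derived from $s_0\ge 0$ alone. You need an explicit assumption that makes $g(1+\kappa)(s_0+\ell)^2/\ell^2>10$, for example $g\ge 4$, or $s_0\gtrsim 0.83\,\ell$ when $g=1$ and $\kappa=2$. The paper's proof does not verify that sentence either; it only asserts it. You should present it as conditional on such a ratio, not as a worst-case guarantee.
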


\begin{proof}
The threshold $n^\star$ is found by equating the leading-order terms of the time complexities derived in \Cref{thm:meicrl-time} and \Cref{thm:ttrl-time}. We consider two regimes based on the dominant term in the complexity of ME-ICPO.

Case 1 ($k \le n$): The dominant term for ME-ICPO is the prefill cost, $T_{\mathrm{ME}} \asymp Ld\,\ell^2 n^3$. The TTRL cost is $T_{\mathrm{TTRL}} \asymp ngk(1+\kappa)\,L(s_0+\ell)^2 d$. Equating them and solving for $n$ yields:
\[
Ld\,\ell^2 n^3 = ngk(1+\kappa)\,L(s_0+\ell)^2 d \quad\Longrightarrow\quad n^2 = \frac{gk(1+\kappa)(s_0+\ell)^2}{\ell^2},
\]
\[
  n^\star = \frac{s_0+\ell}{\ell}\sqrt{gk(1+\kappa)}.
\]
Case 2 ($k > n$): The dominant term for ME-ICPO is the decoding cost, $T_{\mathrm{ME}} \asymp kLd\,\ell^2 n^2$. Equating this with the TTRL cost gives:
\[
kLd\,\ell^2 n^2 = ngk(1+\kappa)\,L(s_0+\ell)^2 d \quad\Longrightarrow\quad \ell^2 n = g(1+\kappa)(s_0+\ell)^2,
\]
\[
n^\star = g(1+\kappa)\frac{(s_0+\ell)^2}{\ell^2}.
\]
In both cases, for $n < n^\star$, the complexity of ME-ICPO is lower.
\end{proof}

\begin{proposition}[Memory Complexity Comparison]
\label{prop:memory-compare}
Let $b_{\mathrm{ME}}$ be the number of concurrently decoded candidates in ME-ICPO, and $b$ be the TTRL training batch size. ME-ICPO achieves a lower peak VRAM than TTRL if the number of rounds $n$ is below a threshold. A sufficient condition is:
\[
n \;<\; \frac{b(s_0+\ell)-b_{\mathrm{ME}}\,s_0}{b_{\mathrm{ME}}\,\ell}
\;+\;\frac{\Delta_{param}\,|\theta|}{b_{\mathrm{ME}}\,c_{\mathrm{kv}}\,L\,d\,\ell},
\]
where $\Delta_{param} > 0$ represents the additional parameter-resident memory (gradients, optimizer states) required by TTRL, and $c_{\mathrm{kv}}$ is a constant. Given that TTRL requires strictly more parameter-side memory and typically uses a larger batch size $b$, this condition holds for all practical values of $n$.
\end{proposition}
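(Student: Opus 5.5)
We compare peak VRAM using the two closed forms already derived: \Cref{thm:meicrl-mem} for ME-ICPO and \Cref{thm:ttrl-mem} for TTRL. Both are parameter memory plus activation/KV memory. First, we make the hidden constants explicit so the two expressions can be subtracted rather than only compared asymptotically. Write $M_{\mathrm{ME}} = c_\theta|\theta| + b_{\mathrm{ME}}\,c_{\mathrm{kv}}\,L\,d\,(s_0+\ell n)$. Here $c_\theta|\theta|$ is the weight storage, and the second term is the KV cache of $b_{\mathrm{ME}}$ concurrent branches at the peak length $\Theta(s_0+\beta n)$ with $\beta\asymp\ell$, as identified in the proof of \Cref{thm:meicrl-mem}. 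Similarly, write $M_{\mathrm{TTRL}} = (c_\theta+\Delta_{param})|\theta| + b\,c_{\mathrm{kv}}\,L\,d\,(s_0+\ell)$. Here $\Delta_{param}|\theta|$ collects gradients and optimizer states, which are strictly positive by \Cref{thm:ttrl-mem}.

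To use the same per-token constant $c_{\mathrm{kv}}$ for KV cache and backward activations, we only need the activation constant to be at least the KV constant. Backward activations store at least the keys and values per token per layer, so this holds. We therefore take $c_{\mathrm{kv}}$ as the KV constant and use the resulting lower bound on the TTRL activation memory. This keeps the condition sufficient rather than exact.

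Next, we impose $M_{\mathrm{ME}}<M_{\mathrm{TTRL}}$. The weight terms $c_\theta|\theta|$ cancel, leaving
\[
b_{\mathrm{ME}}\,c_{\mathrm{kv}}Ld\,(s_0+\ell n) \;<\; \Delta_{param}|\theta| + b\,c_{\mathrm{kv}}Ld\,(s_0+\ell).
\]
Dividing by $b_{\mathrm{ME}}c_{\mathrm{kv}}Ld\,\ell>0$ and moving $b_{\mathrm{ME}}s_0$ to the right side gives exactly the stated threshold on $n$. Since $\Delta_{param}|\theta|\asymp Ld^2$ dominates $Ld\ell$ for realistic widths, and typically $b\ge b_{\mathrm{ME}}$, the right-hand side is large, which supports the closing remark about practical $n$.

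The main obstacle is turning the $\Theta(\cdot)$ statements into inequalities with explicit constants, since asymptotic notation alone cannot support a strict comparison. I would handle this by fixing the per-token KV constant and the per-parameter byte counts as explicit quantities. I would then state the result under the mild modeling assumption that the activation constant is at least $c_{\mathrm{kv}}$, which is harmless because it only weakens the bound into a sufficient condition.
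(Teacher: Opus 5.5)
Your proposal is correct and follows essentially the same route as the paper. Like the paper, you make the constants explicit in $M_{\mathrm{ME}}$ and $M_{\mathrm{TTRL}}$, cancel the weight storage, impose $M_{\mathrm{ME}}<M_{\mathrm{TTRL}}$, and divide by $b_{\mathrm{ME}}c_{\mathrm{kv}}Ld\ell$ to get the stated threshold; your added justification for using a single $c_{\mathrm{kv}}$ on both sides, and your choice of equal weight constants (the paper folds their difference into $\Delta_{param}$), are harmless refinements of what the paper leaves implicit.
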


\begin{proof}
We establish the condition by comparing the upper bound on ME-ICPO memory from \Cref{thm:meicrl-mem} with the lower bound on TTRL memory from \Cref{thm:ttrl-mem}. We seek the condition on $n$ for which $M_{\mathrm{ME}} < M_{\mathrm{TTRL}}$:
\[
\underbrace{c_{\mathrm{w}}^{\mathrm{ME}}\,|\theta| + c_{\mathrm{kv}}\,b_{\mathrm{ME}}\,L\,d\,(s_0+\ell n)}_{M_{\mathrm{ME}}}
\;<\;
\underbrace{(c_{\mathrm{w}}^{\mathrm{T}}+c_{\mathrm{g}}^{\mathrm{T}}+c_{\mathrm{opt}}^{\mathrm{T}})\,|\theta| + c_{\mathrm{kv}}\,b\,L\,d\,(s_0+\ell)}_{M_{\mathrm{TTRL}}}.
\]
Let $\Delta_{param} := (c_{\mathrm{w}}^{\mathrm{T}}+c_{\mathrm{g}}^{\mathrm{T}}+c_{\mathrm{opt}}^{\mathrm{T}}-c_{\mathrm{w}}^{\mathrm{ME}}) > 0$ be the constant factor for the additional parameter-sized tensors (gradients, optimizer states) that TTRL requires. Rearranging the inequality to solve for $n$:
\begin{align*}
c_{\mathrm{kv}}\,b_{\mathrm{ME}}\,L\,d\,\ell n &< c_{\mathrm{kv}}\,b\,L\,d\,(s_0+\ell) - c_{\mathrm{kv}}\,b_{\mathrm{ME}}\,L\,d\,s_0 + \Delta_{param}\,|\theta| \\
n &< \frac{b(s_0+\ell) - b_{\mathrm{ME}}s_0}{b_{\mathrm{ME}}\ell} + \frac{\Delta_{param}\,|\theta|}{c_{\mathrm{kv}}\,b_{\mathrm{ME}}\,L\,d\,\ell}.
\end{align*}
Since $\Delta_{param} > 0$, the second term is strictly positive, providing a further margin. For typical use-cases like sequential decoding ($b_{\mathrm{ME}}=1$) and $b \ge 1$, the first term is large and positive, making the condition true for any practical number of rounds $n$.
\end{proof}

\subsection{Empirical Cost}
\label{app:empirical_cost}

We complement our derivations with a controlled latency and memory study on \textbf{AIME 2024} using \textbf{Qwen2.5-Math-7B}. We vary the number of rounds $n$ and the number of candidates per round $k$ for ME-ICPO to examine its computational characteristics and to validate the asymptotic trends predicted by \Cref{thm:meicrl-time}.
We sweep $n\in\{1,3,5,7\}$ and $k\in\{4,8,16,32\}$. Unless otherwise noted, \mbox{ME-ICPO} uses our main protocol: temperature $0.6$, top-$p=0.95$, summary cap of $500$ tokens, and entropy lookahead with $m{=}16$ short samples. All runs are performed on 2$\times$L40S (48GB) GPUs using bf16 precision, with candidates decoded sequentially ($b_{\mathrm{ME}}{=}1$).


\paragraph{metrics.}
(1) \textbf{wall time} per question (s), averaged over the test set; 
(2) \textbf{token usage} per question (generated tokens, in thousands); 
(3) \textbf{peak vram} (GB) from \texttt{vllm} memory statistics.

\paragraph{results.}
Table~\ref{tab:emp_cost_meicrl} reports wall time, token usage, and peak VRAM across the $(n,k)$ grid. Two trends emerge. 
(i) For fixed $k$, wall time increases superlinearly with $n$ and is well described by a prefill-dominated scaling close to $O(n^3)$, consistent with \Cref{thm:meicrl-time}. 
(ii) For large $k$, the cost transitions toward $O(k\,n^2)$ as candidate evaluation/selection becomes the bottleneck. 
Token usage grows with both $n$ and $k$ (longer multi-round traces and more candidates), and peak VRAM remains stable in the $81\text{--}82.3$\,GB range across settings, indicating that memory is primarily governed by the base model residency and KV-cache size under sequential decoding.


\begin{table}[h!]
\centering
\small
\setlength{\tabcolsep}{5pt}
\caption{Empirical cost vs.\ $(n,k)$ on \textbf{AIME 2024} with \textbf{Qwen2.5-Math-7B}.}
\label{tab:emp_cost_meicrl}
\begin{tabular}{@{}cc
  S[table-format=4.2]
  S[table-format=4.2]
  S[table-format=2.2]
@{}}
\toprule
\textbf{$n$} & \textbf{$k$} &
{\textbf{Wall Time (s/question)}} &
{\textbf{Tokens (k/question)}} &
{\textbf{Peak VRAM (GB)}} \\
\midrule
1 & 4  & 197.52 &  99.59 & 81.50 \\
1 & 8  & 228.89 & 112.25 & 81.43 \\
1 & 16 & 286.74 & 379.82 & 82.24 \\
1 & 32 & 468.36 & 852.47 & 82.27 \\
\midrule
3 & 4  & 406.65 &  247.76 & 81.43 \\
3 & 8  & 469.79 &  272.56 & 81.43 \\
3 & 16 & 715.85 & 1070.98 & 82.18 \\
3 & 32 & 1163.51 & 2111.85 & 82.30 \\
\midrule
5 & 4  & 609.75 &  372.18 & 81.42 \\
5 & 8  & 713.25 &  506.68 & 81.43 \\
5 & 16 & 1152.67 & 1613.49 & 82.24 \\
5 & 32 & 1726.70 & 3412.94 & 82.30 \\
\midrule
7 & 4  & 820.60 &  559.20 & 81.43 \\
7 & 8  & 1156.62 &  566.82 & 82.30 \\
7 & 16 & 1371.78 & 1753.71 & 82.27 \\
7 & 32 & 2643.74 & 5123.67 & 82.29 \\
\bottomrule
\end{tabular}
\end{table}

\section{Prompt Templates and Qualitative Examples}
\label{app:prompt}

\subsection{Dataset-Specific System Prompts}
These system prompts define the semantics of the reward tag and are placed once at the beginning of the context.

\begin{prompt}[title={System Prompt: AIME / AMC (Numeric Answer)}]
You are an AI mathematician. All content you output MUST be in English.

Below are compressed solution ideas from previous attempts; each idea is tagged with reward 1 (correct) or reward 0 (incorrect). Use the question and these ideas to deduce the correct numeric answer.

\textbf{Finish all your reasoning, then on a NEW line output exactly one number (the answer) and nothing else.}

Your final output MUST be in the format \texttt{boxed\{\textless number\textgreater\}}, where \texttt{\textless number\textgreater} is the final numeric answer only (no expressions, variables, or additional text). The content inside \texttt{boxed\{\textless number\textgreater\}} must be a decimal number, not a fraction or any other form.
\end{prompt}


\begin{prompt}[title={System Prompt: GPQA (Multiple Choice)}]
You are an AI mathematician. All content you output MUST be in English.

Below are compressed solution ideas from previous attempts; each idea is tagged with reward 1 (correct) or reward 0 (incorrect). Use the question and these ideas to deduce the correct choice.

\textbf{Finish all your reasoning, then on a NEW line output exactly one letter (the answer) and nothing else.}

Your final output MUST be in the format \texttt{boxed\{\textless letter\textgreater\}}, where \texttt{\textless letter\textgreater} is exactly one of A, B, C, D.
\end{prompt}

\begin{prompt}[title={System Prompt: MATH (Free-form Answer)}]
You are an AI mathematician. All content you output MUST be in English.

Below are compressed solution ideas from previous attempts; each idea is tagged with reward 1 (correct) or reward 0 (incorrect). Use the question and these ideas to deduce the correct answer.

\textbf{Finish all your reasoning, then on a NEW line output exactly one answer (the answer) and nothing else.}

Your final output MUST be in the format \texttt{boxed\{\textless answer\textgreater\}}.
\end{prompt}

It gives an explicit, task-level meaning to the reward tags (\(r\in\{0,1\}\)), telling the model to learn from high-reward ideas and discount low-reward ones—so the model can \emph{use} feedback without any gradient updates in test time.

\subsection{Summarization Prompts}
These prompts compress a full CoT into a short summary  that retains the high-level strategy.

\begin{prompt}[title={Summarization Prompt: AIME / AMC}]
Provide a concise summary of the reasoning in the answer below.
Do NOT add any introductory phrases or extra explanations.
Omit all numerical calculations.
The summary must be self-contained, no more than 100 tokens.

If there is a final numeric result, include it at the end in the format \texttt{boxed\{\textless number\textgreater\}} (decimal only, no fractions, variables, or extra text).
If there is no numeric answer, do not output \texttt{boxed\{\}}.

[Answer start]
\{\ldots\ raw model answer \ldots\}
[Answer end]

Summary:
\end{prompt}

\begin{prompt}[title={Summarization Prompt: GPQA}]
Provide a concise summary of the reasoning in the answer below.
Do NOT add any introductory phrases or extra explanations.
Omit all numerical calculations.
The summary must be self-contained, no more than 100 tokens.

If there is a final answer choice, include it at the end in the format \texttt{boxed\{\textless letter\textgreater\}} (must be exactly one of A, B, C, D, no extra text).
If there is no final answer, do not output \texttt{boxed\{\}}.

[Answer start]
\{\ldots\ raw model answer \ldots\}
[Answer end]

Summary:
\end{prompt}

\begin{prompt}[title={Summarization Prompt: MATH}]
Provide a concise summary of the reasoning in the answer below.
Do NOT add any introductory phrases or extra explanations.
Omit all calculations unless essential to understanding.
The summary must be self-contained, no more than 100 tokens.

If there is a final answer, include it at the end in the format \texttt{boxed\{\textless answer\textgreater\}}.

[Answer start]
\{\ldots\ raw model answer \ldots\}
[Answer end]

Summary:
\end{prompt}

 They replace long CoTs with short strategy summaries, keeping only decision-relevant logic while dropping arithmetic details, so the history fits in-context and remains useful across rounds.

\subsection{Qualitative Case Studies}

\colorlet{SysBlue}{blue!60!black}
\colorlet{QOrange}{Orange!85!black}
\colorlet{BadRed}{BrickRed}
\colorlet{GoodGreen}{ForestGreen}

\begin{prompt}[title={Total Prompt}]
{\color{SysBlue}
You are an AI mathematician. All content you output MUST be in English.

Below are compressed solution ideas from previous attempts; each idea is tagged with reward 1 (correct) or reward 0 (incorrect). Use the question and these ideas to deduce the correct numeric answer.

\textbf{Finish all your reasoning, then on a NEW line output exactly one number (the answer) and nothing else.}

Your final output MUST be in the format \texttt{boxed\{\textless number\textgreater\}}, where \texttt{\textless number\textgreater} is the final numeric answer only (no expressions, variables, or additional text). The content inside \texttt{boxed\{\textless number\textgreater\}} must be a decimal number, not a fraction or any other form.
}

{\color{QOrange}
Every morning Aya goes for a $9$-kilometer-long walk and stops at a coffee shop afterwards. When she walks at a constant speed of $s$ kilometers per hour, the walk takes her 4 hours, including $t$ minutes spent in the coffee shop. When she walks $s+2$ kilometers per hour, the walk takes her 2 hours and 24 minutes, including $t$ minutes spent in the coffee shop. Suppose Aya walks at $s+\frac{1}{2}$ kilometers per hour. Find the number of minutes the walk takes her, including the $t$ minutes spent in the coffee shop.}

{\color{BadRed}bad ideas (reward 0):

[0]- When Aya walks at \( s + \frac{1}{2} \) kilometers per hour, the total time taken, including the \( t \) minutes spent in the coffee shop, is approximately 348 minutes.

Thus, the answer is \(\boxed{348.0}\)}

{\color{GoodGreen}good ideas (reward 1):

[0]- To determine how long it would take Aya to walk from her house to the park and back, including the time spent in the coffee shop, we set up two equations based on the given information. By solving these equations, we found the speed \( s \) and the time \( t \) spent in the coffee shop. Then, we calculated the time required when Aya walks at \( s + \frac{1}{2} \) km/h, which turned out to be 204 minutes.

Thus, the answer is \(\boxed{204.0}\)

[1]- By analyzing the given conditions and using algebraic manipulation, we deduced that the value of \(x\) is \(\frac{3}{5}\), which when multiplied by 340 gives us the final answer of 204.

Thus, the answer is \(\boxed{204.0}\)

[2]- We solved the system of equations to find the walking speed \(s\) and the time \(t\) spent in the coffee shop. Then, we calculated the total time required when Aya walks at a speed of \(s + \frac{1}{2}\) km/h, including the time spent in the coffee shop. The final answer is \(\boxed{204.0}\) minutes.

Thus, the answer is \(\boxed{204.0}\)}
\end{prompt}
\textcolor{SysBlue}{\textbf{Blue}} marks the \emph{system instructions} that enforce output schema and reward semantics. 
\textcolor{QOrange}{\textbf{Orange}} marks the \emph{problem statement}. 
\textcolor{BadRed}{\textbf{Red}} marks \emph{reward-0 (incorrect) ideas} retained as counterexamples. 
\textcolor{GoodGreen}{\textbf{Green}} marks \emph{reward-1 (correct) ideas} that ICPO prioritizes during selection.

\paragraph{How ICPO appears in this example.}
ICPO samples multiple CoTs, summarizes them, and assigns rewards based on self-consistency. The \textcolor{GoodGreen}{green} items represent low-entropy agreement on the solution path that eliminates the fixed coffee time \(t\) and yields the evaluation at \(s+\tfrac{1}{2}\) as \(\boxed{204.0}\) minutes. Entropy-based filtering downweights the \textcolor{BadRed}{red} outlier (\(\approx 348\) minutes) that improperly scales total time and ignores the fixed offset. The final evaluator, conditioned on the \textcolor{SysBlue}{schema} and the curated ideas, outputs the correct numeric answer, \(\boxed{204.0}\).

\section*{Use of Large Language Models}
We used LLMs solely as a writing assistant for minor grammar and phrasing corrections during manuscript preparation. LLMs were not involved in research ideation, experiment design, data analysis, or result interpretation.
\end{document}